\documentclass[11pt]{article}
\pdfoutput=1

\usepackage{arxiv}
\usepackage{amsmath,amssymb,bm}
\usepackage{amsthm}
\usepackage{graphicx}
\graphicspath{{images/}{./}}   
\usepackage{float}
\usepackage{ragged2e}
\usepackage[nodisplayskipstretch]{setspace}
\usepackage{setspace}
\usepackage{titlesec}
\usepackage[utf8]{inputenc}
\usepackage[T1]{fontenc}
\usepackage{xcolor}
\usepackage{hyperref}
\usepackage{url}
\usepackage{booktabs}
\usepackage{amsfonts}
\usepackage{nicefrac}
\numberwithin{equation}{section} 
\usepackage{microtype}
\usepackage{graphicx}
\usepackage[authoryear,round]{natbib}
\usepackage{doi}
\usepackage{amsmath}
\usepackage[english]{babel}
\usepackage{amssymb}
\usepackage{txfonts}
\usepackage{mathdots}
\usepackage{algorithm}
\usepackage{algpseudocode}
\usepackage{algorithmicx}
\usepackage{varwidth}
\usepackage{mathptmx}
\usepackage{caption}
\usepackage{geometry}
\usepackage{placeins}

\usepackage{subcaption}
\usepackage{float}
\usepackage{bm}

\usepackage{tikz}
\usetikzlibrary{shapes.geometric,arrows.meta,positioning,fit,backgrounds,calc,decorations.pathmorphing}
\tikzset{
  startstop/.style={rectangle,rounded corners,draw=black,fill=blue!8,
    minimum height=0.7cm,minimum width=1.4cm,align=center,inner sep=3pt},
  proc/.style={rectangle,draw=black,fill=gray!6,
    minimum height=0.7cm,minimum width=1.4cm,align=center,inner sep=3pt},
  io/.style={trapezium,trapezium left angle=70,trapezium right angle=110,
    draw=black,fill=green!8,minimum height=0.7cm,align=center,inner sep=3pt},
  dec/.style={diamond,aspect=2,draw=black,fill=orange!12,
    align=center,inner sep=1pt},
  arrow/.style={-{Latex[length=2mm]},thick},
}
\usepackage{stmaryrd}          
\providecommand{\dd}{\mathrm{d}}
\definecolor{RefBlue}{RGB}{0,84,166}     
\definecolor{EqRed}{RGB}{176,26,26}      
\definecolor{ThmGreen}{RGB}{0,112,60}    
\hypersetup{
  colorlinks=true,
  citecolor=RefBlue,
  urlcolor=RefBlue,
  filecolor=RefBlue,
  linkcolor=RefBlue,
  linktocpage=false,
  breaklinks=true,
  bookmarksnumbered=true,
  pdftitle={Autoregressive Physics-Informed Neural-Operator Surrogates and Joint
    4-D Seismic-Production Bayesian History Matching}
}
\DeclareRobustCommand{\eqr}[1]{{\hypersetup{linkcolor=EqRed}\eqref{#1}}}

\DeclareRobustCommand{\thmr}[1]{{\hypersetup{linkcolor=ThmGreen}\ref{#1}}}

\usepackage{amsmath,amssymb}
\newcommand{\norm}[1]{\left\lVert #1 \right\rVert}
\usepackage{amsmath,amssymb}
\newcommand{\E}{\mathbb{E}}

\usepackage[nodisplayskipstretch]{setspace}
\usepackage{setspace}
\usepackage{ragged2e}
\usepackage{titlesec}

\newtheorem{theorem}{Theorem}[section]

\newtheorem{proposition}[theorem]{Proposition}
\newtheorem{remark}[theorem]{Remark}

\titlespacing{\section}{0.5pt}{\parskip}{-\parskip}
\titlespacing{\subsection}{0.5pt}{\parskip}{-\parskip}
\titlespacing{\subsubsection}{0.5pt}{\parskip}{-\parskip}

\newcommand{\cA}{\mathcal{A}}
\newcommand{\cN}{\mathcal{N}}
\newcommand{\cL}{\mathcal{L}}
\newcommand{\cO}{\mathcal{O}}

\newcommand{\cB}{\mathcal{B}}
\newcommand{\cF}{\mathcal{F}}
\newcommand{\cU}{\mathcal{U}}

\newcommand{\cK}{\mathcal{K}}
\newcommand{\cG}{\mathcal{G}}
\newcommand{\cX}{\mathcal{X}}
\newcommand{\cD}{\mathcal{D}}
\newcommand{\cW}{\mathcal{W}}
\newcommand{\cM}{\mathcal{M}}
\newcommand{\cH}{\mathcal{H}}
\newcommand{\cR}{\mathcal{R}}
\newcommand{\cT}{\mathcal{T}}

\newcommand{\R} {\mathbb{R}}

\newcommand{\cC}{\mathcal{C}}

\newcommand{\grad}{\nabla}

\titlespacing{\section}{0.5pt}{\parskip}{-\parskip}
\titlespacing{\subsection}{0.5pt}{\parskip}{-\parskip}
\titlespacing{\subsubsection}{0.5pt}{\parskip}{-\parskip}

\title{\vspace{1.0cm}%
Autoregressive Physics-Informed Neural-Operator Surrogates and
Joint 4-D Seismic--Production Bayesian History Matching}

\author{%
  Clement Etienam$^{1,}$\thanks{%
    $^{1}$NVIDIA Corporation.\quad
    $^{2}$Shell Information Technology International B.V.\quad
    $^{3}$Shell India Markets Private Limited.\\
    $^{4}$Shell Information Technology International Inc.\quad
    $^{5}$Shell Global Solutions International B.V.\\
    $^{6}$Shell International Exploration and Production Inc.\quad
    $^{*}$\,\textbf{Corresponding author.}}
  \And David Blom$^{2}$
  \And Hari Pattathil$^{3}$
  \And Tsubasa Onishi$^{1}$
  \And Harpreet Sethi$^{1}$
  \And Lalson Vincent$^{3}$
  \And Suraj Pawar$^{4}$
  \And Kishan Ramesh Kumar$^{5}$
  \And Pandu Devarakota$^{4}$
  \And Faruk O. Alpak$^{6}$
}

\date{}
\renewcommand{\headeright}{}
\renewcommand{\undertitle}{}
\newtheorem{assumption}{Assumption}[section]
\begin{document}
\maketitle

\begin{abstract}
We present a framework for surrogate-accelerated Bayesian history matching and
demonstrate it on a realistic synthetic model, here referred to as the
\textbf{Tau reservoir} ($46\!\times\!99\!\times\!22$, $N=100{,}188$ active cells;
two producers and one water injector). The forward operator
is replaced by a multi-head Fourier Neural Operator trained in
\emph{autoregressive} (closed-loop) mode with a discrete physics residual
(PINO), $K$-step truncated backpropagation through time ($K{=}10$, $T{=}20$),
and Gaussian covariate-shift injection, distributed over four GB200 GPUs by
PyTorch DistributedDataParallel (DDP) with NVIDIA Collective Communications
Library (NCCL) all-reduce. The inverse engine
assimilates production data \emph{and} 4-D seismic through an adaptive
regularised Ensemble Kalman Inversion (aREKI), with the seismic observable
synthesised inside the surrogate rollout by a closed-form, GPU-batched Gassmann
petro-elastic transform and compressed as a raw
$|\Delta V_p|$ map or a Parseval-exact truncated 2-D discrete cosine transform
(DCT). The petrophysical properties $(\log K,\phi)$ may
be inverted in grid space or in a reduced subspace via DCT or a variational
convolutional auto-encoder (VCAE), with optional Furrer covariance
localisation. A two-stage schedule (production-data aREKI, inter-phase
inflation $\gamma{=}1.01$, then a joint production- and seismic-data ensemble
smoother with multiple data assimilation, ES-MDA) is selected by a
dimensionless two-modality score. We develop the full theory: the Bayesian
posterior as a Radon--Nikodym density on a Borel $\sigma$-algebra over a
function space; the autoregressive exposure-bias bound and its PINO-induced
contraction; the discretised black-oil system residual underlying the physics
loss; the Peaceman completion model; the petro-elastic forward map; and an
elementary Löwner-order argument for why, in the linearised Gaussian regime,
joint seismic--production assimilation is expected to be at least as
informative as production-only matching, with the effect demonstrated
empirically ($s_{\mathrm{P2}}{=}0.876$ vs.\ $s_{\mathrm{P1}}{=}2.556$). On a held-out test
set the surrogate attains $R^2>0.999$ on the pressure and saturation fields and
$R^2\!\gtrsim\!0.9999$ on WOPR, and rolls out in $0.579$~s, a $\sim\!2500\times$
per-call speed-up (single digits end-to-end once the training and reference-run
budget is included, \S\ref{sec:results}) that brings a 190-member,
10-iteration joint inversion within one GPU-hour.
\end{abstract}

\keywords{Physics-Informed Neural Operators \and Fourier Neural Operators
\and Autoregressive Surrogates \and Truncated BPTT \and 4-D Seismic
\and Gassmann Fluid Substitution \and Ensemble Kalman Inversion \and aREKI
\and ES-MDA \and DCT/VCAE Parametrisation \and Covariance Localisation
\and Distributed Data Parallel \and Bayesian Inverse Problems}

\newpage

\section{Introduction}
\label{sec:intro}

\subsection{Engineering application}
\label{sec:application}
Reservoir management rests on a recurring computation: conditioning a
high-dimensional reservoir model on observed data so that forecasts of pressure,
saturation and well rates carry calibrated uncertainty. Mathematically this is the
\emph{inverse problem} of inferring the spatial distribution of the petrophysical
properties, absolute permeability $K(x)$ and porosity $\phi(x)$, on a
three-dimensional reservoir domain $\Omega$, from sparse, noisy and indirect
measurements of the flow they induce \citep{oliver2008inverse}.

We study this problem on a realistic synthetic reservoir model, referred to in
this paper as the Tau reservoir model (see Reservoir~A in
\citet{alpak2023lbfgs}). The Tau model represents a fault-dissected clastic
reservoir stratigraphy; it is produced by two wells, \texttt{Tau\_1} and
\texttt{Tau\_2}, supported by one water injector, \texttt{Inj1}, and it is
discretised on its native model grid of $46\times99\times22$ cells
($N=100{,}188$ active) over $T=20$ report steps. The unknown therefore has
dimension $2N\approx2\times10^5$, whereas the production record has dimension
$\mathcal{O}(10^2)$. The problem is severely under-determined, and a Bayesian
treatment, which delivers a posterior \emph{distribution} rather than a single
calibrated model, is the appropriate one.

Two costs have historically made rigorous Bayesian history matching intractable
at this scale. The first is the \emph{forward cost}: each evaluation of the
parameter-to-data map $\cG$ requires a fully implicit finite-volume (FV) solve of
the coupled black-oil equations, taking $\mathcal{O}(4\text{--}5)$ minutes on the
native model grid, and an ensemble inversion needs
$N_e\times N_{\mathrm{iter}}\approx1900$ such solves. The second is the
\emph{information cost}: production data sampled at three wells cannot, even
without noise, resolve the inter-well permeability connectivity that governs
sweep and breakthrough. The work described in this paper attacks both types of
costs. The FV operator is replaced by an \emph{autoregressive physics-informed
neural operator} (PINO) surrogate that evaluates in $0.579$~s, and the
observation is enriched with a spatially distributed \emph{4-D seismic} channel
synthesised inside the surrogate by a petro-elastic transform.

\subsection{Related work}
\label{sec:related}

The contribution of this paper sits at the confluence of four research
threads that have, until recently, been pursued largely in isolation:
(i) operator learning and physics-informed surrogates for parametric PDEs;
(ii) the stabilisation of \emph{autoregressive} learned simulators against
exposure bias; (iii) the function-space Bayesian theory of inverse problems and
its ensemble realisations; and (iv) the use of 4-D (time-lapse)
seismic data for reservoir characterisation, forecasting and development
optimisation. We review each in turn, with an emphasis
on the most recent literature, and use the review to motivate the specific
design choices of our pipeline.

\subsubsection{From response surfaces to neural operators}
\label{sec:rw-operators}
Classical surrogate reservoir modelling replaced the full-physics simulator
with cheap statistical emulators: polynomial and radial-basis response
surfaces, Kriging/Gaussian-process proxies, and reduced-order models built by
proper orthogonal decomposition or trajectory piecewise-linearisation
\citep{oliver2008inverse}. These emulators are accurate only near the design
ensemble on which they are fitted, degrade combinatorially with parameter
dimension, and are tied to a fixed grid. The deep-learning era first replaced
them with convolutional encoder--decoders and physics-informed neural networks
(PINNs) \citep{raissi2019physics}, which learn a \emph{single} solution field
but remain resolution-locked: a network trained at one discretisation cannot be
queried at another without retraining.

The decisive conceptual shift was \emph{operator learning}: instead of
approximating a function $u:\Omega\to\R$, one approximates the solution
\emph{operator} $\cG^\dagger:a\mapsto u$ between infinite-dimensional Banach
spaces of coefficients and states. DeepONet \citep{lu2021deeponet} realised
this through a branch--trunk factorisation with a universal-approximation
guarantee for operators, while the Fourier Neural Operator (FNO)
\citep{li2020fourier,kovachki2021neural} parametrised the operator by global
spectral convolutions, achieving discretisation invariance and quasi-linear
$\cO(N\log N)$ cost. The approximation theory has since matured: quantitative
error bounds for DeepONets and FNOs \citep{lanthaler2022error,
kovachki2021neural} establish algebraic convergence in the spectral-mode count and
parameter budget for sufficiently regular operators, a result we invoke in
Theorem~\thmr{thm:approx}. A rapidly growing family of variants extends the
construction to complex geometries and multi-scale physics, Geo-FNO for
deformed domains \citep{li2023geofno}, the U-shaped UNO
\citep{rahman2023uno}, factorised and tensorised FNOs for three-dimensional
fields \citep{tran2023factorized}, wavelet neural operators
\citep{tripura2023wavelet}, and transformer-based operators
\citep{cao2021choose,hao2023gnot}, and operator learning has been surveyed
comprehensively for scientific machine learning
\citep{azizzadenesheli2024neuralop}. In the subsurface domain specifically,
FNO and its physics-informed extensions have been applied to two-phase flow,
$\text{CO}_2$ storage and multiphase reservoir dynamics
\citep{wen2022ufno,wen2023nested,zhang2022fourier,chandra2025co2}, consistently
reporting two-to-three orders of magnitude speed-up over full-physics solvers;
in particular, \citet{chandra2025co2} demonstrate FNO surrogates for
$\text{CO}_2$ plume migration on realistic geologies at $\mathcal{O}(10^5)$
speed-up. Our
composite eleven-head FNO (\S\ref{sec:forward}) is a member of this family,
specialised to the coupled black-oil state and the multi-layer Peaceman well
observables.

\paragraph{Physics-informed operators.}
Pure data-driven operators can violate conservation over long horizons. The
physics-informed neural operator (PINO) \citep{li2021physics} augments the data
loss with a PDE residual evaluated on the operator's own output, combining the
generalisation of operator learning with the constraint satisfaction of PINNs.
When the residual is taken in its \emph{discretised}
(finite-volume/finite-difference) form, the penalty costs one $\cO(N)$ assembly
per step and requires no symbolic differentiation of the PDE, the economy we
exploit in \eqr{eq:pino} by reusing the residual of the simulator's own
discretised black-oil system.

\paragraph{The operator-learning problem, formally.}
Stating the target precisely frames the approximation theory invoked later.
Given a probability measure $\mu$ on the input space
$\cA$, operator learning seeks parameters $\theta$ minimising the population risk
\begin{equation}
\cR(\theta)=\E_{a\sim\mu}\,\norm{\cG_\theta(a)-\cG^\dagger(a)}_{\cU}^2,
\label{eq:oprisk}
\end{equation}
approximated in practice by its empirical average over the training ensemble.
The universal-approximation theorems for DeepONet \citep{lu2021deeponet} and for
neural operators \citep{kovachki2021neural} guarantee that for every
$\varepsilon>0$ and every compact $\cK\subset\cA$ there exists a network with
$\sup_{a\in\cK}\norm{\cG_\theta(a)-\cG^\dagger(a)}_{\cU}<\varepsilon$; the
quantitative refinements \citep{lanthaler2022error} sharpen this into algebraic
rates in the spectral-mode count and parameter budget (our
Theorem~\thmr{thm:approx}), at the price of a curse-of-dimensionality constant
that physics-informed training tempers by injecting the PDE as an inductive bias
\citep{wang2021theory}. In the subsurface setting these ideas were carried into
data assimilation by \citet{tang2020deep}, whose deep convolutional surrogate
conditions the dynamic flow response on the geological realisation for
ensemble-based inversion; the most recent work even pretrains a single operator
across multiple governing equations \citep{mccabe2023multiple}. Our surrogate
departs from all of these in being \emph{autoregressive and physics-constrained
end-to-end through the rollout}, a $T$-step map rather than a one-shot
regressor, which is exactly the object the ensemble inversion of
\S\ref{sec:inverse} evaluates $N_e\!\times\!N_a$ times.

\subsubsection{Autoregressive surrogates and exposure bias}
\label{sec:rw-autoreg}
A surrogate for a time-dependent PDE must ultimately be rolled out in closed
loop: its own prediction at step $n$ becomes the input at step $n{+}1$. Trained
by one-step \emph{teacher forcing}, the operator only ever sees ground-truth
inputs, so at deployment it is queried off the training distribution, the
\emph{exposure bias} or covariate-shift problem long known in sequence modelling
\citep{bengio2015scheduled,ranzato2016sequence} and imitation learning, where
DAgger-style dataset aggregation \citep{ross2011dagger,venkatraman2015improving}
corrects the train/test mismatch by training on states the policy actually
visits. In learned physics simulation the same pathology manifests as error
accumulation and eventual blow-up of long rollouts, and a range of remedies has
emerged: the noise-injection training of graph-network simulators
\citep{sanchez2020learning}, the message-passing PDE solvers of
\citet{brandstetter2022message} with their explicit push-forward trick,
temporal-bundling and the Markov neural operator with dissipativity constraints
\citep{li2022markov}, and stability analyses of autoregressive neural emulators
in weather and climate \citep{pathak2022fourcastnet,keisler2022forecasting}.
The training of recurrent maps by truncated backpropagation through time (TBPTT)
and its bias/variance trade-off are classical \citep{williams1990efficient,
pascanu2013difficulty}, with unbiasing schemes analysed by
\citet{tallec2017unbiased}; we use these results to certify the near-unbiasedness
of our $K$-step truncation in Proposition~\thmr{prop:tbptt}.

\paragraph{Controlling the long-horizon Jacobian.}
The stabilisation of long rollouts has since become a research theme in its own
right, and the proposed remedies are unified by a single principle: control the
one-step Jacobian norm $\E\norm{D_x\cG_\theta}$ that governs the geometric error
amplification of Proposition~\thmr{prop:rollout}. Mesh-based graph-network
simulators \citep{pfaff2021learning} and learned coarse models
\citep{stachenfeld2022learned} inject training noise so the network learns to
contract its own perturbations back toward the data manifold;
\citet{lippe2023pderefiner} instead treat each step as a short denoising
diffusion (PDE-Refiner), explicitly suppressing the high-wavenumber error that
seeds blow-up; and dissipativity-constrained Markov operators
\citep{li2022markov} bake a contractive attractor into the architecture. Our
combination of closed-loop (autoregressive) training, covariate-shift noise
injection~\eqr{eq:noise}, and a discrete physics residual targets the same
Jacobian (Proposition~\thmr{prop:arstab}), with the additional guarantee,
absent from the purely data-driven schemes, that the PINO penalty keeps the
operator on the discrete conservation manifold $\{R(x,\cdot)=0\}$
(Theorem~\thmr{thm:contract}). To the extent that the penalty is driven towards
zero, the resulting contraction is inherited from the dissipative (monotone,
upwind) finite-volume discretisation rather than being a purely learned
property; the argument is the proof of Theorem~\thmr{thm:contract}, which rests on
the M-matrix structure of the two-point-flux pressure update and on the
diffusivity hierarchy of \S\ref{sec:pvt}. Because the penalty is not driven
exactly to zero in practice, this is a motivating rather than a certified
property of the trained operator (\S\ref{sec:limitations}).

Our own prior work developed this thread for reservoir simulation: the
cluster-classify-regress construction for learning discontinuous maps
\citep{bernholdt2019ccr} and its deep mixture-of-experts form
\citep{etienam2024gpexperts}, the coupled mixture-of-experts / PINO
history-matching forward model on the Norne field
\citep{etienam2024norne,etienam2024ecmor}, and most recently the
\emph{sequential} (autoregressive) PINO surrogate
\citep{etienam2025seqpino,etienam2026seqecmor}, which trains the operator in closed loop with a
discrete physics residual and demonstrates stable long-horizon rollouts for
Norne history matching. The present paper builds directly on
\citep{etienam2025seqpino,etienam2026seqecmor}: we retain its autoregressive-PINO training
principle, rollout with truncated BPTT, discrete black-oil residual, and
covariate-shift noise injection, and extend it in three directions: a
multi-head operator that also emits the Peaceman well responses, a
petro-elastic 4-D seismic channel synthesised inside the rollout, and a
joint two-stage seismic--production inversion, all demonstrated on the
Tau reservoir model. Sections~\ref{sec:forward}--\ref{sec:inverse}
make this inheritance precise, referencing the sequential-PINO construction at
each step.

\subsubsection{Bayesian inverse problems and ensemble data assimilation}
\label{sec:rw-bayes}
History matching is a Bayesian inverse problem: recover high-dimensional
petrophysical properties from sparse, noisy, indirect data. The mathematically
rigorous formulation is the \emph{function-space} Bayesian framework of
\citet{stuart2010inverse} and \citet{dashti2017bayesian}, in which the posterior
is defined as a measure absolutely continuous with respect to a Gaussian prior,
with a Radon--Nikodym density given by the data-misfit potential, and is shown
to be well-posed (Lipschitz in the Hellinger metric) under mild continuity and
integrability assumptions. This is the framework we adopt and expand in
\S\ref{sec:bayes}, following Stuart's development closely because it is the only
one that survives the mesh-refinement limit $d\to\infty$ intrinsic to a
function-valued unknown. The requisite Gaussian-measure theory, Cameron--Martin
spaces, absolute continuity, and the Feldman--H\'ajek dichotomy, is standard
\citep{bogachev1998gaussian,daprato2014stochastic}.

Because sampling the posterior in $d\!\sim\!10^5$ dimensions by MCMC is
prohibitive even with dimension-robust samplers such as preconditioned Crank--
Nicolson \citep{cotter2013mcmc}, practitioners use ensemble Kalman methods:
ensemble Kalman inversion (EKI) \citep{iglesias2013ensemble}, its
regularising and adaptive variants \citep{iglesias2016regularising,
iglesias2021adaptive}, and the
ensemble smoother with multiple data assimilation (ES-MDA)
\citep{emerick2013esmda}. Their mathematical foundation, the continuous-time
mean-field limit, the preconditioned-gradient-flow structure, and the
subspace property, was established by \citet{schillings2017convergence,
schillings2018convergence} and extended to a Tikhonov--Kalman variant with
provable regularisation \citep{chada2020tikhonov}; the deterministic
square-root formulations and their well-known ensemble-collapse behaviour are
analysed by \citet{blomker2019strongly}. These methods are derivative-free,
embarrassingly parallel, and require only ensemble forward evaluations, a
natural match for a fast surrogate. We use the adaptive regularised EKI
(aREKI) of \citet{iglesias2021adaptive}, which we have previously applied to
surrogate-accelerated history matching \citep{etienam2024modulus,etienam2024norne}, in the
production-data inversion phase and ES-MDA in the joint production- and
seismic-data inversion phase, and \S\ref{sec:inverse} makes
the reason for switching algorithms precise.

\paragraph{Iterative smoothers, transport, and the Gaussian ansatz.}
Two lineages sharpen this picture. Iterative ensemble smoothers
\citep{emerick2013esmda,evensen2018analysis} recast history matching as a
sequence of tempered Gauss--Newton steps, the view underlying our ES-MDA phase
and its exact-tempering identity~\eqr{eq:esmdaexact}; they inherit the
robustness of a gradually annealed likelihood while remaining derivative-free.
A complementary, fully non-Gaussian lineage replaces the Kalman/Gaussian ansatz
by \emph{measure transport} \citep{marzouk2016transport,reich2015probabilistic},
seeking a deterministic map $T_\sharp\mu_0=\mu^y$ that pushes the prior forward
onto the posterior. Ensemble Kalman methods are exactly the affine special case
of such a transport, and the ensemble nonlinearity (Jensen) bias
$\delta_{\mathrm{NL}}$ we isolate in~\eqr{eq:jensen} is precisely the error
incurred by that affine restriction. We adopt the Kalman route here for its
derivative-free scalability to $N_e{=}190$ full-state rollouts, and identify the
transport/score-based generalisation (\S\ref{sec:discussion}) as the principled
way to remove the residual Gaussianity assumption without abandoning the
surrogate.

\paragraph{Learned parametrisations and generative priors.}
A finite ensemble of $N_e\ll d$ members yields a rank-deficient sample
covariance, and a purely Gaussian prior cannot represent complex geological
structure, for example connected high-permeability geobodies or sharp facies
contacts.
Two complementary remedies appear in the literature: covariance \emph{localisation}, which tapers spurious long-range
correlations \citep{gaspari1999,furrer2007}; and dimension reduction into a
geologically meaningful latent space. The latter has moved from linear
truncated bases (DCT, PCA/KL) to deep generative priors, variational
auto-encoders and GANs that learn the manifold of plausible geologies
\citep{laloy2017vae,laloy2018gan,mosser2020stochastic,canchumuni2019esmda},
so that the Kalman update in latent coordinates corresponds to a strongly
non-Gaussian, prior-consistent move in grid space. Our DCT and variational
convolutional auto-encoder (VCAE) charts (\S\ref{sec:param}) instantiate both
ends of this spectrum. Most recently, score-based diffusion models have been
proposed as expressive priors and even as full posterior samplers for inverse
problems \citep{song2021sde,chung2023dps,rozet2023scoreda,baldassari2024conditional},
a direction we return to as future work in \S\ref{sec:discussion}.

\subsubsection{4-D seismic history matching}
\label{sec:rw-seismic}
Production data sampled at a handful of wells cannot resolve inter-well
connectivity. Time-lapse (4-D) seismic can image the spatial change in elastic
properties caused by production under favourable conditions of rock stiffness,
fluid contrast and survey repeatability, and can then supply the areal
information that fills this null space \citep{landro2001discrimination,
lumley2001timelapse}. Its value is nonetheless case dependent: time-lapse
signals can be weak, ambiguous, or in apparent conflict with production data.
Assimilating seismic requires a petro-elastic forward model mapping reservoir state $(P,S)$ to seismic attributes: Gassmann
fluid substitution \citep{gassmann1951,smith2003gassmann} for the saturated
moduli, Batzle--Wang correlations \citep{batzle1992} for the fluid properties, and a
pressure-sensitivity law for the dry frame \citep{macbeth2004}. A substantial
literature couples such petro-elastic models to ensemble assimilation of
seismic-derived attributes, impedance, saturation/pressure changes, or raw
amplitude maps \citep{emerick2016review,fahimuddin2010,trani2013seismic,
luo2018seismic}, and consistently reports improved inter-well resolution over
production-only matching. What distinguishes the present work is that the
petro-elastic map is implemented as a closed-form, GPU-batched
transform of the surrogate's \emph{own} predicted state, so the seismic
observable is synthesised inside the autoregressive rollout at negligible
marginal cost, and the expected information gain is motivated by a linearised
Fisher-information (L\"owner-order) argument (\S\ref{sec:whyjoint}).

\subsection{Research gap and approach}
\label{sec:rw-position}
Learned surrogates and time-lapse seismic have already been brought together in
several settings. \citet{etienam2019seismic} combined 4-D seismic history matching
with unsupervised learning; \citet{millevoi2025seismic} developed a deep-learning
surrogate for seismic data assimilation in fault-activation modelling; and
\citet{wang2024co2} built a deep-learning history-matching framework for
CO$_2$ storage that assimilates 4-D seismic and monitoring-well data within
ES-MDA. Ensemble assimilation of seismic-derived attributes with full-physics
simulators is well established
\citep{emerick2016review,fahimuddin2010,trani2013seismic,luo2018seismic}, as are
deep-learning surrogates for subsurface data assimilation \citep{tang2020deep}.
We therefore claim no priority for combining surrogates with seismic data. What
we contribute is a combination that, to our knowledge, has not been assembled in
a single framework for a three-phase hydrocarbon reservoir with a long production
history: (i) a surrogate that remains stable when rolled out autoregressively over
the entire history; (ii) a petro-elastic channel evaluated from the surrogate's
\emph{own} rolled-out state, so that production and seismic are two consistent
observables of one forward model; (iii) well responses predicted by the same
operator through the Peaceman completion model; and (iv) an inversion whose
ensemble updates are traceable to a well-posed function-space posterior. The
framework builds on our sequential-PINO training \citep{etienam2025seqpino,etienam2026seqecmor} and
on our coupled-surrogate history-matching studies
\citep{etienam2024norne,etienam2024ecmor}, and it rests on two ideas that are
usually studied separately.

\paragraph{Autoregressive operator learning.}
The forward map is a discrete-time dynamical system
$x_{n+1}=\cF(x_n;u,c_n,\tau_n)$ on the state $x_n=(p_n,S_{w,n},S_{o,n})$. A
surrogate trained by one-step \emph{teacher forcing} sees only true states at its
input; deployed in closed loop it is driven by its own (erroneous) predictions,
whose distribution drifts away from the training distribution. This
\emph{covariate shift} (exposure bias) is the dominant failure mode of learned
simulators. We train the operator the way it will be used, feeding back its own
output through all $T$ steps, and we make the training stable with $K$-step
truncated backpropagation through time, a discrete physics residual, and noise
injection. Section~\ref{sec:forward} develops the theory and the closed-loop
training procedure that realises it.

\paragraph{Joint multi-physics Bayesian inversion.}
Because the surrogate rolls out the full three-dimensional state, the predicted
pressure and oil-saturation cubes can be pushed through a closed-form Gassmann
fluid-substitution map \citep{gassmann1951,smith2003gassmann,mavko2009rock} to a
4-D seismic velocity-change field $\Delta V_p(x,y)$. Seismic thereby becomes a
\emph{first-class observable} on the same footing as production, and the two are
assimilated jointly by ensemble Kalman inversion in a stacked Mahalanobis cost
\citep{mahalanobis1936,oliver2008inverse}.
Sections~\ref{sec:bayes}--\ref{sec:inverse} build the inversion from its
measure-theoretic foundations up to the practical two-stage schedule.

Figure~\ref{fig:cartoon} summarises the resulting workflow, from the reservoir
model and its full-physics ensemble, through the autoregressive surrogate and the
petro-elastic seismic channel, to the Bayesian inversion that returns a posterior
reservoir model. The full space--time evolution of the reservoir state, and the
surrogate's fidelity to the finite-volume reference across it, are shown in
Figs.~\ref{fig:dyn}--\ref{fig:evo1}; the software orchestration is described in
\S\ref{sec:hpc}.

\begin{figure}[t]\centering
\resizebox{\textwidth}{!}{%
\begin{tikzpicture}[font=\sffamily,>=Latex,line join=round]
\begin{scope}[shift={(0,0)}]
  \draw[thick,fill=orange!18] (0,0) rectangle (3.1,2.1);
  \foreach \yy in {0.42,0.84,1.26,1.68} \draw[gray!55] (0,\yy)--(3.1,\yy);
  \draw[red!70,very thick] (1.7,0)--(2.05,2.1);
  \draw[thick] (0.65,2.55)--(0.65,0.15); \fill (0.65,0.15) circle(1.6pt);
  \draw[thick] (2.55,2.55)--(2.55,0.15); \fill (2.55,0.15) circle(1.6pt);
  \draw[very thick,blue!70] (1.25,2.55)--(1.25,0.15);
  \node[font=\scriptsize] at (0.65,2.78){Tau\_1};
  \node[font=\scriptsize] at (2.55,2.78){Tau\_2};
  \node[font=\scriptsize,blue!70] at (1.25,2.80){Inj1};
  \node[align=center,font=\bfseries\small] at (1.55,-0.5){Reservoir model};
  \node[align=center,font=\scriptsize] at (1.55,-1.0){synthetic, fault-dissected\\$(\log K,\phi)$ \emph{unknown}};
\end{scope}
\begin{scope}[shift={(4.9,0)}]
  \foreach \xx in {0,...,4} \draw[gray!50] (\xx*0.62,0)--(\xx*0.62,2.1);
  \foreach \yy in {0,...,3} \draw[gray!50] (0,\yy*0.7)--(2.48,\yy*0.7);
  \draw[thick] (0,0) rectangle (2.48,2.1);
  \node[align=center,font=\scriptsize] at (1.24,1.05){$\partial_t\!\big(\tfrac{\varphi S_\alpha}{B_\alpha}\big)$\\[1pt]$=\nabla\!\cdot(T_\alpha\nabla P)+Q_\alpha$};
  \node[align=center,font=\bfseries\small] at (1.24,-0.5){Full-physics FV};
  \node[align=center,font=\scriptsize] at (1.24,-1.0){$200$ simulator runs\\$\sim$5\,min each};
\end{scope}
\begin{scope}[shift={(9.3,0)}]
  \draw[thick,fill=blue!8,rounded corners] (-0.1,-0.05) rectangle (2.75,2.15);
  \foreach \i in {0,1,2} \fill[blue!60] (0.4,0.45+\i*0.6) circle(2.3pt);
  \foreach \i in {0,1,2,3} \fill[blue!45] (1.35,0.3+\i*0.5) circle(2.3pt);
  \foreach \i in {0,1,2} \fill[blue!60] (2.3,0.45+\i*0.6) circle(2.3pt);
  \foreach \a in {0,1,2}\foreach \b in {0,1,2,3} \draw[blue!22](0.4,0.45+\a*0.6)--(1.35,0.3+\b*0.5);
  \foreach \a in {0,1,2,3}\foreach \b in {0,1,2} \draw[blue!22](1.35,0.3+\a*0.5)--(2.3,0.45+\b*0.6);
  \draw[->,thick,red!70] (2.3,2.02) to[out=95,in=85,looseness=1.7] (0.4,2.02);
  \node[font=\scriptsize,red!70] at (1.35,2.5){$x_n\!\to\!x_{n+1}$};
  \node[align=center,font=\bfseries\small] at (1.35,-0.5){AR-PINO (FNO)};
  \node[align=center,font=\scriptsize] at (1.35,-1.0){closed-loop, $K$-TBPTT\\$+$ physics residual};
\end{scope}
\begin{scope}[shift={(13.6,0)}]
  \draw[thick,fill=green!8,rounded corners](0,0) rectangle (2.5,2.1);
  \draw[green!55!black,thick] plot[domain=0.22:2.28,samples=48]({\x},{1.58+0.12*sin(\x*540)});
  \draw[green!55!black,thick] plot[domain=0.22:2.28,samples=48]({\x},{1.0+0.17*sin(\x*540)});
  \draw[green!55!black,thick] plot[domain=0.22:2.28,samples=48]({\x},{0.42+0.10*sin(\x*540)});
  \node[align=center,font=\bfseries\small] at (1.25,-0.5){4-D seismic};
  \node[align=center,font=\scriptsize] at (1.25,-1.0){Gassmann $\Delta V_p$\\(closed form)};
\end{scope}
\begin{scope}[shift={(17.6,0)}]
  \draw[thick,fill=violet!7,rounded corners](0,0) rectangle (2.75,2.1);
  \foreach \o in {-0.55,-0.28,0.28,0.55}
    \draw[gray!55] plot[smooth,domain=0.22:2.55,samples=22]({\x},{1.05+\o*(1-\x/2.75)});
  \draw[violet!80,very thick] plot[domain=0.22:2.55,samples=6]({\x},{1.05});
  \node[font=\scriptsize,gray!55!black] at (0.55,1.95){prior};
  \node[font=\scriptsize,violet!80] at (2.25,0.78){post.};
  \node[align=center,font=\bfseries\small] at (1.38,-0.5){Bayesian inversion};
  \node[align=center,font=\scriptsize] at (1.38,-1.0){aREKI $\to$ ES-MDA\\posterior $\mu^y$};
\end{scope}
\draw[->,very thick] (3.2,1.05)--(4.85,1.05) node[midway,above,font=\scriptsize]{simulate};
\draw[->,very thick] (7.45,1.05)--(9.15,1.05) node[midway,above,font=\scriptsize]{learn};
\draw[->,very thick] (12.1,1.05)--(13.55,1.05) node[midway,above,font=\scriptsize]{$g_s$};
\draw[->,very thick] (16.15,1.05)--(17.55,1.05) node[midway,above,font=\scriptsize]{assimilate};
\draw[->,thick] (10.65,-0.1) to[out=-90,in=-90,looseness=0.5] (18.9,-0.1);
\node[font=\scriptsize] at (14.8,-1.55){observed production data $g_p$ (Peaceman heads)};
\draw[->,thick,violet!70,dashed] (18.95,2.2) to[out=90,in=90,looseness=0.32] (1.55,2.95);
\node[font=\scriptsize,violet!70] at (10.2,3.75){posterior $(\log K,\phi)$ updates the reservoir model};
\end{tikzpicture}}
\caption{Cartoon of the end-to-end workflow. A synthetic, fault-dissected
reservoir model with unknown petrophysical properties $(\log K,\phi)$ (left) is
sampled by an ensemble of full-physics
finite-volume runs; these train an \emph{autoregressive} PINO--FNO surrogate that
rolls out the black-oil state in closed loop; the surrogate's own state is pushed
through a closed-form Gassmann map to a 4-D seismic observable and through the
Peaceman heads to production observables; and a two-stage ensemble Bayesian
inversion (aREKI\,$\to$\,ES-MDA) assimilates both to return a posterior
reservoir model (dashed feedback). Each block corresponds to a stage of the
pipeline and a subsection of this paper.}
\label{fig:cartoon}
\end{figure}

\subsection{Contributions and outline}
\label{sec:contrib}
The contributions of the paper, in the order in which they appear, are:
\begin{enumerate}
\item \textbf{(\S\ref{sec:forward})} A theory and algorithm for autoregressive
PINO operator learning: an exposure-bias total-variation bound, PINO-induced
contractivity, the geometric bias decay of truncated backpropagation through time
(TBPTT), and the discretised black-oil system residual used as the physics
penalty.
\item \textbf{(\S\ref{sec:peaceman})} A derivation of the multi-layer Peaceman
completion model that closes the flow equations at the wells and defines the
production observables.
\item \textbf{(\S\ref{sec:bayes})} A measure-theoretic construction of the
Bayesian posterior, following \citet{stuart2010inverse}: the Borel
$\sigma$-algebra on the parameter space, the Gaussian prior measure, the
likelihood and Radon--Nikodym density, well-posedness, and the ensemble
(particle) approximation.
\item \textbf{(\S\ref{sec:seismic})} A closed-form, GPU-batched Gassmann
petro-elastic forward channel fused inside the rollout, together with
Parseval-exact discrete-cosine-transform (DCT) and raw encodings of the 4-D
seismic observable.
\item \textbf{(\S\ref{sec:param})} DCT and variational convolutional auto-encoder
(VCAE) reparametrisations of the petrophysical properties $(\log K,\phi)$, with a
precise statement of why dimension reduction is \emph{necessary} for a finite
ensemble.
\item \textbf{(\S\ref{sec:inverse})} An adaptive regularised ensemble Kalman
inversion (aREKI) \citep{iglesias2021adaptive,etienam2024modulus,etienam2024norne} with discrepancy-principle step selection, localisation and a
two-stage schedule, motivated by an elementary Löwner-order argument that, in the
linearised Gaussian regime, joint assimilation should be no less informative than
production-only assimilation.
\item \textbf{(\S\ref{sec:hpc})} A high-performance realisation using distributed
data parallelism (DDP), NVIDIA Collective Communications Library (NCCL)
all-reduce and CUDA graphs.
\item \textbf{(\S\ref{sec:results})} Inverse-modelling results on the Tau model
with a blind forecast window, together with an explicit statement of scope
(\S\ref{sec:limitations}).
\end{enumerate}
Section~\ref{sec:discussion} discusses the results and concludes.

\section{The Forward Problem and Its Autoregressive Neural-Operator Surrogate}
\label{sec:notation}

\subsection{Reservoir physics, discretisation, and problem setup}
\label{sec:blackoil}

\subsubsection{Continuum model}
On the reservoir domain $\Omega\subset\R^3$, discretised on a Cartesian grid
$(n_x,n_y,n_z)=(46,99,22)$ with $N=n_xn_yn_z=100{,}188$ active cells, the
black-oil model describes three-phase (water/oil/gas) flow with gas dissolving
into oil at solution ratio $R_{so}$. The primary unknowns are the oil-phase
pressure $P$, water saturation $S_w$ and gas saturation $S_g$, with
$S_o=1-S_w-S_g$. With phase mobility $\lambda_\alpha=K_{r\alpha}/\mu_\alpha$,
transmissibility $T_\alpha=K(x)\lambda_\alpha$ and formation-volume factors
$B_\alpha$, mass conservation for water and oil reads
\begin{align}
\frac{\partial}{\partial t}\!\Big(\frac{\varphi S_w}{B_w}\Big)
-\grad\!\cdot\!\Big[\tfrac{T_w}{B_w}\big(\grad P-\rho_w g\grad Z\big)\Big]
&=\frac{Q_w}{B_w},
\label{eq:water}\\
\frac{\partial}{\partial t}\!\Big(\frac{\varphi S_o}{B_o}\Big)
-\grad\!\cdot\!\Big[\tfrac{T_o}{B_o}\big(\grad P-\rho_o g\grad Z\big)\Big]
&=\frac{Q_o}{B_o},
\label{eq:oil}
\end{align}
with an analogous gas equation carrying the free-plus-dissolved term
$\partial_t(\varphi[S_g/B_g+R_{so}S_o/B_o])$. Here $\varphi$ is porosity,
$Z$ depth, $g$ gravity, and $Q_\alpha$ the well source/sink terms of
Section~\ref{sec:peaceman}. Under the small-capillary approximation
$P_w\!\approx\!P_o\!\approx\!P_g\!=:\!P$ the system reduces to an elliptic
global pressure equation coupled to advection-dominated saturation transport.

\subsubsection{Constitutive relations}
The Stone~II relative permeabilities (from the run configuration) are,
in Corey form,
\begin{equation}
K_{rw}=k_{rw}^{\max}\!\Big(\tfrac{S_w-S_{wc}}{1-S_{wc}-S_{or}}\Big)^{n_w},\;\;
K_{ro}=k_{ro}^{\max}\!\Big(\tfrac{1-S_w-S_{or}}{1-S_{wc}-S_{or}}\Big)^{n_o},\;\;
K_{rg}=k_{rg}^{\max}\!\Big(\tfrac{S_g-S_{gc}}{1-S_{wc}-S_{gc}}\Big)^{n_g},
\label{eq:relperm}
\end{equation}
with $k_{rw}^{\max}{=}0.3$, $k_{ro}^{\max}{=}0.9$, $k_{rg}^{\max}{=}0.8$,
$n_\bullet{=}2$, $S_{wc}{=}0.1$, $S_{or}{=}0.2$, $S_{gc}{=}0.01$. Fault
transmissibility multipliers $f(x)\in[0,1]$ scale inter-cell fluxes,
$T_{ij}^{\mathrm{eff}}=f_{ij}\,T_{ij}^{\mathrm{matrix}}$, with $f_{ij}{=}0$ a
sealing fault; $f$ is a static input channel to the surrogate, allowing it to
learn fault-controlled \texttt{Inj1}$\to$producer flow paths.

\subsubsection{PVT, compressibility and the equilibration timescale}
\label{sec:pvt}
The formation-volume factors and viscosities encode the pressure-dependent
phase behaviour. For the undersaturated hydrocarbon fluid ($P>P_b$), we use
linearised compressible formation-volume factors (FVFs)
\begin{equation}
B_o(P)\approx B_o^{\mathrm{ref}}\big(1-C_{fo}(P-P_b)\big),\quad
B_w(P)\approx B_w^{\mathrm{ref}}\big(1-C_{fw}(P-P_b)\big),\quad
\mu_o\approx2.5,\ \mu_w\approx1.0\ \text{cP},
\label{eq:pvt}
\end{equation}
with $B_o^{\mathrm{ref}}{=}1.2$, $B_w^{\mathrm{ref}}{=}1.0$~rm$^3$/sm$^3$ and
$C_{fo}{=}C_{fw}{\approx}1.5\times10^{-6}$~kPa$^{-1}$. The total compressibility
$c_t=\varphi(S_wc_w+S_oc_o+S_gc_g)+c_r\approx2.9\times10^{-6}$~kPa$^{-1}$ sets the
pressure diffusivity $\kappa=K/(\varphi\mu c_t)$ and hence the equilibration
timescale $\tau_P=L^2/\kappa$. With the high-permeability swept sand
($K\!\sim\!10^3$~mD, $\varphi\!\approx\!0.2$, $\mu\!\approx\!1$~cP) this gives
$\tau_P\approx115$~days over the $\sim\!4$~km reservoir, which is why the
report-step spacing grows from $1$~day initially to $183$~days once pressure has
equilibrated. This separation $\kappa_P\gg\kappa_S$ is the diffusivity hierarchy
invoked in
Theorem~\thmr{thm:contract}: pressure relaxes globally and instantaneously
(elliptic, $\alpha_P\ll1$), saturation advects locally (hyperbolic,
$\alpha_O\to1$).

\subsubsection{Finite-volume discretisation and the discrete residual}
\label{sec:fvresidual}
A cell-centred two-point-flux (TPFA) discretisation with upwind mobilities
gives, for cell $i$, phase $\alpha$, at time level $n{+}1$,
\begin{equation}
\frac{V_i}{\Delta t_n}\Big[\Big(\tfrac{\varphi S_\alpha}{B_\alpha}\Big)_i^{n+1}
-\Big(\tfrac{\varphi S_\alpha}{B_\alpha}\Big)_i^{n}\Big]
-\!\!\sum_{j\in\cN(i)}\!\!T_{\alpha,ij}^{n+1}\big(P_j^{n+1}-P_i^{n+1}\big)
=Q_{\alpha,i}^{n+1},
\label{eq:fv}
\end{equation}
with $T_{\alpha,ij}^{n+1}=T_{ij}^{\mathrm{geo}}\lambda_\alpha^{\mathrm{up}}$,
the geometric transmissibility
$T_{ij}^{\mathrm{geo}}=2K_iK_jA_{ij}/(K_id_j+K_jd_i)$, and the upwind mobility
$\lambda_{\alpha,ij}^{\mathrm{up}}=\lambda_{\alpha,i}$ if $P_i\ge P_j$ else
$\lambda_{\alpha,j}$. Stacking~\eqr{eq:fv} over all cells and phases defines
the \emph{discrete residual}
\begin{equation}
R(x_{n+1},x_n;u,c_n,\tau_n)=0,\qquad R:\R^{4N}\!\times\R^{4N}\!\times\R^{3N}
\!\times\R^{3N}\!\times\R^2\to\R^{4N},
\label{eq:residual}
\end{equation}
which the FV simulator drives to $\norm{R}<10^{-6}$ by Newton--Krylov, and
which we will reuse, \emph{evaluated, never inverted}, as the PINO
physics penalty in~\eqr{eq:pino}. This is the crucial economy of PINO over
PINNs: the penalty uses the \emph{discretised} residual~\eqr{eq:residual}
through the discrete black-oil residual operator, requiring one
$\mathcal{O}(N)$ assembly per step and no symbolic differentiation of the PDE.

\subsubsection{Fractional flow and front-speed: the source of saturation error}
\label{sec:frontspeed}
The mixed elliptic--hyperbolic character of the system explains where the
surrogate's residual error concentrates. With $P$ fixed (elliptic), the water
saturation obeys an advection equation whose flux is the fractional-flow
function $f_w=T_w/(T_w+T_o+T_g)$, and sharp saturation fronts propagate at the
Rankine--Hugoniot speed
\begin{equation}
v_{\mathrm{front}}=\frac{\llbracket f_w\rrbracket}{\llbracket S_w\rrbracket}\,
\frac{K\lambda_t}{\phi}\,\norm{\grad P},
\label{eq:rh}
\end{equation}
i.e.\ the chord slope $\llbracket f_w\rrbracket/\llbracket S_w\rrbracket$ times
the total Darcy velocity $u_t=K\lambda_t\norm{\grad P}$ divided by porosity
$\phi$, with $\lambda_t=\sum_\alpha\lambda_\alpha$ the total mobility and
$\llbracket\cdot\rrbracket$ the jump across the front. A small
surrogate error in $\grad P$ therefore produces a \emph{positional} error in
the front that grows linearly in time, the concrete realisation of the
Lipschitz amplification of Proposition~\thmr{prop:rollout} along
saturation directions, and the reason the oil $R^2$ declines mildly at long
horizons while pressure $R^2$ does not (Theorem~\thmr{thm:contract}). The
up-weighting $w_W=2>w_O=1$ in the loss, and the increment terms below,
deliberately bias the optimiser toward resolving the front position.

\subsubsection{The composite loss in detail}
\label{sec:loss}
Each field term uses a composite loss blending an absolute and an
increment ($p$-norm) penalty,
\begin{equation}
\ell\big(\hat y,y\big)=\alpha\,\norm{\hat y-y}_p^p
+\beta\,\norm{\nabla_t\hat y-\nabla_t y}_p^p,
\label{eq:combined}
\end{equation}
with $p=2$, $\alpha=\beta=1$, where $\nabla_t$ is the temporal increment. The
per-field weights are $w_P{=}2$ (pressure), $w_W{=}2$ (water),
$w_O{=}1$ (oil), $w_Y{=}10$ (Peaceman, large because the wells are sparse
but carry the assimilable information), plus increment weights
$w_{\Delta p}{=}1,\ w_{\Delta S_w}{=}w_{\Delta S_o}{=}0.1$ that penalise
$|\Delta p|,|\Delta S|$ at one-tenth their absolute counterparts to avoid
double-counting magnitude. The seven PINO residual weights are each $10^{-6}$.
The total objective at epoch $e$ is
$\cL^{(e)}=\sum_{k}w_k\,\ell_k^{(e)}$ over the $11$ heads, monitored
component-wise through MLflow (Fig.~\ref{fig:metrics}).

\subsubsection{FNO complexity and parameter budget}
\label{sec:complexity}
Each of the $L=4$ spectral layers costs $\mathcal{O}(N\log N\,d_v^2)$ for the FFT
pair plus $\mathcal{O}(k_{\max}^3 d_v^2)$ for the truncated spectral multiply,
with $N=100{,}188$, $d_v=32$ (the latent width) and $k_{\max}=16$ retained
Fourier modes per spatial dimension (the FFT padding of $8$ cells keeps $16$
modes admissible even on the $22$-layer $z$-axis), so
$k_{\max}^3=16^3=4096$. The complex spectral tensor
$R^{(\ell)}\in\mathbb C^{d_v\times d_v\times k_{\max}^3}$ plus the pointwise map
$W^{(\ell)}\in\R^{d_v\times d_v}$ give
$2\!\cdot\!32^2\!\cdot\!4096+32^2\approx8.4\times10^6$ parameters per spectral
layer; with the four spectral layers and the Sym-style Conv1d decoder
(Table~\ref{tab:arch}) each $3$-D head is an $\mathcal{O}(3\!\times\!10^7)$-parameter
operator that fits comfortably in the $192$~GB HBM3e and keeps the $20$-step
rollout activation footprint modest. The
contrast with a 3-D CNN, whose cost
$\mathcal{O}(N\,C_{\mathrm{in}}C_{\mathrm{out}}k^3)$ is resolution-locked, is
the practical reason for the operator choice (Theorem~\thmr{thm:approx}'s
resolution invariance).

\subsubsection{The reference Newton--Krylov solve}
\label{sec:newton}
The FV reference simulator advances the state by solving~\eqr{eq:residual}
fully implicitly. At each report step the nonlinear system $R(x_{n+1},\cdot)=0$
is solved by Newton's method,
\begin{equation}
x_{n+1}^{(\nu+1)}=x_{n+1}^{(\nu)}-\Big[\tfrac{\partial R}{\partial x_{n+1}}
\Big|_{x_{n+1}^{(\nu)}}\Big]^{-1}R\big(x_{n+1}^{(\nu)},x_n;u,c_n,\tau_n\big),
\label{eq:newtit}
\end{equation}
where the Jacobian $J^{(\nu)}=\partial R/\partial x_{n+1}\in\R^{4N\times4N}$
($\sim\!4\times10^5$ square for the investigated model) is assembled analytically and inverted
approximately by preconditioned GMRES/BiCGSTAB. Each Newton iteration thus
costs a sparse linear solve of dimension $\sim\!4\times10^5$, and the full
$20$-step history needs $\mathcal{O}(10^3$--$10^4)$ Newton iterations per
realisation, the cost the surrogate eliminates. The \emph{same}
residual $R$ is what PINO penalises: the surrogate's job is to predict the
Newton \emph{fixed point} $x_{n+1}$ directly, in one forward pass, while
$\eqr{eq:pino}$ checks that the prediction nearly satisfies
$R(\hat x_{n+1},\hat x_n)\approx0$, without ever forming or inverting
$J$. This replaces an $\mathcal{O}(N^{1.5}$--$N^2)$ implicit solve by an
$\mathcal{O}(N\log N)$ spectral forward pass.

\subsubsection{Data extraction and the training tensor}
\label{sec:extract}
The first stage parses the $200$ reservoir-simulation output files used for the
surrogate (drawn from a $1400$-realisation pool of the reservoir model;
\S\ref{sec:results}), each
holding $T=20$ report steps on the $46\times99\times22$ grid, into the dense
tensors
$\{P,S_w,S_o\}\in\R^{200\times20\times46\times99\times22}$ (shape order
$(N_e,T,n_x,n_y,n_z)$) plus the static $(\log K,\phi,f)$ and the per-well
completion geometry. The full dynamic dataset is
$200\times20\times3\times100{,}188\approx1.2\times10^9$ FP32 values
($\sim\!4.5$~GB). Normalisation maxima ($\max K,\max P,\dots$) and the Peaceman
feature scalings are computed once and cached, as are the static fields and the
prior ensemble. The well completions parsed here, $20$ layers for each
producer (shale gaps at $6,12$), $15$ for the injector, are the
$\cC$ summed in~\eqr{eq:bhp}.

\subsubsection{State, parameters, controls; the augmented input}
\begin{equation}
\begin{gathered}
x_n=(p_n,S_{w,n},S_{o,n})\in\R^{3N},\qquad
u=(\log K,\phi,f)\in\R^{3N},\\
c_n=(Q_n,Q_{w,n},Q_{g,n})\in\R^{3N},\qquad
\tau_n=(t_n,\Delta t_n).
\end{gathered}
\end{equation}
The hydrocarbon fluid is undersaturated throughout the history ($P>P_b$,
\S\ref{sec:pvt}), so the free-gas saturation is identically zero,
$S_g\equiv0$, and $S_o=1-S_w$. The general four-component black-oil template
$\R^{4N}$ used in the residual \eqr{eq:residual} therefore reduces to the
three active fields $(p,S_w,S_o)$ that the surrogate advances with its three
dynamic heads; the gas row of $R$ is carried for generality but is trivially
satisfied, and the tabulated $k_{rg}^{\max}$ is inactive for this model.
Permeability is log-scaled to condition the inverse problem ($K\in[0,2\times
10^4]$ mD spans three decades). The $T=20$ report times (days) are fixed by the
acquisition schedule,
\begin{equation*}
\{t_n\}=\{1,31,183,366,548,595,731,732,913,1096,1097,1279,1462,1463,1644,1827,
1828,2009,2192,2193\},
\end{equation*}
with the matched increments $\Delta t_n$; the two 4-D seismic surveys are
acquired at steps $2$ and $10$ ($183$ and $1097$~days). The surrogate learns the
time-advance map $\cG_\theta:\xi_n=(u,x_n,c_n,\tau_n)\mapsto x_{n+1}$, with the
$n$-fold composition $\cG_\theta^{(n)}=\cG_\theta\circ\cdots\circ\cG_\theta$
realising the closed-loop rollout. Table~\ref{tab:notation} collects the
principal symbols used throughout.

\begin{table}[h]\centering\small
\caption{Principal symbols.}\label{tab:notation}
\begin{tabular}{p{0.17\linewidth}p{0.75\linewidth}}
\toprule
$\Omega,N$ & reservoir domain; active cells $46\times99\times22=100{,}188$\\
$x_n,\hat x_n$ & true / surrogate state $(p,S_w,S_o)$ at step $n$ ($S_g\!\equiv\!0$); $T=20$\\
$u=(\log K,\phi,f)$ & static parameters; inverted on $(\log K,\phi)$, $d=2N$\\
$c_n,\tau_n$ & well controls $(Q,Q_w,Q_g)$; time pair $(t_n,\Delta t_n)$\\
$\cG_\theta,\cG_\theta^{(n)}$ & learned time-advance operator; $n$-fold rollout\\
$R,R_P,R_S$ & discrete FV residual~\eqr{eq:residual}; pressure/saturation parts\\
$K,T$ & TBPTT window ($10$); horizon ($20$)\\
$\mu_n,\hat\mu_n^\theta$ & true vs.\ closed-loop state law; $L,\tilde L,\rho$ Lipschitz/contraction\\
$\mu_0,\mu^y,\Phi$ & prior / posterior measure; data-misfit potential~\eqr{eq:phi}\\
$\cB(\cX),C_0$ & Borel $\sigma$-algebra; prior covariance\\
$N_e,N_a$ & ensemble size ($190$); ES-MDA iterations ($10$)\\
$\alpha_k,\gamma$ & aREKI inflation; inter-phase inflation ($1.01$)\\
$C_{up},C_{pp},K_k$ & cross-/data-covariance; Kalman gain~\eqr{eq:gain}\\
$d=(d_p,d_s)$ & stacked production / seismic data, $k_p$, $k_s{=}4554$\\
$\Gamma=\mathrm{blkdiag}(\Gamma_p,\Gamma_s)$ & noise covariance; $25\%$ / $30\%$\\
$V_p,\Delta V_p,\cD$ & Gassmann velocity; 4-D delta; DCT operator\\
$\mathcal I_p,\mathcal I_s$ & production / seismic Fisher information~\eqr{eq:fisher}\\
\bottomrule
\end{tabular}
\end{table}

\subsection{Peaceman well modelling}
\label{sec:peaceman}

The flow equations~\eqr{eq:water}--\eqr{eq:oil} are closed at the wells by
the Peaceman model, which is also the definition of the production observables
that the inverse problem matches.

\subsubsection{The well-block pressure problem}
A well perforating grid block $i$ is a sub-grid singularity: the radial
pressure profile around a line source cannot be resolved by a block of lateral
size $\Delta x\!\sim\!100$~m. Peaceman's resolution~\citep{peaceman1978} is to
relate the \emph{numerical} block pressure $P_i$ to the \emph{physical}
flowing bottom-hole pressure $P_{\mathrm{BHP}}$ through an equivalent drainage
radius $r_e$ at which the steady radial solution equals the block-average
pressure. For an isotropic Cartesian block,
\begin{equation}
r_e = 0.28\,\frac{\big[(\,k_y/k_x)^{1/2}\Delta x^2+(k_x/k_y)^{1/2}\Delta y^2\big]^{1/2}}
{(k_y/k_x)^{1/4}+(k_x/k_y)^{1/4}}\;\xrightarrow{\,k_x=k_y\,}\;0.2\sqrt{\Delta x\,\Delta y},
\label{eq:re}
\end{equation}
the isotropic limit used here ($r_e=0.2\sqrt{\Delta x\Delta y}$,
$r_w=0.1$~m).

\subsubsection{Layer well index and phase rates}
For completion layer $\ell$ of thickness $h_\ell$ and permeability $K_\ell$,
the well index and phase rate are
\begin{equation}
WI_\ell=\frac{2\pi K_\ell h_\ell}{\ln(r_e/r_w)+S_\ell},\qquad
Q_{\alpha,\ell}=WI_\ell\,\frac{K_{r\alpha}(S_{\alpha,\ell})}{\mu_\alpha B_\alpha}
\big(P_\ell-P_{\mathrm{wf}}\big),
\label{eq:peaceman}
\end{equation}
with $S_\ell$ the mechanical skin and $P_{\mathrm{wf}}$ the flowing bottom-hole
pressure. We use Peaceman's steady equivalent radius $r_e$ directly, so no
pseudo-steady $-\tfrac34$ (Dietz) term appears in $WI_\ell$. The total phase rate sums over the $N_c$
completion layers,
\begin{equation}
Q_\alpha^{\mathrm{well}}=\sum_{\ell=1}^{N_c}Q_{\alpha,\ell}
=\Big(\sum_{\ell}WI_\ell\,\tfrac{K_{r\alpha,\ell}}{\mu_\alpha B_\alpha}\Big)
\big(\bar P_{\mathrm{d}}-P_{\mathrm{wf}}\big),
\label{eq:qwell}
\end{equation}
where the transmissibility-weighted average reservoir (drainage) pressure is
\begin{equation}
\bar P_{\mathrm{d}}=\frac{\sum_{\ell\in\cC}W_\ell\,p_\ell}{\sum_{\ell\in\cC}W_\ell},
\qquad W_\ell=\frac{2\pi K_\ell h_\ell}{\ln(r_e/r_w)} .
\label{eq:bhp}
\end{equation}
The flowing bottom-hole pressure $P_{\mathrm{wf}}$ is the well-constraint
unknown ($\bar P_{\mathrm{d}}\!\neq\!P_{\mathrm{wf}}$, so
$Q_\alpha^{\mathrm{well}}\!\neq\!0$); the producer response we assimilate as
``WBHP\_PROD'' is the transmissibility-weighted average reservoir pressure
$\bar P_{\mathrm{d}}$ of \eqr{eq:bhp}, i.e.\ a near-well drainage-pressure
proxy rather than the flowing BHP itself.
The production wells are completed with $N_c=20$ intervals (shale-baffle gaps
at layers $6,12$, consistent with $f_{ij}\!\approx\!0$), and \texttt{Inj1}
with $15$. The well responses assimilated downstream are WOPR ($Q_o$), WWPR
($Q_w$), WBHP (injector BHP), WBHP\_PROD (producer BHP), and the cumulative
water cut WWCT\_CUM.

\subsubsection{Why a learned Peaceman head}
The map $(K,p,S_w,S_o)\!\mapsto\!Q_\alpha$ in~\eqr{eq:qwell} is strongly
nonlinear through $K_{r\alpha}(S)$: at near-connate water
($S_w\!\approx\!S_{wc}$), $K_{rw}\!\approx\!0$ on a plateau and rises sharply
at breakthrough, so a scalar near-well feature cannot generalise across the
bifurcation. The surrogate therefore carries five dedicated 1-D FNO
\emph{Peaceman heads}, each consuming a per-well feature tensor assembled from
the rolled-out fields,
\begin{equation}
\mathbf X^{(p)}\in\R^{B\times(4N_p+2)\times T}
=\big[\,k_1,k_2\mid p_1,p_2\mid S_{w,1},S_{w,2}\mid S_{o,1},S_{o,2}\mid t,\Delta t\,\big],
\label{eq:peacefeat}
\end{equation}
(producers, $N_p{=}2$; the injector uses a $3N_{\mathrm{inj}}{+}3$ layout).
This couples the global FNO pressure/saturation prediction to the local
multi-layer completion physics of~\eqr{eq:bhp}, which the comparison study
(\S\ref{sec:results}) shows is decisive for WOPR/WWPR fidelity.

\subsection{The autoregressive PINO surrogate}
\label{sec:forward}

\subsubsection{The Fourier neural operator}
A neural operator learns a map between function spaces rather than between
finite vectors~\citep{kovachki2021neural}. Let
$\cA=\cA(\Omega;\R^{d_a})$, $\cU=\cU(\Omega;\R^{d_u})$ be separable Banach
spaces; the target is the (nonlinear) solution operator
$\cG^\dagger:\cA\to\cU$, $a=(K,\phi,f)\mapsto u=(p,S_w,S_o)$. The Fourier
Neural Operator~\citep{li2020fourier} parametrises $\cG_\theta=Q\circ
v^{(L)}\circ\cdots\circ v^{(1)}\circ P$ with lifting $P$, projection $Q$, and
spectral layers
\begin{equation}
v^{(\ell+1)}(x)=\sigma\!\Big(W^{(\ell)}v^{(\ell)}(x)
+\big(\cK^{(\ell)}v^{(\ell)}\big)(x)\Big),\quad
\big(\cK^{(\ell)}v\big)(x)=\cF^{-1}\!\big(R^{(\ell)}\!\cdot\cF v\big)(x),
\label{eq:fno}
\end{equation}
where $\cF$ is the discrete Fourier transform on the $(22\!\times\!46\!\times\!
99)$ grid and $R^{(\ell)}\in\mathbb C^{d_v\times d_v\times k_{\max}\times
k_{\max}\times k_{\max}}$ a learnable complex weight tensor truncated to
$k_{\max}=16$ modes per spatial dimension, with latent width $d_v=32$ and $L=4$
spectral layers. A key departure from the vanilla FNO is the \emph{decoder}: the
default pointwise projection $Q$ is replaced by a higher-capacity per-cell
Conv1d ``fully-connected'' decoder (six $\mathrm{kernel}{=}1$ layers of width
$128$, SiLU activations, Xavier-uniform initialisation, no weight
normalisation), which sharpens the near-well pressure and displacement-front
detail; the activation $\sigma$ throughout is SiLU. The full head configuration
is given in Table~\ref{tab:arch}. The truncation still imposes a
\emph{spectral bias}: the layer represents only correlations at scales
$\lambda\gtrsim\Omega/k_{\max}$, ideal for the globally-correlated elliptic
pressure, but a systematic under-resolution of $\mathcal{O}(\Delta x)$-wide
saturation fronts. Each layer costs $\mathcal{O}(N\log N\,d_v^2)$ for the FFT
plus $\mathcal{O}(k_{\max}^3 d_v^2)$ for the spectral multiply.

\begin{table}[h]\centering\small
\caption{Network architecture and per-head hyperparameters (from the trained
model). All heads use SiLU activations and the Sym-style Conv1d decoder
(kernel $1$, Xavier-uniform init, no weight norm). ``modes'' is the retained
Fourier-mode count per spatial dimension; ``latent'' is the FNO latent width
$d_v$. The composite operator has $11$ heads: three $3$-D dynamic, three $3$-D
initial-state, and five $1$-D Peaceman.}
\label{tab:arch}
\setlength{\tabcolsep}{4pt}\small
\begin{tabular}{lccccccc}
\toprule
Head group & dim & spectral layers & modes & latent & padding & decoder & in/out ch.\\
\midrule
Dynamic $p,S_w,S_o$        & 3-D & 4 & 16 & 32 & 8 & $6\times128$ & $d_{\mathrm{in}}$/1\\
Initial-state $p_0,S_{w,0},S_{o,0}$ & 3-D & 4 & 16 & 32 & 8 & $6\times128$ & $5$/1\\
Peaceman (WOPR, WWPR, WBHP\_PROD, WWCT) & 1-D & 3 & 4 & 16 & 20 & $2\times64$ & $10$/2\\
Peaceman (WBHP, injector)  & 1-D & 3 & 4 & 16 & 20 & $2\times64$ & $6$/1\\
\bottomrule
\end{tabular}
\end{table}

The same framework \citep{physicsnemo} also supports attention-based
(Transolver) and
physics-informed (PINO, PI-Transolver) backbones behind an identical head
interface; the results below use the FNO/PINO configuration of
Table~\ref{tab:arch}.

\paragraph{Spectral resolution, vertically and laterally.}
What the truncation does and does not discard is best counted against the \emph{available} half-spectrum after padding rather than against grid
cells. With FFT padding of $8$, the padded axes are
$(n_z,n_x,n_y)+8=(30,54,107)$, whose real-FFT half-spectra hold
$\lfloor n/2\rfloor+1 = (16,28,54)$ modes. Retaining $k_{\max}=16$ per dimension
therefore keeps the \emph{entire} vertical spectrum: the $22$-layer column is
represented exactly, so the sealing shale baffles at layers $6$ and $12$ are
\emph{not} spectrally truncated, and the vertical stratigraphy the sweep depends
on is fully resolved. The truncation bites only laterally, where $16$ of $28$
($x$) and $16$ of $54$ ($y$) modes are retained. There the caveat stands: a
sealing fault ($f=0$) is a step discontinuity with energy at all wavenumbers, so
the retained lateral band cannot render it perfectly sharply. The compensating
local path is why we replace the FNO's pointwise projection with the
higher-capacity per-cell Conv1d decoder above; a truly local receptive field
(a U-shaped skip or an explicit convolution branch) remains the standard further
remedy, and our own U-DeepONet experience on comparable synthetic reservoir
models suggests skip paths
help. We flag an ablation on this as a priority (\S\ref{sec:limitations}).

\begin{theorem}[Operator approximation, after
\citealp{kovachki2021neural,lanthaler2022error}]
\label{thm:approx}
Let $\cA\subset L^\infty(\Omega)$ be compact, let $s>0$, and let
$\cG^\dagger:\cA\to H^s(\Omega)$ be a continuous solution operator of the
elliptic pressure problem implied by \eqr{eq:water}--\eqr{eq:oil}. Then
\begin{enumerate}
\item[\textup{(i)}] \textup{(universality)} for every $\varepsilon>0$ there
exist a depth $L$, a width $d_v$, a truncation $k_{\max}$ and parameters
$\theta$ such that the corresponding FNO $\cG_\theta$ of \eqr{eq:fno} satisfies
$\sup_{a\in\cA}\norm{\cG_\theta(a)-\cG^\dagger(a)}_{L^2(\Omega)}\le\varepsilon$;
\item[\textup{(ii)}] \textup{(truncation rate)} if $\cG^\dagger(\cA)$ is bounded
in $H^s(\Omega)$ and $\Pi_{k}$ denotes the $L^2$-orthogonal projection onto the
retained Fourier modes $|\kappa|\le k$, then
\begin{equation}
\sup_{a\in\cA}\norm{\Pi_{k_{\max}}\cG^\dagger(a)-\cG^\dagger(a)}_{L^2(\Omega)}
\le C\,k_{\max}^{-s},\qquad
C=C(s,\Omega)\,\sup_{a\in\cA}\norm{\cG^\dagger(a)}_{H^s(\Omega)} .
\label{eq:fnoerr}
\end{equation}
\end{enumerate}
\end{theorem}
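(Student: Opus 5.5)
Part (i) is the universal approximation theorem for Fourier neural operators, and I will obtain it by reduction rather than re-derivation. The argument of \citet{kovachki2021neural} (Theorem~5 there, in the form used in \citet{lanthaler2022error}) states the following. For a continuous operator between Sobolev-type spaces on a periodic box, restricted to a compact set of inputs, an FNO of the form \eqr{eq:fno} approximates it uniformly to any accuracy. The plan is to check its hypotheses in our setting.

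First, I embed $\Omega$ in a periodic box. The FFT padding of \S\ref{sec:complexity} is precisely such an extension, and the extension/restriction maps are bounded linear operators on $L^2$ and $H^s$. Second, $\cA\subset L^\infty(\Omega)$ is compact by assumption. Its image under a bounded extension operator is then compact in $L^2$ of the torus, which is the input topology the FNO theorem requires. Third, $\cG^\dagger$ is continuous into $H^s\hookrightarrow L^2$, so its composition with the extension is continuous from a compact set into $L^2$. Invoking the cited universality result yields $(L,d_v,k_{\max},\theta)$ with uniform error at most $\varepsilon$ on the box. Restricting back to $\Omega$ does not increase the $L^2(\Omega)$ norm.

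The one genuinely technical point is that the cited theorem uses a smooth nonlinearity and lifting/projection maps of a particular form. I will check that SiLU and the Conv1d pointwise decoder are admissible: both are pointwise, smooth and non-polynomial, which is what the proof uses. I expect this bookkeeping, rather than any new estimate, to be the main obstacle.

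Part (ii) is an elementary Fourier estimate that I would prove directly. Write $v=\cG^\dagger(a)$, extended to the torus, with coefficients $\hat v(\kappa)$. By Parseval,
\[
\norm{v-\Pi_{k}v}_{L^2}^2=\sum_{|\kappa|>k}|\hat v(\kappa)|^2
\le k^{-2s}\sum_{|\kappa|>k}|\kappa|^{2s}|\hat v(\kappa)|^2
\le k^{-2s}\,|v|_{H^s}^2 .
\]
The middle inequality uses $|\kappa|^{2s}/k^{2s}\ge1$ on the discarded modes. Bounding the seminorm by the full $H^s(\Omega)$ norm via the extension operator, whose constant depends only on $(s,\Omega)$, and taking the supremum over $a\in\cA$ gives \eqr{eq:fnoerr} with $C=C(s,\Omega)\sup_a\norm{\cG^\dagger(a)}_{H^s}$.

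Here $|\kappa|$ is understood in the norm used to define the retained set, and the wavenumber scaling of the box is absorbed into $C$. If the retained set is a cube $\max_i|\kappa_i|\le k$ rather than a ball, norm equivalence changes $C$ only by a dimension-dependent factor. Finally, I would remark that (ii) concerns the projection of the target only. Combining it with (i) gives the total FNO error as a truncation term plus an optimisation/representation term, which is the sense in which the theorem is used later.
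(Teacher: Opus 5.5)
The paper gives no proof of its own for this theorem: both parts are attributed to the cited works, followed only by caveats. Your reduction in (i) is that citation made explicit, and your direct argument for (ii) goes beyond the paper. There is, however, a genuine gap in the extension step of (ii).

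You identify the periodic box with the FFT-padded grid and assert that padding is bounded on $H^s$. It is not. With zero padding (the usual FNO choice) the padded pressure jumps across $\partial\Omega$. Its Fourier coefficients then decay only like $|\kappa|^{-1}$ normal to the boundary, and the truncation error, even measured in $L^2(\Omega)$, is of order $k^{-1/2}$ whatever $s$ is. The same happens if the torus is $\Omega$ itself: $v(x)=x$ on $[0,1]$, periodised, has $|\hat v(\kappa)|=(2\pi|\kappa|)^{-1}$ and squared tail $\approx(2\pi^2k)^{-1}$. This is exactly the projected-jump rate the paper quotes for saturation fronts, and it means that for $s>1/2$ the bound \eqr{eq:fnoerr} is false under either of these readings of $\Pi_k$. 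Two repairs are needed.
\begin{enumerate}
\item Take $E$ to be a Stein extension followed by a smooth cutoff that equals $1$ on $\Omega$ and vanishes near the wrap-around of the padded torus $\Omega'$; the constant $C$ then also depends on the padding width. Your Parseval tail now correctly bounds $\norm{Ev-\Pi'_kEv}_{L^2(\Omega')}$, where $\Pi'_k$ is the Fourier projection on $\Omega'$.
\item That quantity is not the left-hand side of \eqr{eq:fnoerr}, since $(\Pi'_kEv)|_\Omega$ depends on $E$ and is not an orthogonal projection of $v$. The missing bridge is best approximation. Read $\Pi_k$ as the $L^2(\Omega)$-orthogonal projection onto the span of the restricted modes $e^{i\kappa\cdot x}|_\Omega$, $|\kappa|\le k$. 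Then $\norm{v-\Pi_kv}_{L^2(\Omega)}\le\norm{v-(\Pi'_kEv)|_\Omega}_{L^2(\Omega)}\le\norm{Ev-\Pi'_kEv}_{L^2(\Omega')}$, and only now does your tail estimate yield \eqr{eq:fnoerr}.
\end{enumerate}

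In (i), zero padding is harmless because only $L^2$-boundedness is used there, but two hypotheses of the universality theorem are left unchecked.

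First, that theorem is stated for a continuous operator defined on the whole input space, with a compact set inside it. Its proof evaluates the operator at Fourier truncations of points of that set, which in general lie outside it. With $Z$ the zero padding, you must therefore extend the induced map $Z(\cA)\to L^2(\Omega')$, $Za\mapsto Z\cG^\dagger(a)$, to all of $L^2(\Omega')$, for instance by Dugundji's theorem. The needed $L^2$-continuity on $\cA$ is automatic, because the $L^\infty$ and $L^2$ topologies coincide on a compact set.

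Second, and more importantly, the architectural hypothesis that matters is not SiLU or the decoder but positional information. The cited results are proved for FNOs with $x$-dependent bias functions, or equivalently coordinate input channels, and \eqr{eq:fno} has neither. Without them an FNO on a torus commutes with translations, since every ingredient of \eqr{eq:fno} and any pointwise $P$ and $Q$ does. It therefore sends a homogeneous medium to a spatially constant field, whereas the well-driven pressure in a homogeneous medium is not constant. You need either coordinate channels in the lifting $P$, or a separate argument that the zero padding supplies the missing position information.
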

Part~(i) carries no rate; part~(ii) is the quantitative ingredient, and it ties
the admissible $k_{\max}$ to the smoothness of the pressure solution. Two
caveats apply. The error attained by a trained operator also contains an
optimisation error and a generalisation gap, and uniform bounds for the latter,
of order $\sqrt{N_\theta\log N_\theta/M}$ in the parameter count $N_\theta$ and
the number $M$ of training trajectories, are vacuous at
$N_\theta\approx2\times10^{8}$, $M=190$; the held-out test error of
\S\ref{sec:results} is therefore the operative accuracy statement. And because
saturation fields lie in $BV(\Omega)$ rather than $H^s(\Omega)$, $s>1/2$,
hypothesis~(ii) fails across a front, where the rate degrades to the
$\mathcal{O}(k_{\max}^{-1/2})$ of a projected jump: the mechanism behind the
mild long-horizon oil-$R^2$ decline of \S\ref{sec:results}.

\subsubsection{The eleven-head composite operator}
The composite model exposes three 3-D dynamic heads (pressure, water
saturation, oil saturation), three 3-D initial-state heads learning the
static$\to$initial map
$\cG_\theta^{\mathrm{ini}}:(K,\phi,f)\mapsto(p_0,S_{w,0},S_{o,0})$, and five
1-D Peaceman heads of \S\ref{sec:peaceman}. The
ini-state gradient path is computed and back-propagated first, then
\emph{detached}, so the static$\to$initial graph never
entangles with the dynamic rollout graph, a deliberate decoupling that lets
each head converge on its own loss landscape.

\subsubsection{Exposure bias: a total-variation analysis}
Let $\mu_n$ be the law of the true state $x_n$ across the training ensemble,
and $\hat\mu_n^\theta$ the law of the surrogate's closed-loop prediction
$\hat x_n=\cG_\theta^{(n)}(x_0;\cdot)$. Teacher-forced (one-step) training
minimises
\begin{equation}
\cL_{1\text{-}1}(\theta)=\E_{x\sim\mu}\Big[\textstyle\sum_{n=0}^{T-1}
\ell\big(\cG_\theta(u,x_n,c_n,\tau_n),x_{n+1}\big)\Big],
\label{eq:tf}
\end{equation}
optimising over the \emph{true} marginals $\mu_n$. At deployment the operator
is evaluated at inputs drawn from $\hat\mu_n^\theta\ne\mu_n$. The mismatch is
quantified by a total-variation bound.

\begin{proposition}[Closed-loop drift]\label{prop:tv}
Fix the parameter $u$, the controls $(c_n,\tau_n)$ and the initial condition,
and let $\cT$ be a tube around the reference trajectory containing both
$\{x_n\}$ and $\{\hat x_n\}$. Assume
\begin{equation}
\norm{\cG_\theta(u,x,c_n,\tau_n)-\cG_\theta(u,x',c_n,\tau_n)}\le L\norm{x-x'}
\qquad\text{for all }x,x'\in\cT,\ 0\le n<T,
\label{eq:tube}
\end{equation}
and write $e_n=\norm{\cG_\theta(u,x_n,c_n,\tau_n)-x_{n+1}}$ for the one-step
errors and $\delta_0=\hat x_0-x_0$. Then the laws $\hat\mu_n^\theta$ and $\mu_n$
satisfy, in the Kantorovich (Wasserstein-1) distance,
\begin{equation}
W_1\big(\hat\mu_n^\theta,\mu_n\big)\;\le\;\E\norm{\hat x_n-x_n}
\;\le\;\sum_{j=0}^{n-1}L^{\,n-1-j}\,\E e_j\;+\;L^{\,n}\,\E\norm{\delta_0}.
\label{eq:tvbound}
\end{equation}
If, in addition, the fed-back state is perturbed by the Gaussian noise
\eqr{eq:noise} of scale $\sigma$, then the smoothed laws obey
\begin{equation}
d_{\mathrm{TV}}\big(\hat\mu_n^\theta,\mu_n\big)\;\le\;
\frac{1}{2\sigma}\,\E\norm{\hat x_n-x_n}.
\label{eq:tvsmooth}
\end{equation}
\end{proposition}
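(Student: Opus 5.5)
The proof has three parts: a coupling bound for $W_1$, a discrete Grönwall (telescoping) recursion for the pathwise error, and a Gaussian-smoothing estimate for the total-variation statement.

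\emph{Step 1: $W_1$ versus the coupled error.} The true state $x_n$ and the surrogate state $\hat x_n$ are produced from the same initial data and the same $(u,c_n,\tau_n)$. Their joint law is therefore a coupling of $(\hat\mu_n^\theta,\mu_n)$. The Kantorovich primal formulation, $W_1=\inf_{\pi}\E_\pi\norm{X-Y}$, then gives $W_1(\hat\mu_n^\theta,\mu_n)\le\E\norm{\hat x_n-x_n}$ at once.

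\emph{Step 2: the pathwise recursion.} Write $\delta_n=\hat x_n-x_n$ and use $\hat x_{n+1}=\cG_\theta(u,\hat x_n,c_n,\tau_n)$. Adding and subtracting $\cG_\theta(u,x_n,c_n,\tau_n)$ gives
\[
\norm{\delta_{n+1}}\le\norm{\cG_\theta(u,\hat x_n,c_n,\tau_n)-\cG_\theta(u,x_n,c_n,\tau_n)}+\norm{\cG_\theta(u,x_n,c_n,\tau_n)-x_{n+1}}\le L\norm{\delta_n}+e_n .
\]
The middle inequality uses the tube Lipschitz bound \eqr{eq:tube}. This is legitimate because both $x_n$ and $\hat x_n$ lie in $\cT$ by hypothesis.

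Unrolling by induction on $n$ yields
\[
\norm{\delta_n}\le\sum_{j=0}^{n-1}L^{n-1-j}e_j+L^n\norm{\delta_0}.
\]
Taking expectations, with linearity and monotonicity, gives \eqr{eq:tvbound}. The expectation is over the ensemble randomness in $x_0$ and $\delta_0$; since $u$ and the controls are fixed, the $e_j$ are random only through the trajectory.

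\emph{Step 3: the smoothed total-variation bound.} Interpret the smoothed laws as $\hat\mu_n^\theta*\mathcal N(0,\sigma^2I)$ and $\mu_n*\mathcal N(0,\sigma^2I)$, with the same convolution applied to both. Convexity of $d_{\mathrm{TV}}$ under mixtures, applied to the same coupling as in Step 1, gives
\[
d_{\mathrm{TV}}\le\E\,d_{\mathrm{TV}}\big(\mathcal N(\hat x_n,\sigma^2I),\mathcal N(x_n,\sigma^2I)\big).
\]
For two Gaussians with a common isotropic covariance the total variation is exact and dimension-free:
\[
d_{\mathrm{TV}}=2\Phi\!\Big(\frac{\norm{\delta_n}}{2\sigma}\Big)-1\le\frac{2\varphi(0)\norm{\delta_n}}{2\sigma}=\frac{\norm{\delta_n}}{\sqrt{2\pi}\,\sigma}.
\]
Alternatively, Pinsker's inequality gives the weaker bound $\norm{\delta_n}/(2\sigma)$. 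Since $1/\sqrt{2\pi}<1/2$, either route yields \eqr{eq:tvsmooth}; I would present the Pinsker/KL route, $\mathrm{KL}=\norm{\delta_n}^2/(2\sigma^2)$, because it reproduces the stated constant exactly.

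\emph{Main obstacle.} The only real difficulty is Step 3. The statement leaves open what ``smoothed laws'' means and where the noise \eqr{eq:noise} enters: whether it perturbs only the fed-back surrogate state or is applied symmetrically to both. I would fix the convention by convolving both marginals with the same Gaussian kernel at step $n$. Under that convention the mixture-convexity argument goes through without needing the noise to propagate through $\cG_\theta$, which would otherwise require a Lipschitz argument on the noisy trajectory. I would add a remark that with noise injected during the rollout, the Lipschitz constant on an enlarged tube $\cT_\sigma$ replaces $L$ in Step 2.
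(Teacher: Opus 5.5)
Your proposal is correct and follows the paper's proof step for step: the shared-randomness coupling gives $W_1\le\E\norm{\hat x_n-x_n}$, your add-and-subtract induction is exactly the paper's Proposition~\ref{prop:rollout}, and the smoothed bound is Pinsker with $\mathrm{KL}=\norm{\delta_n}^2/(2\sigma^2)$ plus convexity of $d_{\mathrm{TV}}$ under mixtures, using the same common-$\sigma^2I$ smoothing convention the paper adopts (your exact-Gaussian alternative is a valid sharpening to the constant $1/\sqrt{2\pi}$). The only loose point is your closing aside: if the noise is actually fed through the rollout, the recursion becomes $\norm{\delta_{n+1}}\le L\norm{\delta_n}+L\norm{\varepsilon_n}+e_n$, which adds a noise term rather than merely changing the Lipschitz constant.
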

\begin{proof}
Couple the two processes through the shared $(u,c_n,\tau_n)$ and the shared
randomness of the training ensemble; this is an admissible coupling, so
$W_1\le\E\norm{\hat x_n-x_n}$, and the second inequality in \eqr{eq:tvbound} is
Proposition~\thmr{prop:rollout}. For \eqr{eq:tvsmooth}, conditionally on that
coupling the two smoothed laws are $\cN(\hat x_n,\sigma^2I)$ and
$\cN(x_n,\sigma^2I)$, whose Kullback--Leibler divergence is
$\norm{\hat x_n-x_n}^2/(2\sigma^2)$; Pinsker's inequality bounds their total
variation by $\norm{\hat x_n-x_n}/(2\sigma)$, and the claim follows from
convexity of $d_{\mathrm{TV}}$ in each argument and Jensen's inequality.
\end{proof}
The distinction matters: two deterministic flows differing by an arbitrarily
small amount can have mutually singular laws, so $W_1$, not total variation, is
the metric for the unsmoothed rollout. When $L>1$ the bound \eqr{eq:tvbound}
grows geometrically and the empirical-risk objective~\eqr{eq:tf} ceases to be a
faithful proxy for the deployment loss
$\E_{\hat\mu}[\sum_n\ell(\hat x_{n+1},x_{n+1})]$.

\subsubsection{Closed-loop (autoregressive) training}
Autoregressive training removes the proxy gap by optimising over the
\emph{deployment} distribution directly, following the sequential-PINO
construction of~\citep{etienam2025seqpino,etienam2026seqecmor}:
\begin{equation}
\cL_{\mathrm{AR}}(\theta)=\E\Big[\textstyle\sum_{n=0}^{T-1}
\ell(\hat x_{n+1},x_{n+1})\Big],\qquad
\hat x_{n+1}=\cG_\theta(u,\hat x_n,c_n,\tau_n),\;\hat x_0=x_0.
\label{eq:ar}
\end{equation}
Concretely, the rollout loops over the $T$ report steps: at step $0$ the
detached ini-state predictions seed the initial pressure and saturations; at
each subsequent step the previous \emph{predicted} state is fed back as input.
The loss is accumulated per field with the weights
$w_P{=}2,\,w_W{=}2,\,w_O{=}1$ (the elliptic and water-front terms are
up-weighted relative to the smoother oil field).

\begin{proposition}[Autoregressive stability advantage]\label{prop:arstab}
Let $\cT_r$ be the tube of radius $r$ about the reference trajectory and suppose
that closed-loop training attains a Lipschitz constant $\tilde L<L$
in the sense of \eqr{eq:tube} on $\cT_r$. If the accumulated error remains
within the tube, that is
$\sum_{j=0}^{n-1}\tilde L^{\,n-1-j}\E e_j+\tilde L^{\,n}\E\norm{\delta_0}\le r$
for all $n\le T$, then
\begin{equation}
\E\norm{\hat x_n-x_n}\;\le\;\tilde L^{\,n}\,\E\norm{\delta_0}
+\sum_{j=0}^{n-1}\tilde L^{\,n-1-j}\,\E e_j ,
\label{eq:arstab}
\end{equation}
so the geometric amplification rate in Propositions~\thmr{prop:tv}
and~\thmr{prop:rollout} is reduced from $L$ to $\tilde L$.
\end{proposition}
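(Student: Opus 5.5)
The plan is to repeat the argument behind Proposition~\thmr{prop:rollout}, a discrete Gr\"onwall recursion, with $L$ replaced by $\tilde L$. The one extra ingredient is an induction showing that the closed-loop iterates never leave $\cT_r$, because the tube hypothesis is exactly what licenses using $\tilde L$.

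Write $\delta_n=\hat x_n-x_n$. Insert the teacher-forced prediction $\cG_\theta(u,x_n,c_n,\tau_n)$ and split
\begin{equation*}
\delta_{n+1}=\big[\cG_\theta(u,\hat x_n,c_n,\tau_n)-\cG_\theta(u,x_n,c_n,\tau_n)\big]+\big[\cG_\theta(u,x_n,c_n,\tau_n)-x_{n+1}\big].
\end{equation*}
The second bracket has norm $e_n$ by definition. If both $\hat x_n$ and $x_n$ lie in $\cT_r$, the Lipschitz property \eqr{eq:tube} with constant $\tilde L$ bounds the first bracket by $\tilde L\norm{\delta_n}$. The reference point $x_n$ is in $\cT_r$ trivially, since $\cT_r$ is centred on the reference trajectory. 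This gives the pathwise recursion $\norm{\delta_{n+1}}\le\tilde L\norm{\delta_n}+e_n$. Taking expectations, which preserves the inequality because $\tilde L$ is deterministic, gives $a_{n+1}\le\tilde L a_n+\E e_n$ with $a_n=\E\norm{\delta_n}$. Unrolling from $a_0=\E\norm{\delta_0}$ yields \eqr{eq:arstab}.

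Tube membership is handled by induction on $n$, and this is the main obstacle. The hypothesis only controls the \emph{expected} accumulated error by $r$, whereas membership $\norm{\delta_n}\le r$ is a pathwise statement. I would deal with this in the most direct way consistent with the paper's setting. The proposition is applied with $u$, the controls and the initial condition fixed, as in Proposition~\thmr{prop:tv}, and in that regime the recursion is deterministic, so $\E$ can be read as acting trivially. The induction then runs as follows. Assume $\hat x_j\in\cT_r$ for all $j\le n$. The recursion applied up to step $n$ bounds $\norm{\delta_{n+1}}$ by the stated accumulated sum at index $n+1$, which is at most $r$ by hypothesis. Hence $\hat x_{n+1}\in\cT_r$. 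The base case is $\norm{\delta_0}\le r$, which is the $n=0$ instance of the hypothesis.

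For the genuinely random ensemble, I would instead interpret the hypothesis as holding pathwise, that is, with the same inequality for $\norm{\delta_0}$ and $e_j$ in place of their expectations, on the event where the trajectory stays in the tube. On that event I would take expectations only at the end. I would also remark that otherwise one needs an almost-sure version, or else a restriction to that event.

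Finally, comparison with \eqr{eq:tvbound}: because $\tilde L<L$, each coefficient satisfies $\tilde L^{n-1-j}\le L^{n-1-j}$, and likewise $\tilde L^n\le L^n$. The new bound is therefore termwise no worse than the old one, and its geometric amplification rate is $\tilde L$. Since the $W_1$ and smoothed-TV bounds of Proposition~\thmr{prop:tv} are both controlled by $\E\norm{\hat x_n-x_n}$, the improvement propagates to them immediately.
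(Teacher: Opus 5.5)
Your proposal is correct and follows the paper's own route. The paper gives no separate proof of Proposition~\thmr{prop:arstab}; it relies on the recursion $\norm{\delta_{n+1}}\le\tilde L\norm{\delta_n}+e_n$ of Proposition~\thmr{prop:rollout}, run on $\cT_r$, and takes tube membership for granted (``verified a posteriori''). Your explicit tube-invariance induction sharpens this, and so does your observation that a bound on the \emph{expected} accumulated error cannot by itself keep every path inside $\cT_r$, so one needs the fixed-$u$ deterministic reading of Proposition~\thmr{prop:tv} or an almost-sure version of the hypothesis. One caution: your closing ``termwise no worse'' comparison presumes the same $e_j$ and $\delta_0$ in both bounds, whereas the rate statement $L\to\tilde L$ holds regardless.
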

The hypothesis $\tilde L<L$ is empirical: closed-loop training supplies the
mechanism, not a guarantee. Training on its own perturbed states rewards the
network for pulling them back toward the data manifold, which is a reduction of
the local Jacobian norm along the visited directions; the tube condition is
verified a posteriori by the errors reported in \S\ref{sec:results}.

\subsubsection{The PINO physics penalty in detail}
Pure data fitting does not enforce conservation; over long rollouts a
data-only surrogate may produce $S_w>1$ or violate mass balance. PINO adds the
\emph{discrete} residual~\eqr{eq:residual} as a soft constraint. The
training objective is
\begin{equation}
\cL_{\mathrm{PINO}}(\theta)=\cL_{\mathrm{AR}}(\theta)
+\lambda_R\,\E\Big[\textstyle\sum_{n=0}^{T-1}
\norm{R\big(\hat x_{n+1},\hat x_n;u,c_n,\tau_n\big)}^2\Big],\qquad
\lambda_R=10^{-6},
\label{eq:pino}
\end{equation}
evaluated once every $\lfloor0.01\,T_{\max}\rfloor=7$ epochs,
which amortises the residual cost while keeping the physics gradient present
throughout training. Concretely, $R$ decomposes into a Darcy-flux
\emph{pressure} residual and a transport \emph{saturation} residual,
\begin{align}
R_P &= \grad\!\cdot\!\big[\mathrm{FTM}(x)\,T(x,S)\,\grad\hat P\big]+Q_t,
\label{eq:Rp}\\
R_{S} &= \frac{V_i}{\Delta t_n}\Delta\!\Big(\tfrac{\varphi \hat S_w}{B_w}\Big)
-\!\!\sum_{j\in\cN(i)}\!\!T_{w,ij}^{\mathrm{up}}\big(\hat P_j-\hat P_i\big)-Q_{w,i},
\label{eq:Rs}
\end{align}
assembled by the discrete black-oil residual operator from the surrogate's own
$(\hat P,\hat S_w,\hat S_o)$, the static $K,\phi,f$, the controls, and the
tabulated PVT/relperm. Note that the gradients of~\eqr{eq:Rp}--\eqr{eq:Rs}
\emph{do} flow back into $\theta$ through $\hat P,\hat S$ (autograd through the
residual assembly), so PINO genuinely shapes the operator, but no Newton
iteration and no PDE-symbolic differentiation are ever required.

\begin{theorem}[Contractivity of the exactly-constrained operator]
\label{thm:contract}
Consider the finite-volume advance map $\cF$ of \eqr{eq:fv} and assume
\begin{enumerate}
\item[\textup{(H1)}] the scheme is a two-point-flux upwind discretisation with
non-negative transmissibilities $T_{ij}\ge0$ and monotone (non-decreasing)
upwinded mobilities, so that the implicit pressure matrix $A_n$ is an
irreducibly diagonally dominant M-matrix;
\item[\textup{(H2)}] the report step satisfies the transport stability
condition $\Delta t_n\,\max_i V_i^{-1}\sum_{j\in\cN(i)}T^{\mathrm{up}}_{w,ij}
\,\partial_{S}f_w\le1$, with at least one cell connected to a well or boundary
with strictly positive outflow;
\item[\textup{(H3)}] the physics penalty is exactly satisfied,
$R(\cG_{\theta^\star}(x),x)=0$ for all $x$ in a neighbourhood $\cU$ of the
discrete solution manifold $\cM=\{x:R(x,\cdot)=0\}$, and $\cG_{\theta^\star}$
maps $\cU$ into $\cU$.
\end{enumerate}
Then $\cG_{\theta^\star}=\cF$ on $\cU$ and there is $\alpha<1$, depending only on
the transmissibilities, the total mobility and $\Delta t_n$, with
$\norm{D_x\cG_{\theta^\star}}_\infty\le\alpha$ on $\cU$. The bound decomposes as
$\alpha=\max(\alpha_P,\alpha_O)$, with $\alpha_P\ll1$ for the elliptic pressure
block and $\alpha_O$ closest to unity for the advective block.
\end{theorem}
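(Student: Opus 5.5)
The proof has two parts: first identify $\cG_{\theta^\star}$ with $\cF$ on $\cU$, then bound the Jacobian of $\cF$ blockwise by implicit differentiation.

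\emph{Identification.} Fix $x\in\cU$. By (H3), $y=\cG_{\theta^\star}(x)$ solves $R(y,x)=0$. I will show this root is unique in $\cU$, which forces $\cG_{\theta^\star}(x)=\cF(x)$. The Jacobian $J=\partial R/\partial x_{n+1}$ has block-triangular (IMPES-like) structure in the variables (pressure, saturation):
\begin{itemize}
\item the pressure block is $A_n$, an irreducibly diagonally dominant M-matrix by (H1), hence nonsingular with $A_n^{-1}\ge0$;
\item the saturation block is $V_i/\Delta t_n$ times the accumulation derivative plus an upwind M-matrix contribution, hence also nonsingular.
\end{itemize}
So $J$ is invertible throughout $\cU$. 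The implicit function theorem then gives a unique $C^1$ local solution branch, and this branch coincides with $\cF$, the Newton fixed point of \eqr{eq:newtit}. Shrinking $\cU$ if necessary, I obtain $\cG_{\theta^\star}=\cF$ on $\cU$. The invariance $\cG_{\theta^\star}(\cU)\subset\cU$ in (H3) is what lets the identity persist under composition.

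\emph{Jacobian bound.} Implicit differentiation gives
\[
D_x\cG_{\theta^\star}=-J^{-1}\,\partial R/\partial x_n .
\]
Only the accumulation terms depend on $x_n$, so $\partial R/\partial x_n$ is block diagonal with entries of size $V_i/\Delta t_n$ times the compressibility or porosity factors.

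\begin{itemize}
\item \emph{Pressure block.} The relevant matrix is $A_n^{-1}D_c$, where $D_c=\mathrm{diag}(V_i\varphi_i c_{t}/\Delta t_n)$ and $A_n=D_c+L_T$ with $L_T$ the TPFA Laplacian. By (H2) there is a positive-outflow well or boundary cell, which makes $A_n$ strictly diagonally dominant in the weighted sense. A standard M-matrix argument (take the row achieving $\norm{\cdot}_\infty$, then propagate along the irreducibility graph) then yields
\[
\norm{A_n^{-1}D_c}_\infty\le\alpha_P<1 .
\]
Here $\alpha_P$ is of order $\big(1+\lambda_{\min}\Delta t_n/(\varphi c_t)\big)^{-1}$, which is $\ll1$ because $c_t\sim10^{-6}$ and $\Delta t_n\gg\tau_P$ (\S\ref{sec:pvt}).
\item \emph{Saturation block.} The transport update is a monotone upwind scheme. (H2) is the CFL-type condition making it an $\ell^\infty$ contraction with nonnegative coefficients, and the outflow cell supplies strict loss of mass. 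This gives $\alpha_O<1$, but only barely, since advection is nearly conservative.
\item \emph{Off-diagonal coupling.} The dependence of saturation on $\grad P$ enters through $J^{-1}$'s off-diagonal block. I will absorb it with a weighted $\infty$-norm, scaling the pressure components, so that the constant remains $\max(\alpha_P,\alpha_O)$ up to a small correction.
\end{itemize}

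\emph{Main obstacle.} Getting a constant strictly below one for the saturation block, uniformly on $\cU$, is the crux. Pure upwind transport is only non-expansive, and strictness must be extracted from the outflow cell propagated through irreducibility. That propagation gives a contraction over several steps, or in a weighted norm, rather than per step in the plain $\norm{\cdot}_\infty$. I would first try a Perron-vector weighted norm of the irreducible substochastic transport matrix, which makes the one-step bound strict. I would then argue that the nonlinear mobility and upwind-switching dependence is Lipschitz in $\cU$, so the bound holds on a sufficiently small neighbourhood.
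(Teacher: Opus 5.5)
Your skeleton matches the paper's proof. You identify $\cG_{\theta^\star}$ with $\cF$ through invertibility of $\partial_1R$, split the linearised step into an implicit pressure solve and an upwind transport step, and combine the block bounds as $\max(\alpha_P,\alpha_O)$.

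Your pressure argument is sound, and tighter than the paper's wording. Since $A_n=D_c+L_T$ is an irreducible nonsingular M-matrix, $A_n^{-1}>0$ entrywise. Hence $A_n^{-1}D_c\mathbf{1}=\mathbf{1}-A_n^{-1}(L_T\mathbf{1})$ lies strictly below $\mathbf{1}$ in \emph{every} row as soon as one pressure-dependent well or boundary term is present. The paper's "strict inequality on any row connected to a well" would not by itself bound the largest row sum.

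\textbf{The gap is the transport block, and it is worse than the one you flag.} (H2) makes the coefficients of the linearised upwind step non-negative. But conservation controls its pore-volume-weighted \emph{column} sums, not its row sums. Row $i$ of the explicit update sums to one, minus the outflow derivative of cell $i$, plus the inflow derivatives of its upstream neighbours (scaled by $\Delta t_n/V_i$); the implicit step behaves likewise. It therefore exceeds one wherever $\partial_Sf_w$ drops along the flow. This happens in particular across a Buckley--Leverett front, whose downstream cell sits at $S_w\approx S_{wc}$, where the Corey exponent $n_w=2$ of \eqr{eq:relperm} gives $\partial_Sf_w=0$. So the block is not even non-expansive in $\norm{\cdot}_\infty$; its natural norm is weighted $\ell^1$. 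The paper's proof asserts ``row sums at most one'' with strictness ``where there is outflow'', so it does not close this step either.

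\textbf{Your proposed repair cannot close it.}
\begin{enumerate}
\item \emph{The Perron argument has nothing to work with.} With TPFA fluxes upwinded by pressure (or phase potential), cell $j$ depends on cell $i$ only if $P_i>P_j$, so the upwind dependency graph is acyclic. Ordered by potential, the transport Jacobian $M$ is triangular and hence reducible. Here $M$ is the explicit update matrix, or $(D+U)^{-1}D$ for an implicit step, with $D$ the accumulation diagonal and $U$ the upwind flux Jacobian. Its spectral radius is therefore its largest diagonal entry. That entry is one, up to compressibility corrections, in every cell with vanishing outflow derivative: throughout the region ahead of the front, where water is immobile and a perturbation of $S_w$ simply persists. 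Every operator norm dominates the spectral radius, so no Perron or other weighted norm gives $\alpha_O<1$.
\item \emph{A weighted norm is not the claimed norm.} Even if it worked, a weighted-norm bound is not the plain $\norm{D_x\cG_{\theta^\star}}_\infty\le\alpha$ of the statement.
\item \emph{Rescaling pressure does not make the coupling a small correction.} The saturation--saturation block contains the feedback $(\partial S^{n+1}/\partial P^{n+1})(\partial P^{n+1}/\partial S^n)$ and is invariant under that rescaling. With $\alpha_O$ at best barely below one, this term can push the bound past one.
\end{enumerate}
Concluding $\alpha_O<1$ needs a hypothesis that (H1)--(H3) do not contain. One option is a uniform positive lower bound on $\partial_Sf_w$ times throughput in every cell; then $M$ has spectral radius below one and $v=(I-M)^{-1}\mathbf{1}$ gives a weighted contraction. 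That would also need a quantitative bound on the pressure--saturation coupling.

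Two smaller points in your identification step:
\begin{enumerate}
\item The Jacobian of the fully implicit system \eqr{eq:fv} is not block-triangular, since $T^{n+1}_{\alpha,ij}$ depends on $S^{n+1}$ through the upwinded mobilities. Invertibility therefore needs a Schur-complement argument, not just the two diagonal blocks.
\item The implicit function theorem gives uniqueness only near $\cF(x)$. Shrinking $\cU$ does not put $\cG_{\theta^\star}(x)$ on that branch. An open--closed argument would: both maps continuous, $\cU$ connected, and agreement at one point.
\end{enumerate}
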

\begin{proof}
By (H3) the two maps solve the same system on $\cU$ and $R$ has an invertible
derivative in its first argument by (H1), so $\cG_{\theta^\star}=\cF$ there and
it suffices to bound $D_x\cF$. Linearising \eqr{eq:fv} about a state in $\cU$
factors the update into the implicit pressure solve and the upwind transport
step. For the first, (H1) makes $A_n$ an M-matrix, so $A_n^{-1}\ge0$
entrywise and $\norm{A_n^{-1}B_n}_\infty\le1$ by the discrete maximum
principle, with strict inequality on any row connected to a well; hence
$\alpha_P<1$, and $\alpha_P\ll1$ because the pressure diffusivity $\kappa_P$
relaxes a report step globally (\S\ref{sec:pvt}). For the second, (H2) makes the
transport Jacobian a non-negative matrix with row sums at most one, so
$\norm{\cdot}_\infty\le1$, again with strict inequality where there is outflow.
The composition of the two blocks therefore satisfies
$\norm{D_x\cF}_\infty\le\max(\alpha_P,\alpha_O)<1$, and the block ordering
follows from the diffusivity hierarchy $\kappa_P\gg\kappa_S$.
\end{proof}

Hypothesis (H3) is an idealisation: training drives the penalty in \eqr{eq:pino}
to a small but non-zero value. The perturbation statement below is what applies,
and it makes explicit that a small residual does not suffice on its own; its
\emph{derivative} must be small as well.

\begin{proposition}[Contractivity under a residual defect]\label{prop:defect}
Let $r(x)=R(\cG_\theta(x),x)$ be the defect of the trained operator, let $R$ be
$C^1$ with $\partial_1R$ invertible and $\norm{(\partial_1R)^{-1}}\le C_R$ on a
neighbourhood of $\cM$, and let $\partial_1R,\partial_2R$ be Lipschitz there
with constant $C_R'$. If $\sup_{x\in\cU}\norm{r(x)}\le\delta$ and
$\sup_{x\in\cU}\norm{D_xr(x)}\le\delta'$, then
\begin{equation}
\norm{\cG_\theta-\cF}_{\cU}\le C_R\,\delta,\qquad
\norm{D_x\cG_\theta}\le\alpha+C_R\,\delta'+C_RC_R'\,\delta ,
\label{eq:defect}
\end{equation}
with $\alpha$ as in Theorem~\thmr{thm:contract}. In particular the rollout
remains contractive provided $C_R\delta'+C_RC_R'\delta<1-\alpha$.
\end{proposition}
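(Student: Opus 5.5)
The plan is a perturbation argument around the exact solution map $\cF$, which satisfies $R(\cF(x),x)=0$ on $\cU$. The map $\cF$ is well defined there by the implicit function theorem, since $\partial_1R$ is invertible with $\norm{(\partial_1R)^{-1}}\le C_R$. We treat $\cG_\theta(x)$ as an approximate root of $y\mapsto R(y,x)$ with residual $r(x)$, and compare it with the exact root $\cF(x)$. Two estimates are needed: the zeroth-order estimate from the size of $r$, and the first-order estimate from the size of $D_xr$ together with the Lipschitz continuity of $\partial_1R$ and $\partial_2R$. The contractivity conclusion then follows by adding $\alpha$ from Theorem~\thmr{thm:contract}.

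\emph{Zeroth order.} For fixed $x$, write $y=\cG_\theta(x)$ and $y^\star=\cF(x)$. By the mean-value theorem in integral form,
\[
r(x)=R(y,x)-R(y^\star,x)=\Big(\int_0^1\partial_1R\big(y^\star+t(y-y^\star),x\big)\,\dd t\Big)(y-y^\star).
\]
We assume the segment between $y^\star$ and $y$ lies in the neighbourhood where the bounds hold. The averaged Jacobian is then invertible with inverse bounded by roughly $C_R$, which gives $\norm{y-y^\star}\le C_R\delta$.

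Strictly, the averaged operator is a perturbation of $\partial_1R(y^\star,x)$ of size at most $C_R'\norm{y-y^\star}$, so one gets $C_R/(1-C_RC_R'\norm{y-y^\star})$. I would either absorb this factor into $C_R$ by shrinking $\cU$, or read $C_R$ as a uniform bound over the convex hull. Doing this cleanly, without circularity, is the main technical point. I would close it with a continuity (bootstrap) argument: the estimate holds for small defect, and the set of admissible $x$ is open and closed in $\cU$.

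\emph{First order.} Differentiate $r(x)=R(\cG_\theta(x),x)$ to get
\[
D_xr=\partial_1R(\cG_\theta,x)\,D_x\cG_\theta+\partial_2R(\cG_\theta,x).
\]
Differentiating the exact identity gives
\[
0=\partial_1R(\cF,x)\,D_x\cF+\partial_2R(\cF,x).
\]
Subtracting and writing $E=D_x\cG_\theta-D_x\cF$ yields
\[
\partial_1R(\cG_\theta,x)\,E
= D_xr-\big[\partial_1R(\cG_\theta,x)-\partial_1R(\cF,x)\big]D_x\cF
-\big[\partial_2R(\cG_\theta,x)-\partial_2R(\cF,x)\big].
\]
The bracketed terms are bounded by $C_R'\norm{\cG_\theta-\cF}\le C_R'C_R\delta$, using the zeroth-order step. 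The term involving $D_x\cF$ carries the factor $\norm{D_x\cF}\le\alpha<1$, so together they contribute at most $(1+\alpha)C_R'C_R\delta$. Inverting $\partial_1R$ then gives
\[
\norm{E}\le C_R\delta'+\mathcal{O}\big(C_R^2C_R'\delta\big).
\]
Finally, $\norm{D_x\cG_\theta}\le\norm{D_x\cF}+\norm{E}\le\alpha+C_R\delta'+\dots$.

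\emph{Expected obstacle.} This step is where the constants are delicate. The natural computation produces $C_R^2C_R'\delta$, up to the factor $(1+\alpha)$, rather than the stated $C_RC_R'\delta$. I would try to match the statement by normalising, that is, by interpreting $C_R'$ as the Lipschitz constant of $(\partial_1R)^{-1}\partial_jR$. Otherwise the constant would be stated as generic.

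The final clause is then immediate. If $C_R\delta'+C_RC_R'\delta<1-\alpha$, then $\sup_{\cU}\norm{D_x\cG_\theta}<1$. Since $\cG_\theta$ maps $\cU$ into itself, assumed as in (H3), the mean-value inequality along segments in $\cU$ (convex, or locally so) makes $\cG_\theta$ a strict contraction. The rollout error bound of Proposition~\thmr{prop:arstab} then applies with $\tilde L<1$.
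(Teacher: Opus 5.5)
You follow the paper's own route. For the first bound you compare $R(\cG_\theta(x),x)=r(x)$ with $R(\cF(x),x)=0$ through the first-argument derivative. For the second you differentiate both identities, subtract, control the change of evaluation point by the Lipschitz constant and the first estimate, invert $\partial_1R$, and close with Theorem~\thmr{thm:contract}.

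\textbf{First-order constant.} The obstacle you flag is real, and the paper does not overcome it. At exactly this step it asserts $C_R\delta'+C_RC_R'\delta$, whereas the computation under the stated hypotheses gives your $C_R\delta'+(1+\alpha)C_R^2C_R'\delta$. The stated form cannot follow from the stated hypotheses, because it is not invariant under rescaling the residual:
\begin{itemize}
\item Replacing $R$ by $\lambda R$ leaves $\cF$, $\cG_\theta$, $\alpha$ and $C_R\delta'$ unchanged, but multiplies $C_RC_R'\delta$ by $\lambda$.
\item Letting $\lambda\to0$, the claim would reduce to $\norm{D_x\cG_\theta}\le\alpha+C_R\delta'$, which fails for a constant defect.
\item For instance, $R(y,x)=\sqrt{8y}-x$ on $\cU=[1,2]$ has $\cF(x)=x^2/8$ and $\alpha=\sup_{\cU}|D_x\cF|=\tfrac12$. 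The map $\cG_\theta(x)=(x+c)^2/8$ has $r\equiv c$ and $\delta'=0$, yet its slope at $x=2$ is $\tfrac12+\tfrac c4$.
\end{itemize}
Your reinterpretation is the right repair. If $C_R'$ is the Lipschitz constant, in the first argument, of $(\partial_1R)^{-1}\partial_2R$, then
\[
D_x\cG_\theta-D_x\cF=\big[(\partial_1R)^{-1}\big](\cG_\theta(x),x)\,D_xr-\big[(\partial_1R)^{-1}\partial_2R\big](\cG_\theta(x),x)+\big[(\partial_1R)^{-1}\partial_2R\big](\cF(x),x).
\]
This gives exactly $C_R\delta'+C_R'\norm{\cG_\theta-\cF}_{\cU}\le C_R\delta'+C_RC_R'\delta$, and this reading is scale-invariant. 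With the hypotheses taken literally, the constant has to be $(1+\alpha)C_R^2C_R'$.

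\textbf{Zeroth order.} Here you are also more careful than the paper: its ``intermediate point'' does not exist for a vector-valued $R$. One caution about your fallback: a pointwise bound $\norm{(\partial_1R)^{-1}}\le C_R$ on the convex hull does not bound the inverse of the averaged Jacobian. Take $R(y)=e^{y_1+iy_2}-1$ viewed as a map of $\R^2$ with the Euclidean norm. Along the segment from $y^\star=0$ to $y=(0,\theta)$ the Jacobian is a rotation, so $C_R=1$. Yet $\norm{y-y^\star}=\theta>2\sin(\theta/2)=\norm{r}$. To get exactly $C_R\delta$ you must therefore read $C_R$ as a bound on that averaged inverse, or else keep your Neumann factor.

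The same example with $\theta=2\pi$ gives $r=0$ at distance $2\pi$ from $y^\star$. So the localisation your bootstrap supplies is essential, not cosmetic. The open--closed argument delivers it only when $\cU$ is connected and the estimate is already known at one point.
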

\begin{proof}
Since $R(\cF(x),x)=0$ and $R(\cG_\theta(x),x)=r(x)$, the mean-value form of the
first-argument derivative gives
$\cG_\theta(x)-\cF(x)=(\partial_1R)^{-1}r(x)$ evaluated at an intermediate
point, whence the first bound. Differentiating both identities in $x$ yields
$\partial_1R\,D_x\cG_\theta+\partial_2R=D_xr$ and
$\partial_1R\,D_x\cF+\partial_2R=0$, with the partials evaluated at
$\cG_\theta(x)$ and $\cF(x)$ respectively. Subtracting, using the Lipschitz
bound on the partials together with the first estimate to control the difference
of the evaluation points, and applying $\norm{(\partial_1R)^{-1}}\le C_R$ gives
$\norm{D_x\cG_\theta-D_x\cF}\le C_R\delta'+C_RC_R'\delta$; the triangle
inequality with Theorem~\thmr{thm:contract} completes the proof.
\end{proof}
The constants $C_R,C_R',\delta,\delta'$ are not estimated here, so
\eqr{eq:defect} is structural rather than certified: it identifies what the
penalty must control for the contraction to survive, and the evidence that it
does is the long-horizon rollout accuracy of \S\ref{sec:results}.

\subsubsection{Covariate-shift (noise) injection}
With probability $0.5$ per batch the fed-back state is perturbed,
$\tilde x=x+\varepsilon$, $\varepsilon\sim\cN(0,\sigma_{\mathrm{eff}}^2)$,
\begin{equation}
\sigma_{\mathrm{eff}}^2=\max\!\big(\sigma_{\mathrm{abs}}^2,\,
(\sigma_{\mathrm{rel}}\,|x|)^2,\,\sigma_{\mathrm{std}}^2\big),\qquad
(\sigma_{\mathrm{abs}},\sigma_{\mathrm{rel}},\sigma_{\mathrm{std}})
=(0.001,0.05,0.02).
\label{eq:noise}
\end{equation}
This is empirical-risk minimisation over a tube around the closed-loop
trajectory: it explicitly populates the off-manifold inputs that
Proposition~\thmr{prop:tv} warns about, and trains $\cG_\theta$ to contract
them, reducing $\tilde L$ in Proposition~\thmr{prop:arstab}. It is the
operator-learning analogue of scheduled sampling~\citep{bengio2015scheduled}
and DAgger-style distribution matching~\citep{venkatraman2015improving}.

\subsubsection{$K$-step truncated BPTT}
The exact AR gradient contains Jacobian products
$\prod_{j=m}^{n}D_x\cG_\theta(\hat x_j)$ across all $T=20$ steps, which
explode or vanish~\citep{pascanu2013difficulty}. TBPTT partitions
$\{0,\dots,T-1\}$ into windows $\cW_r$ of size $K=10$ and detaches the graph at
window boundaries, giving the estimator
\begin{equation}
g_{\mathrm{TBPTT}}(\theta)=\sum_{r}\grad_\theta\!\Big(\textstyle\sum_{n\in\cW_r}
\ell(\hat x_{n+1},x_{n+1})\Big).
\label{eq:tbptt}
\end{equation}
This corresponds to a window-end condition, $(n{+}1)\bmod K=0$ or $n=T{-}1$,
at which the accumulated window loss is back-propagated and the fed-back state
is detached. Gradients are clipped to unit norm and a NaN/Inf-safe update skips
the optimiser step on any non-finite gradient, essential for a 20-step
rollout where one bad batch could corrupt the weights.

\begin{proposition}[Geometric TBPTT bias, after \citealp{tallec2017unbiased}]
\label{prop:tbptt}
Assume that along the rollout (i) $\norm{\partial_x\ell}\le G_\ell$,
(ii) $\norm{\partial_\theta\cG_\theta}\le B_\theta$, and (iii)
$\norm{D_x\cG_\theta}\le\rho<1$, the last as furnished by
Theorem~\thmr{thm:contract} or Proposition~\thmr{prop:defect}. Then the
truncated estimator \eqr{eq:tbptt} has bias
\begin{equation}
\norm{\E\big[g_{\mathrm{TBPTT}}\big]-\grad_\theta\cL_{\mathrm{AR}}}
\;\le\;T\,G_\ell B_\theta\,\frac{\rho^{K}}{1-\rho}.
\label{eq:tbpttbias}
\end{equation}
\end{proposition}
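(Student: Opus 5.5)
We will write the full autoregressive gradient by the chain rule and compare it term by term with the truncated estimator \eqr{eq:tbptt}. Write $\hat x_{n+1}=\cG_\theta(u,\hat x_n,c_n,\tau_n)$, $J_j=D_x\cG_\theta(\hat x_j)$ and $\partial_\theta\cG_j=\partial_\theta\cG_\theta(\hat x_j)$. For the loss term at step $n$, the total derivative of $\hat x_{n+1}$ with respect to $\theta$ is
\begin{equation*}
\frac{\dd\hat x_{n+1}}{\dd\theta}=\sum_{m=0}^{n}\Big(\prod_{j=m+1}^{n}J_j\Big)\partial_\theta\cG_m ,
\end{equation*}
with the empty product equal to the identity and the products ordered right to left. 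Hence $\grad_\theta\cL_{\mathrm{AR}}=\E\sum_{n}\sum_{m\le n}\partial_x\ell_n\,(\prod_{j=m+1}^nJ_j)\,\partial_\theta\cG_m$.

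The TBPTT estimator detaches the fed-back state at window boundaries. Its forward trajectory is therefore identical to the untruncated rollout, since detaching changes only the graph and not the values. Its gradient keeps exactly the terms with $m$ in the same window $\cW_r$ as $n$. So $g_{\mathrm{TBPTT}}$ differs from the full integrand only by the terms with $m<\min\cW_r$, where $\cW_r\ni n$. Because the Jacobians are evaluated along the same trajectory, the difference is a pathwise identity. We therefore bound it pathwise and then take expectations, so no exchange subtleties arise beyond integrability, which the bounds (i)--(iii) supply.

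For a dropped term, let $s=n-m$ be the lag. The product $\prod_{j=m+1}^n J_j$ has $s$ factors, so its norm is at most $\rho^{s}$ by (iii), and by (i)--(ii) the term is at most $G_\ell B_\theta\rho^{s}$. The key counting step is to show that every dropped term has lag $s\ge K$.

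This holds only if $n$ sits at or after the start of its window, and $m$ lies before that start. It can fail for $n$ near the beginning of a window, where the lag to $m=\min\cW_r-1$ is just $1$. So the natural accounting gives, for each $n$, dropped lags $s\ge n-\min\cW_r+1$. Summing the geometric tail gives $G_\ell B_\theta\rho^{\,n-\min\cW_r+1}/(1-\rho)$. Summing over $n$ gives $T\,G_\ell B_\theta\rho/(1-\rho)$ in the worst case, not $\rho^K$.

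I expect this to be the main obstacle: the stated $\rho^K$ rate requires that the $K$ most recent steps be retained for every $n$. That is the \emph{sliding-window} convention of \citet{tallec2017unbiased}. Under that convention, the gradient at step $n$ back-propagates through the last $K$ steps. Every dropped term then has $s\ge K$, and
\begin{equation*}
\sum_{s\ge K}\rho^{s}=\frac{\rho^K}{1-\rho}.
\end{equation*}

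My plan is to state that convention explicitly. I would either interpret the window-end condition as retaining at least $K$ back-steps, or note that for the disjoint-window implementation the exponent $\rho^K$ should be read as $\rho^{\,n-\min\cW_r+1}$ averaged over the window, which is bounded by $\rho/(1-\rho)$ per term. Under the sliding convention, summing the per-$n$ bound $G_\ell B_\theta\rho^K/(1-\rho)$ over the $T$ loss terms and applying the triangle inequality inside the expectation yields \eqr{eq:tbpttbias}.
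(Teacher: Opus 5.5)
Your decomposition is the paper's own. The paper writes the exact gradient of \eqref{eq:ar} as a sum over pairs $(n,m)$, $m\le n$, of $\partial_x\ell(\hat x_{n+1})\bigl(\prod_{j=m+1}^{n}D_x\cG_\theta(\hat x_j)\bigr)\partial_\theta\cG_\theta(\hat x_m)$. It bounds each dropped term by $G_\ell B_\theta\rho^{\,n-m}$ and sums a geometric tail over at most $T$ values of $n$. It then closes the proof by asserting that truncation discards exactly the pairs whose lag $n-m$ exceeds $K$.

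Your objection to that step is correct. The estimator actually defined in \eqref{eq:tbptt} uses disjoint windows: it detaches when $(n+1)\bmod K=0$ (Algorithm~\ref{alg:train}, Fig.~\ref{fig:tbptt}). The retained pairs are therefore those with $m$ and $n$ in the same window, so at $n=K$ the lag-$1$ pair $m=K-1$ is dropped. The gap lies in the paper's proof, and it cannot be patched, because the inequality is false for that estimator. Take the scalar rollout $\hat x_{n+1}=\rho\hat x_n+\theta$ with $\hat x_0$ fixed, $\partial_x\ell\equiv G_\ell>0$, $B_\theta=1$ and $T=2K$. Every term is positive, and the exact bias is
\[
G_\ell\sum_{n=K}^{2K-1}\frac{\rho^{\,n-K+1}\bigl(1-\rho^{K}\bigr)}{1-\rho}
=G_\ell\,\frac{\rho\bigl(1-\rho^{K}\bigr)^{2}}{(1-\rho)^{2}}.
\]
At $\rho=1/2$, $K=10$ this is about $2G_\ell$, against the claimed $TG_\ell\rho^K/(1-\rho)\approx0.04\,G_\ell$.

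Your sliding-window convention, in which each loss term is back-propagated through its own last $K$ steps, is exactly the unstated hypothesis that makes the paper's counting step true. Under it, your argument and the paper's go through verbatim. For the scheme as implemented, your per-$n$ accounting is the right one. Sum $\rho^{\,n-\min\cW_r+1}/(1-\rho)$ over each window after the first; the first window drops nothing, since $\hat x_0$ is detached. This gives a sharper bound that the example above shows is essentially tight:
\[
\norm{\E\bigl[g_{\mathrm{TBPTT}}\bigr]-\grad_\theta\cL_{\mathrm{AR}}}
\le\bigl(\lceil T/K\rceil-1\bigr)\,G_\ell B_\theta\,\frac{\rho}{(1-\rho)^{2}}.
\]
This is first order in $\rho$, not of order $\rho^K$, so the ``effectively unbiased'' remark after the proposition does not follow for $T=20$, $K=10$.

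In your write-up, state the sliding-window convention as an explicit hypothesis, or replace the bound by the one above. Do not ``read $\rho^K$ as $\rho^{\,n-\min\cW_r+1}$ averaged over the window'': averaging does not produce a $\rho^K$ factor, and that sentence is the one imprecise point in your proposal.
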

\begin{proof}
The exact gradient of \eqr{eq:ar} is a sum over pairs $(n,m)$ with $m\le n$ of
terms $\partial_x\ell(\hat x_{n+1})\big(\prod_{j=m+1}^{n}D_x\cG_\theta(\hat
x_j)\big)\partial_\theta\cG_\theta(\hat x_m)$. Truncation discards exactly those
pairs whose lag $n-m$ exceeds the window length $K$. By (i)--(iii) each
discarded term is bounded by $G_\ell B_\theta\rho^{\,n-m}$, and summing the
geometric tail $\sum_{\ell\ge K}\rho^{\ell}=\rho^K/(1-\rho)$ over the at most
$T$ values of $n$ gives \eqr{eq:tbpttbias}.
\end{proof}
For $\rho\lesssim0.5$, $K=10$ and $T=20$ the factor $\rho^K/(1-\rho)$ is of order
$2\times10^{-3}$, so the truncated gradient is effectively unbiased for the
investigated problem, and the memory of the backward graph is bounded to $K$
steps rather than $T$.

\begin{proposition}[Rollout error bound]\label{prop:rollout}
Assume the tube Lipschitz condition \eqr{eq:tube} with constant $L$, let
$e_n=\norm{\cG_\theta(u,x_n,c_n,\tau_n)-x_{n+1}}$ and $\delta_n=\hat x_n-x_n$.
Then $\norm{\delta_n}\le L^{\,n}\norm{\delta_0}
+\sum_{j=0}^{n-1}L^{\,n-1-j}e_j$ for all $n\le T$; in particular
$\norm{\hat x_n-x_n}\le\sum_{j=0}^{n-1}L^{\,n-1-j}e_j$ when $\hat x_0=x_0$.
\end{proposition}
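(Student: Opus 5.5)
The bound follows from a one-step recursion for the error $\delta_n=\hat x_n-x_n$, unrolled by induction on $n$. I write the closed-loop iterate as $\hat x_{n+1}=\cG_\theta(u,\hat x_n,c_n,\tau_n)$ and the truth as $x_{n+1}$. I then add and subtract the surrogate evaluated at the \emph{true} state:
\begin{equation*}
\delta_{n+1}=\big[\cG_\theta(u,\hat x_n,c_n,\tau_n)-\cG_\theta(u,x_n,c_n,\tau_n)\big]
+\big[\cG_\theta(u,x_n,c_n,\tau_n)-x_{n+1}\big].
\end{equation*}
The second bracket has norm exactly $e_n$ by definition. For the first bracket I use the tube Lipschitz condition \eqr{eq:tube}. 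It applies because both $x_n$ and $\hat x_n$ lie in $\cT$, which is assumed for all $0\le n<T$. This gives $\norm{\delta_{n+1}}\le L\norm{\delta_n}+e_n$.

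Next I unroll the recursion by induction. The base case $n=0$ is trivial, since the empty sum vanishes and $L^0\norm{\delta_0}=\norm{\delta_0}$. For the inductive step, suppose $\norm{\delta_n}\le L^n\norm{\delta_0}+\sum_{j=0}^{n-1}L^{n-1-j}e_j$. Substituting this into the one-step inequality and using $L\ge0$ gives
\begin{equation*}
\norm{\delta_{n+1}}\le L^{n+1}\norm{\delta_0}+\sum_{j=0}^{n-1}L^{n-j}e_j+e_n
=L^{n+1}\norm{\delta_0}+\sum_{j=0}^{n}L^{n-j}e_j,
\end{equation*}
which is the claim at $n+1$. This holds for every $n\le T$. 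The special case $\hat x_0=x_0$ sets $\delta_0=0$ and removes the first term.

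There is no real obstacle; this is a discrete Gr\"onwall argument. The only point to handle carefully is that the Lipschitz estimate is only local. The proof must use the standing hypothesis that both trajectories remain in the tube $\cT$; it does not derive this. That matches the statement, and it is the same hypothesis that Proposition~\thmr{prop:arstab} verifies a posteriori via its radius condition. Taking expectations of the pathwise bound and using linearity then yields the expected form used in \eqr{eq:tvbound}, which closes the forward reference in the proof of Proposition~\thmr{prop:tv}.
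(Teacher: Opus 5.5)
Your proof is correct and follows the paper's own argument: add and subtract $\cG_\theta(u,x_n,c_n,\tau_n)$, apply the tube Lipschitz bound to get $\norm{\delta_{n+1}}\le L\norm{\delta_n}+e_n$, then induct, taking the membership of both trajectories in $\cT$ as a standing hypothesis rather than deriving it. One small correction: Proposition~\thmr{prop:arstab} \emph{assumes} the tube (radius) condition rather than verifying it, and the paper says only that it is checked a posteriori against the errors reported in the results.
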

\begin{proof}
Adding and subtracting $\cG_\theta(u,x_n,c_n,\tau_n)$ and using \eqr{eq:tube},
$\norm{\delta_{n+1}}\le\norm{\cG_\theta(u,\hat x_n,\cdot)-\cG_\theta(u,x_n,\cdot)}
+e_n\le L\norm{\delta_n}+e_n$. The claim follows by induction on $n$, the
hypothesis of \eqr{eq:tube} being available at each step because both
trajectories remain in $\cT$.
\end{proof}

\subsubsection{Training algorithm}
\begin{algorithm}[H]
\caption{\textsc{ARTrainingStep}, autoregressive PINO with $K$-TBPTT}
\label{alg:train}
\small
\begin{algorithmic}[1]
\Require composite $\cG_\theta$; $T{=}20$, $K{=}10$; weights $w_P,w_W,w_O,w_Y$; $\lambda_R$; noise $(\sigma,p)$
\State forward ini-state heads $\to(\hat p_0,\hat S_{w,0},\hat S_{o,0})$; \textbf{backward}$(\ell_{\mathrm{ini}})$; \textbf{detach}
\State $\ell_{\mathrm{win}}\gets0$
\For{$n=0,\dots,T-1$}
  \State seed: $n{=}0$ use detached ini-states; $n{>}0$ use $(\hat p_{n-1},\hat S_{w,n-1},\hat S_{o,n-1})$, w.p.\ $p$ add $\varepsilon$~\eqr{eq:noise}
  \State $\mathbf U^{(n)}\gets\mathrm{cat}(\log K,\phi,f,\hat p_{n-1},\hat S_{w,n-1},\hat S_{o,n-1},Q_n,Q_{w,n},\Delta t_n,t_n)$
  \State $\hat p_n,\hat S_{w,n},\hat S_{o,n}\gets\cG_\theta(\mathbf U^{(n)})$ \Comment{pressure/saturation/oil heads}
  \State $\ell_n\gets w_P\ell_P+w_W\ell_W+w_O\ell_O$
  \If{PINO and $\mathrm{epoch}\bmod7=0$}
     \State assemble $R$ by the discrete black-oil residual operator; $\ell_n\mathrel{+}=\lambda_R(\norm{R_P}^2+\norm{R_S}^2)$
  \EndIf
  \State $\ell_{\mathrm{win}}\mathrel{+}=\ell_n$
  \If{$(n{+}1)\bmod K=0$ \textbf{or} $n=T{-}1$}
     \State \textbf{backward}$(\ell_{\mathrm{win}}/T)$; $\ell_{\mathrm{win}}\gets0$; \textbf{detach}$\hat x_n$ \Comment{TBPTT cut}
  \EndIf
\EndFor
\State assemble Peaceman features~\eqr{eq:peacefeat} at $(i_w,j_w)$; forward 5 heads; \textbf{backward}(well losses)
\State clip grads to unit norm; \textsc{SafeStep} all optimisers (skip on NaN/Inf); step LR schedulers
\State \Return total loss
\end{algorithmic}
\end{algorithm}

\subsubsection{The truncated-BPTT computational graph}
Figure~\ref{fig:tbptt} renders the closed-loop graph and its TBPTT cuts. The
state is fed back at every step (solid arrows); the gradient flows backward
only within each window of $K=10$ steps, the graph being detached at the
window boundary (dashed cut), bounding the retained activation memory to $K$
steps and the Jacobian-product length in~\eqr{eq:tbptt} to $K$ factors.

\begin{figure}[h]\centering
\resizebox{0.92\textwidth}{!}{%
\begin{tikzpicture}[node distance=1.05cm,every node/.style={font=\footnotesize}]
\node (x0)[startstop]{$\hat x_0$};
\node (g1)[proc,right=of x0]{$\cG_\theta$};
\node (x1)[startstop,right=of g1]{$\hat x_1$};
\node (gd)[right=0.7cm of x1]{$\cdots$};
\node (g10)[proc,right=0.6cm of gd]{$\cG_\theta$};
\node (x10)[startstop,right=of g10]{$\hat x_{10}$};
\node (g11)[proc,right=of x10]{$\cG_\theta$};
\node (x11)[startstop,right=of g11]{$\hat x_{11}$};
\node (ge)[right=0.7cm of x11]{$\cdots$};
\node (x20)[startstop,right=0.6cm of ge]{$\hat x_{20}$};
\draw[arrow](x0)--(g1);\draw[arrow](g1)--(x1);\draw[arrow](x1)--(gd);
\draw[arrow](gd)--(g10);\draw[arrow](g10)--(x10);
\draw[arrow,dashed,red](x10)--node[above]{\scriptsize detach (TBPTT cut)}(g11);
\draw[arrow](g11)--(x11);\draw[arrow](x11)--(ge);\draw[arrow](ge)--(x20);
\begin{scope}[on background layer]
\node[fit=(x0)(g1)(x1)(gd)(g10)(x10),draw=blue!40,fill=blue!6,rounded corners,inner sep=5pt,label=below:{\scriptsize window $\cW_0$ (backprop)}]{};
\node[fit=(g11)(x11)(ge)(x20),draw=blue!40,fill=blue!6,rounded corners,inner sep=5pt,label=below:{\scriptsize window $\cW_1$ (backprop)}]{};
\end{scope}
\end{tikzpicture}}
\caption{Autoregressive rollout with $K{=}10$ truncated BPTT. State is fed back
every step; the backward graph is detached at the window boundary (red), so
gradients propagate at most $K$ steps and memory is bounded.}
\label{fig:tbptt}
\end{figure}

\subsubsection{Three-phase convergence analysis}
Table~\ref{tab:loss} extracts per-component training losses at key epochs from
the forward log, revealing three regimes. \emph{Phase I (1--50):} the total
loss falls $\sim\!200\times$, dominated by the pressure and initial-state heads
($\ell_P:2.86\!\times\!10^4\!\to\!50$), as the FNO's leading Fourier modes
capture the smooth elliptic pressure structure, the spectral-error term of
Theorem~\thmr{thm:approx} vanishing first. \emph{Phase II (50--300):} the
saturation heads dominate the reduction ($\ell_{S_w}:235\!\to\!27$) as the
injector front is resolved across the $K{=}10$ window. \emph{Phase III
(300--700):} Peaceman refinement under TBPTT gradient variance, with the
characteristic non-monotone excursion at epoch $500$
($\mathrm{Var}[g_{\mathrm{TBPTT}}]\propto\sigma_e^2/K$) damped by the noise
injection.

\begin{table}[h]\centering\small
\caption{Per-component training loss at key epochs (from the forward log).
$\ell_\Pi$ is the summed Peaceman loss; $\eta$ the learning rate. The columns
shown are a representative subset of the $11$ weighted heads, so they need not
sum to the total $\cL^{(e)}$, which also includes the water/oil initial-state
heads and the seven PINO residual terms.}
\label{tab:loss}
\begin{tabular}{rccccccc}
\toprule
Epoch & $\cL^{(e)}$ & $\ell_P$ & $\ell_{S_w}$ & $\ell_{S_o}$ & $\ell_{P,\mathrm{ini}}$ & $\ell_\Pi$ & $\eta$\\
\midrule
1   & $1.69\!\times\!10^5$ & $2.86\!\times\!10^4$ & $1.37\!\times\!10^4$ & $2.11\!\times\!10^4$ & $2.85\!\times\!10^4$ & $77.5$ & $2.00\!\times\!10^{-3}$\\
50  & $8.38\!\times\!10^2$ & $50.3$ & $235.4$ & $109.9$ & $9.93$ & $5.27$ & $2.00\!\times\!10^{-3}$\\
200 & $4.53\!\times\!10^2$ & $25.3$ & $34.8$  & $14.8$  & $2.14$ & $2.76$ & $2.00\!\times\!10^{-3}$\\
300 & $1.24\!\times\!10^2$ & $14.4$ & $27.0$  & $14.3$  & $5.46$ & $2.74$ & $1.99\!\times\!10^{-3}$\\
500 & $7.57\!\times\!10^1$ & $21.4$ & $19.1$  & $9.42$  & $1.33$ & $1.68$ & $1.99\!\times\!10^{-3}$\\
700 & $2.49\!\times\!10^1$ & $6.75$ & $3.85$  & $3.36$  & $0.98$ & $1.39$ & $1.98\!\times\!10^{-3}$\\
\bottomrule
\end{tabular}
\end{table}

\subsubsection{Forward training results}
Training ran $700$ epochs on $4\times$GB200 with DDP/NCCL,
$\approx\!19.9$~s/epoch, $3$h$58$m wall-clock, final loss $24.90$, best
validation $23.57$. Figure~\ref{fig:metrics} shows all 13 MLflow-tracked loss
components. Convergence is three-phase: a rapid $\sim\!200\times$ spectral
collapse of the elliptic and initial-state heads within $\sim\!50$ epochs (the
low-frequency pressure structure is captured first, consistent with
Theorem~\thmr{thm:approx}); saturation-front learning over epochs $50$--$300$;
and Peaceman refinement to epoch $700$ under the TBPTT gradient variance
$\mathrm{Var}[g_{\mathrm{TBPTT}}]\propto\sigma_e^2/K$.

\begin{figure}[h]\centering
\includegraphics[width=0.74\linewidth]{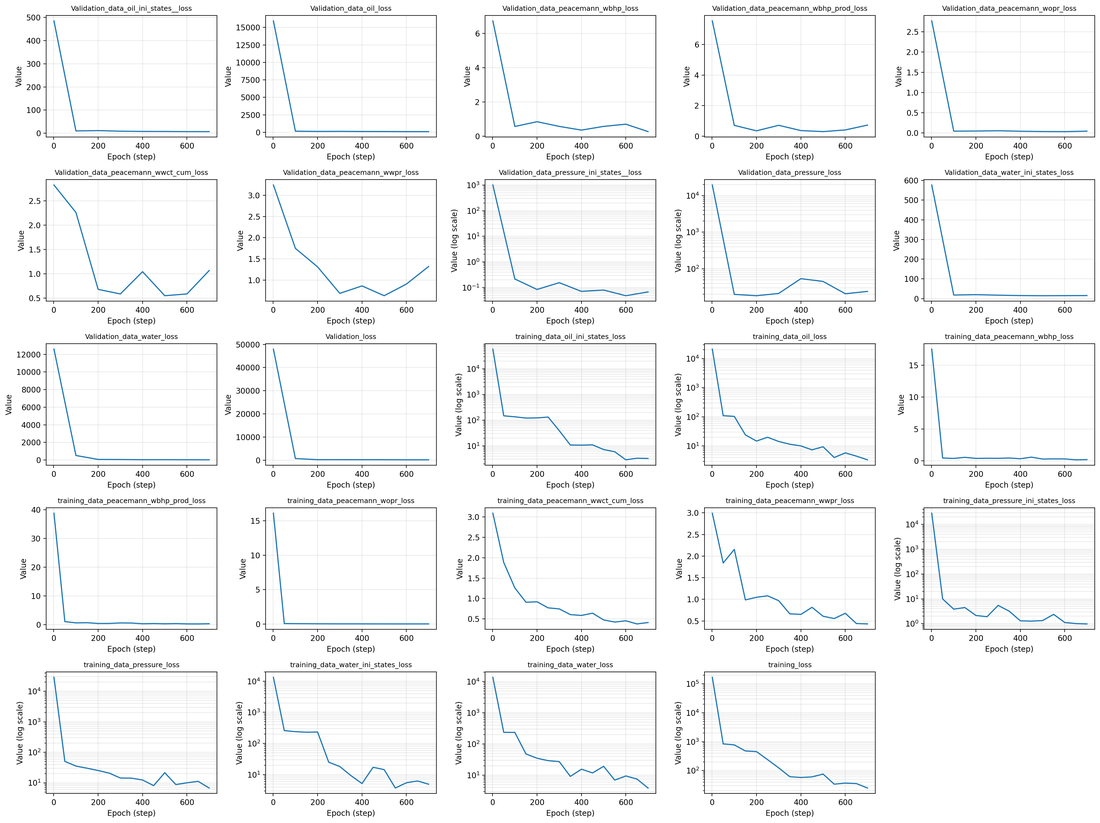}
\caption{MLflow training/validation curves for all 13 PhyNeMo loss components
over 700 epochs. All converge within $\approx200$ epochs;
oscillations reflect the $K{=}10$ TBPTT window schedule and the AdamW
learning-rate decay $\gamma=0.99999$.}
\label{fig:metrics}
\end{figure}

\subsection{Closed-loop inference}
\label{sec:inference}

At inference the surrogate runs in gradient-free mode with \emph{zero} teacher
forcing: the state at step $n$ is purely the model output of step $n{-}1$
(Algorithm~\ref{alg:inference}). GPU memory is recycled when usage exceeds
$90\%$, enabling $N_e=190$ members on a single 192~GB HBM3e GPU. The FNO
internal layout $(B,T,n_z,n_x,n_y)$ is permuted to the analysis layout
$(B,T,n_x,n_y,n_z)$. A stochastic variant injects Gaussian state
perturbations for UQ rollouts, realising an
implicit ensemble around the deterministic trajectory.

\begin{algorithm}[H]
\caption{\textsc{ForwardModelEnsemble}, closed-loop rollout}
\label{alg:inference}
\small
\begin{algorithmic}[1]
\Require $N_e$ realisations $(K_i,\phi_i,f_i)$; trained heads; $\{t_n,\Delta t_n\}$
\For{$i=1,\dots,N_e$}
  \State $(\hat p^{(-1)},\hat S_w^{(-1)},\hat S_o^{(-1)})\gets$ ini-state heads$(K_i,\phi_i,f_i)$
  \For{$n=0,\dots,T-1$}
     \State $\mathbf U^{(n)}\gets\mathrm{cat}(K_i,\phi_i,\hat p^{(n-1)},\hat S_w^{(n-1)},\hat S_o^{(n-1)},f_i,Q_n,Q_{w,n},\Delta t_n,t_n)$
     \State $\hat p^{(n)},\hat S_w^{(n)},\hat S_o^{(n)}\gets\mathrm{clamp}(\cG_\theta(\mathbf U^{(n)}),0,1)$; store; update state
     \If{GPU mem $>90\%$} \State empty cache \EndIf
  \EndFor
\EndFor
\State permute to analysis layout; denormalise; extract well features; forward 5 Peaceman heads
\State \Return $P,S_w,S_o$, well arrays $\{$WOPR,WWPR,WBHP,WBHP\_PROD,WWCT\_CUM$\}$
\end{algorithmic}
\end{algorithm}

Figure~\ref{fig:cmp} compares surrogate and FV well responses on a
member of the $5$-realisation held-out test set (\S\ref{sec:results});
Figure~\ref{fig:r2l2} reports time-resolved field and seismic accuracy; Figure~\ref{fig:dyn} shows 3-D field snapshots spanning the full
production history. Across the 20-step rollout the field errors stay small
(pressure differences $<\!2500$~kPa, $<\!4\%$ of range; saturation errors
localised at the injector front), consistent with
Proposition~\thmr{prop:rollout}.

\begin{figure}[h]\centering
\begin{subfigure}{0.33\linewidth}\includegraphics[width=\linewidth]{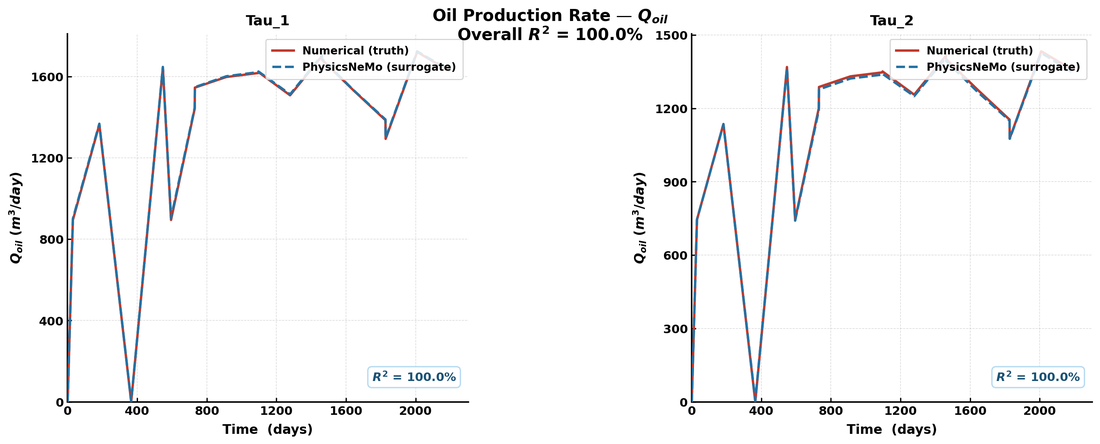}\caption{WOPR}\end{subfigure}
\begin{subfigure}{0.33\linewidth}\includegraphics[width=\linewidth]{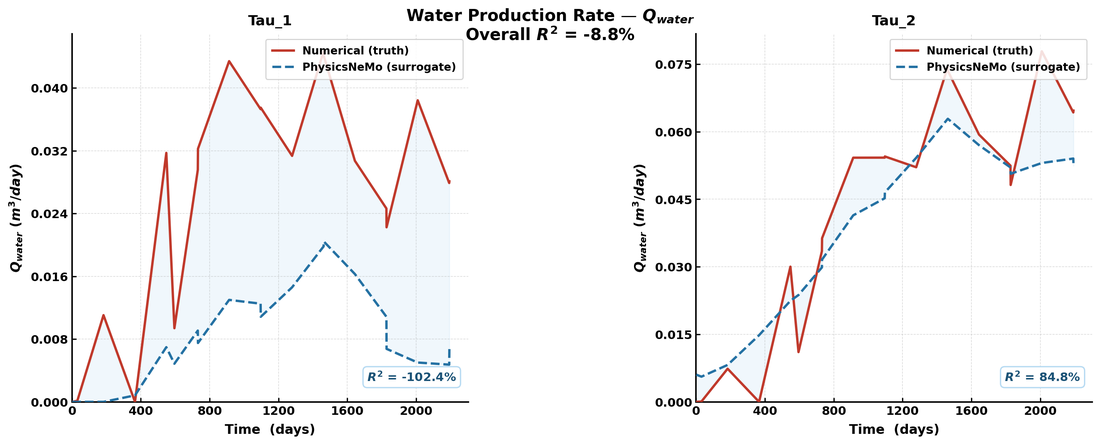}\caption{WWPR}\end{subfigure}
\begin{subfigure}{0.33\linewidth}\includegraphics[width=\linewidth]{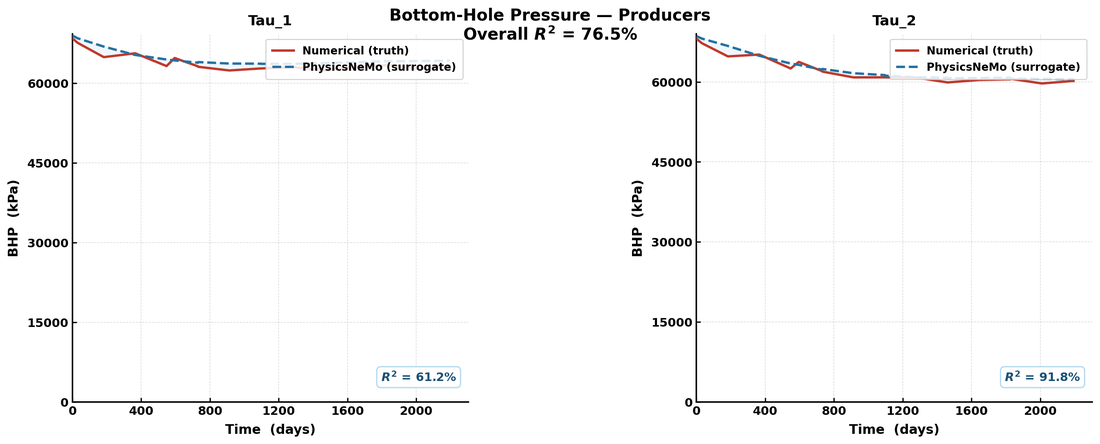}\caption{WBHP (prod.)}\end{subfigure}
\caption{Surrogate (FNO Peaceman head) vs.\ FV reference well responses for
\texttt{Tau\_1}/\texttt{Tau\_2}: WOPR $R^2\!>\!0.9999$, WBHP\_PROD $R^2\!=\!0.98$
(reported to finite precision; we avoid the ``$100\%$'' shorthand).}
\label{fig:cmp}
\end{figure}

\begin{figure}[h]\centering
\begin{subfigure}{0.49\linewidth}\includegraphics[width=\linewidth]{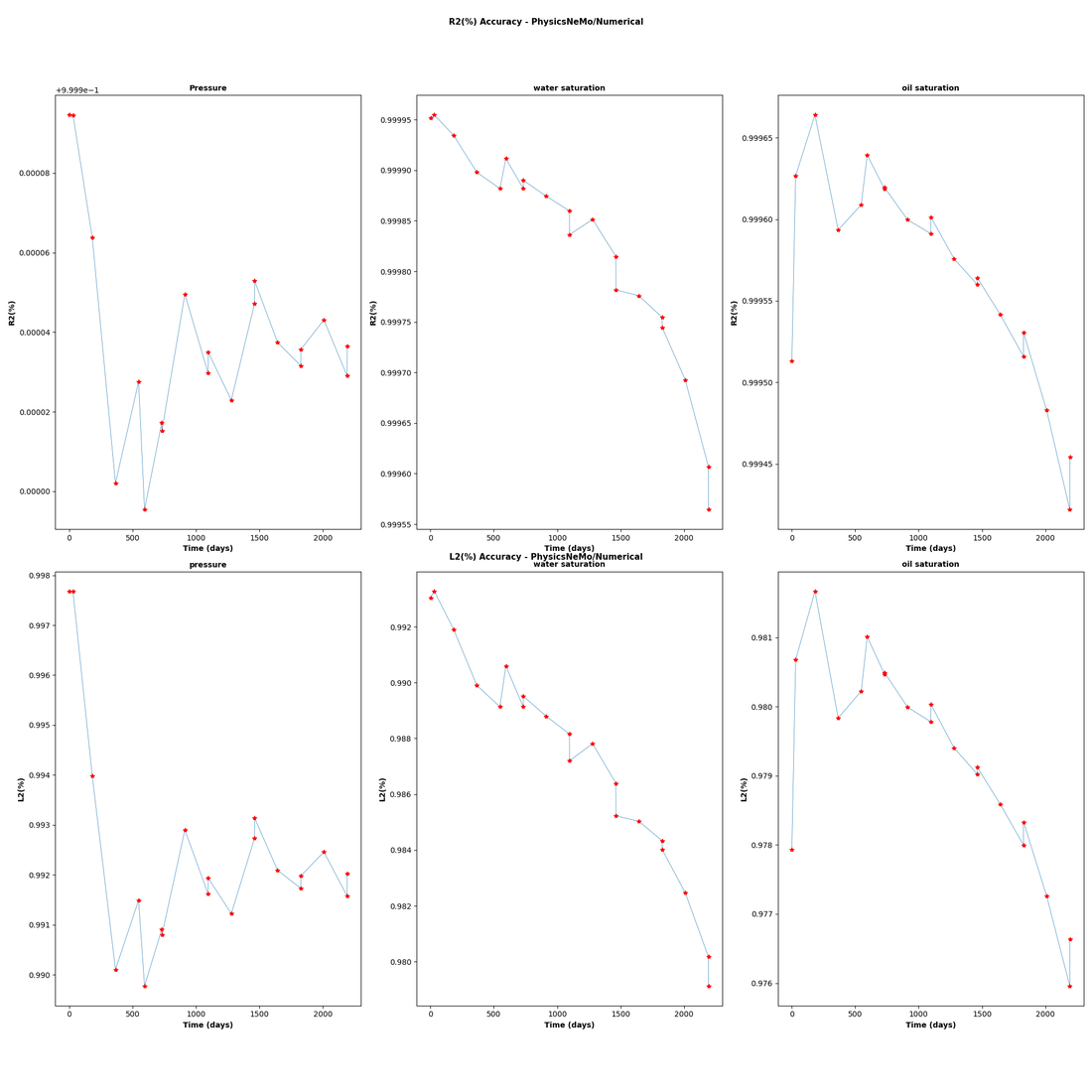}\caption{Production fields}\end{subfigure}
\begin{subfigure}{0.49\linewidth}\includegraphics[width=\linewidth]{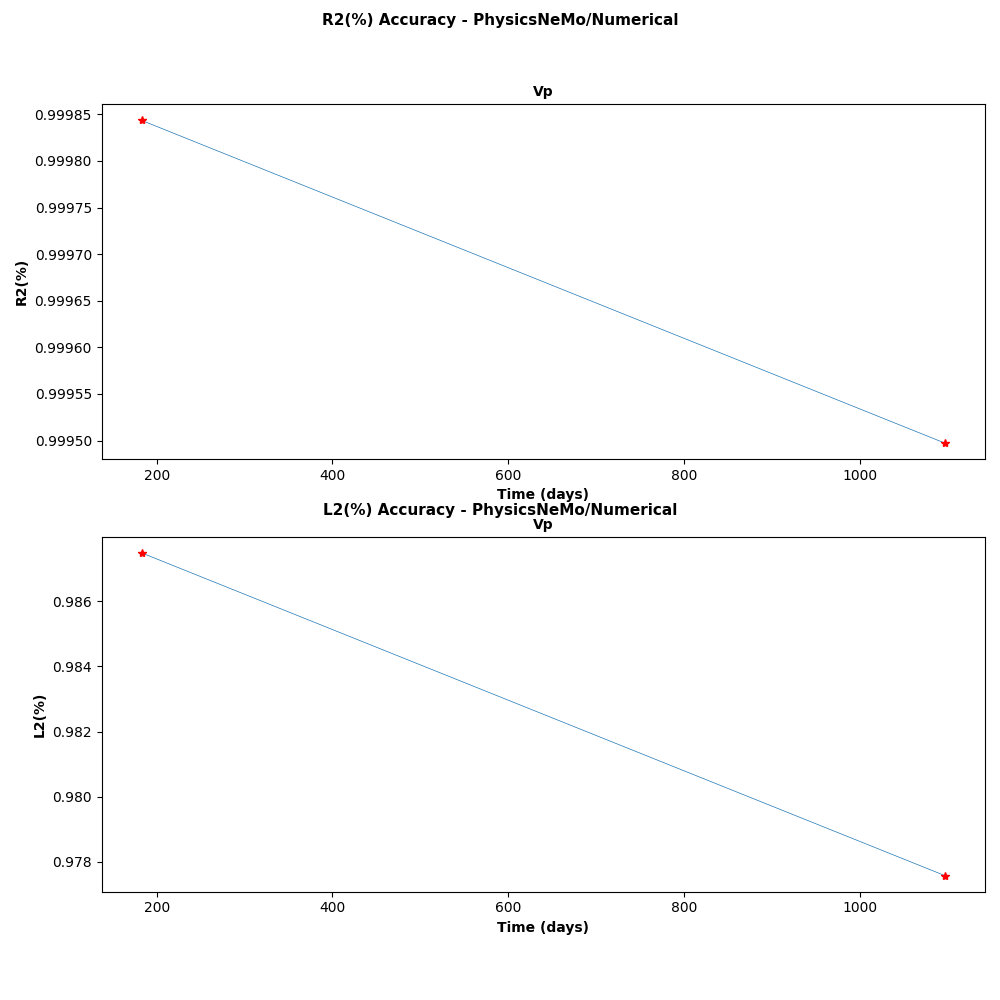}\caption{Seismic $\Delta V_p$}\end{subfigure}
\caption{Time-resolved $R^2$/$L^2$ for production fields (left) and the
Gassmann-derived 4-D seismic observable (right). Pressure/saturation
$R^2>0.999$; the seismic delta is reproduced with high fidelity, validating
the petro-elastic forward channel of \S\ref{sec:seismic}.}
\label{fig:r2l2}
\end{figure}

\begin{figure}[h]\centering
\begin{subfigure}{0.49\linewidth}\fbox{\includegraphics[width=\linewidth]{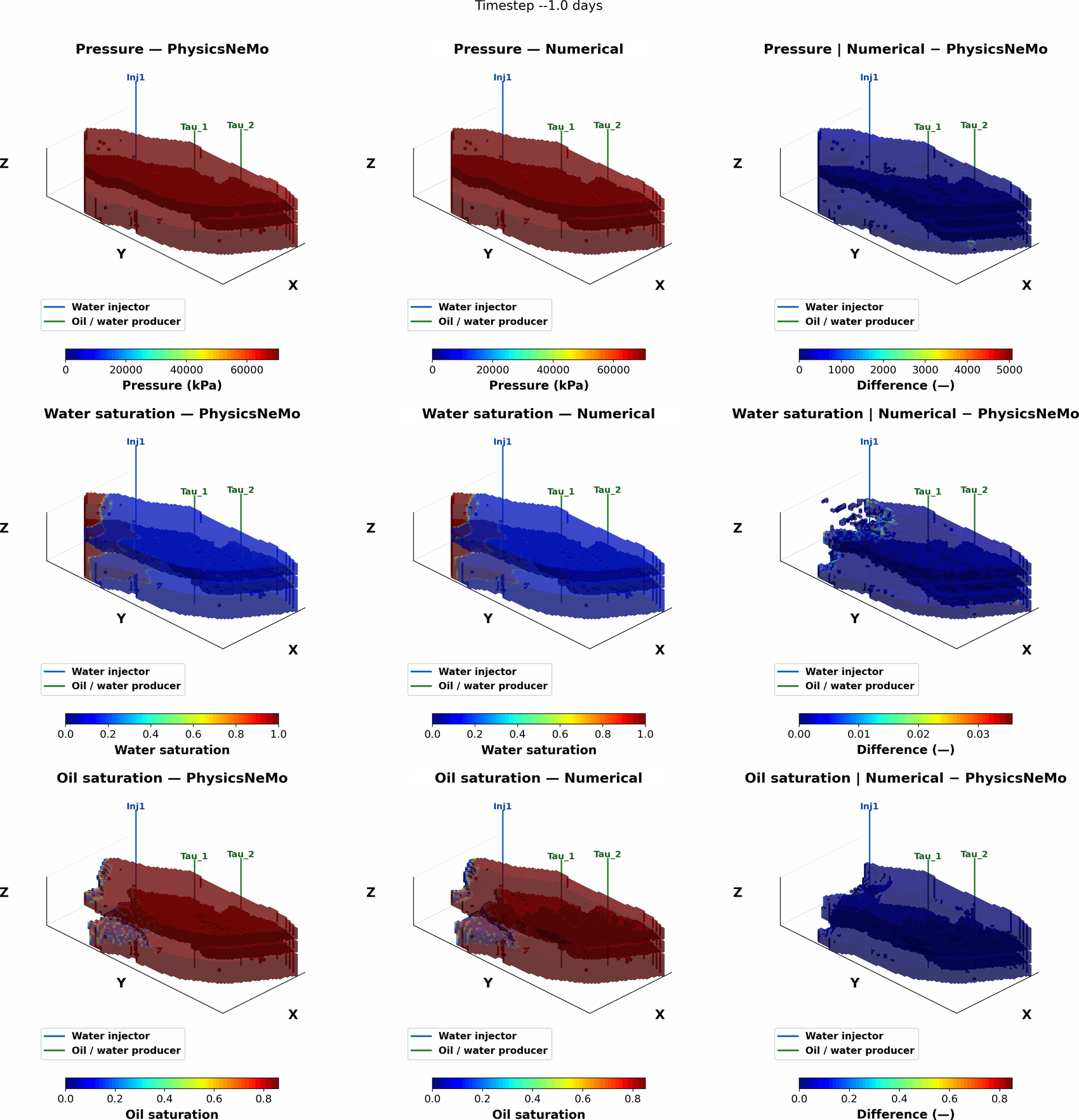}}\caption{$t\!\approx\!1$~d (initial)}\end{subfigure}\hfill
\begin{subfigure}{0.49\linewidth}\fbox{\includegraphics[width=\linewidth]{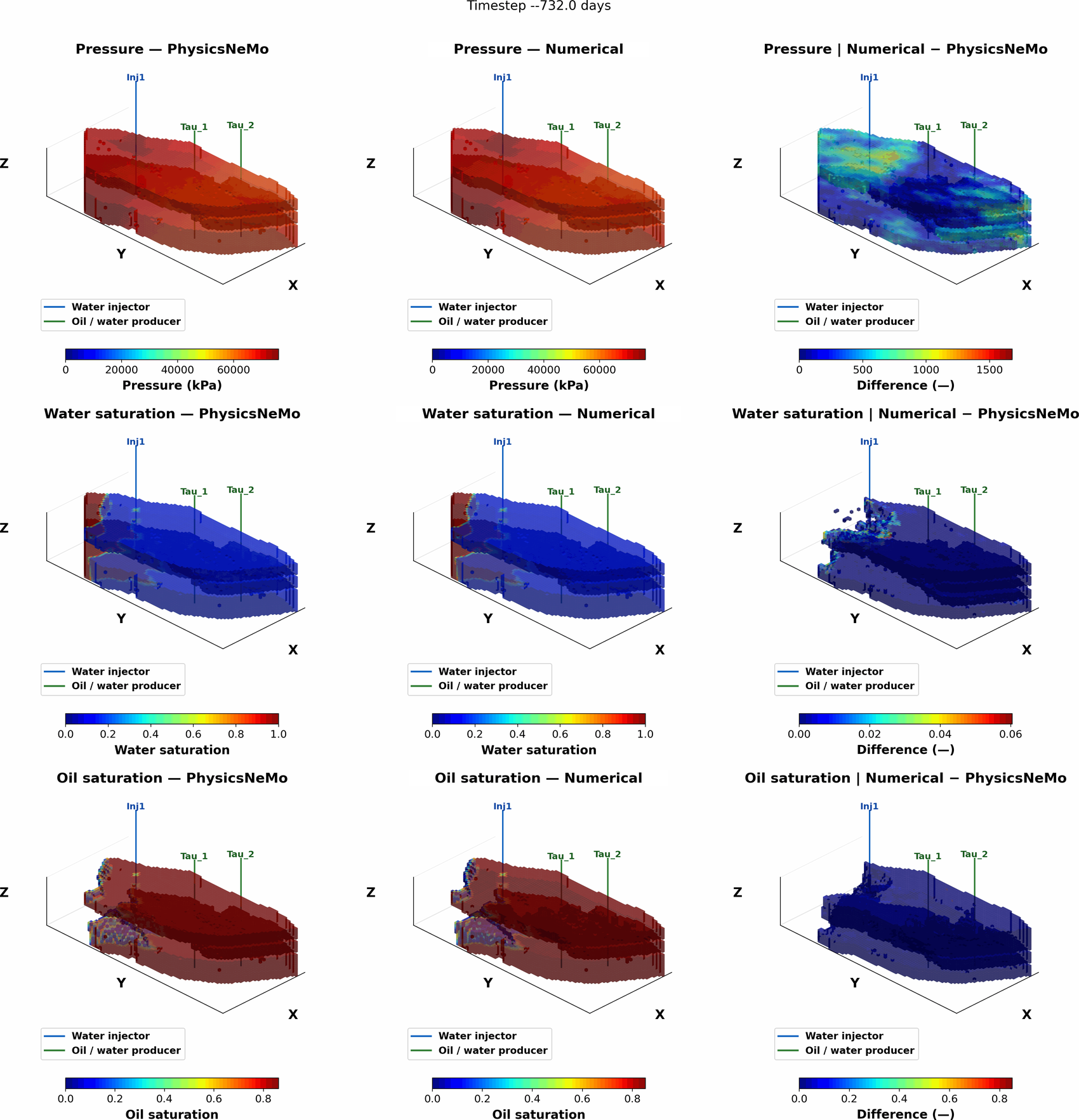}}\caption{$732$~d}\end{subfigure}\\[3pt]
\begin{subfigure}{0.49\linewidth}\fbox{\includegraphics[width=\linewidth]{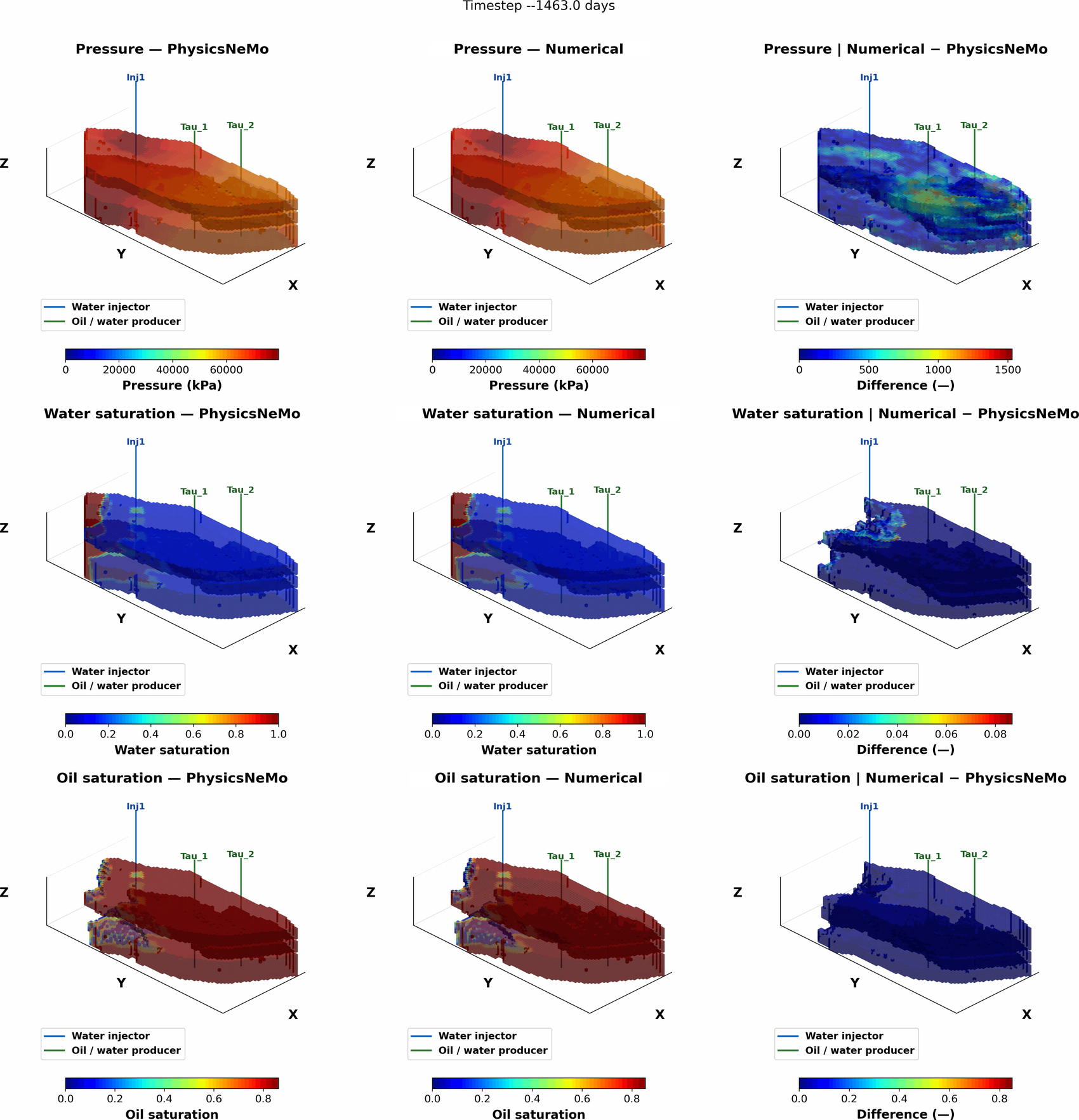}}\caption{$1463$~d}\end{subfigure}\hfill
\begin{subfigure}{0.49\linewidth}\fbox{\includegraphics[width=\linewidth]{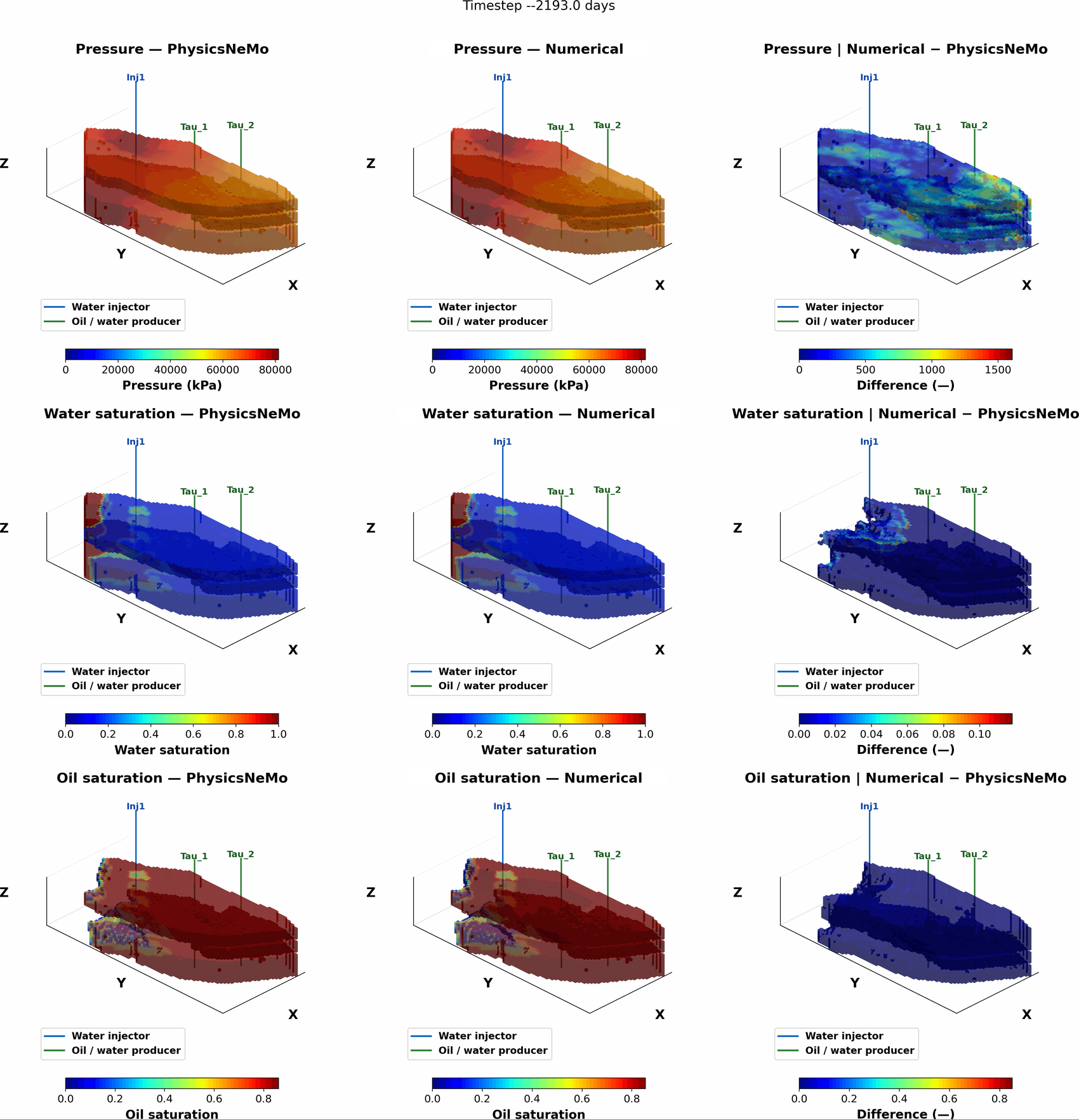}}\caption{$2193$~d (final)}\end{subfigure}
\caption{Three-dimensional field comparison at four report steps spanning the
$\approx\!2200$-day reservoir production history (extracted from the
rollout animation). Within each panel the rows are pressure, water saturation
and oil saturation, and the columns are the autoregressive PINO surrogate
(left), the finite-volume reference (centre) and their absolute difference
(right); the injector \texttt{Inj1} and producers \texttt{Tau\_1},
\texttt{Tau\_2} are marked. Pressure differences stay $<\!2500$~kPa and the
saturation error remains localised at the advancing injector front, growing only
mildly over the rollout: the signature of the bounded Lipschitz amplification
of Proposition~\thmr{prop:rollout} under the $K{=}10$ TBPTT/PINO contraction
(Theorem~\thmr{thm:contract}).}
\label{fig:dyn}
\end{figure}

\begin{figure}[h]\centering
\begin{subfigure}{0.49\linewidth}\fbox{\includegraphics[width=\linewidth]{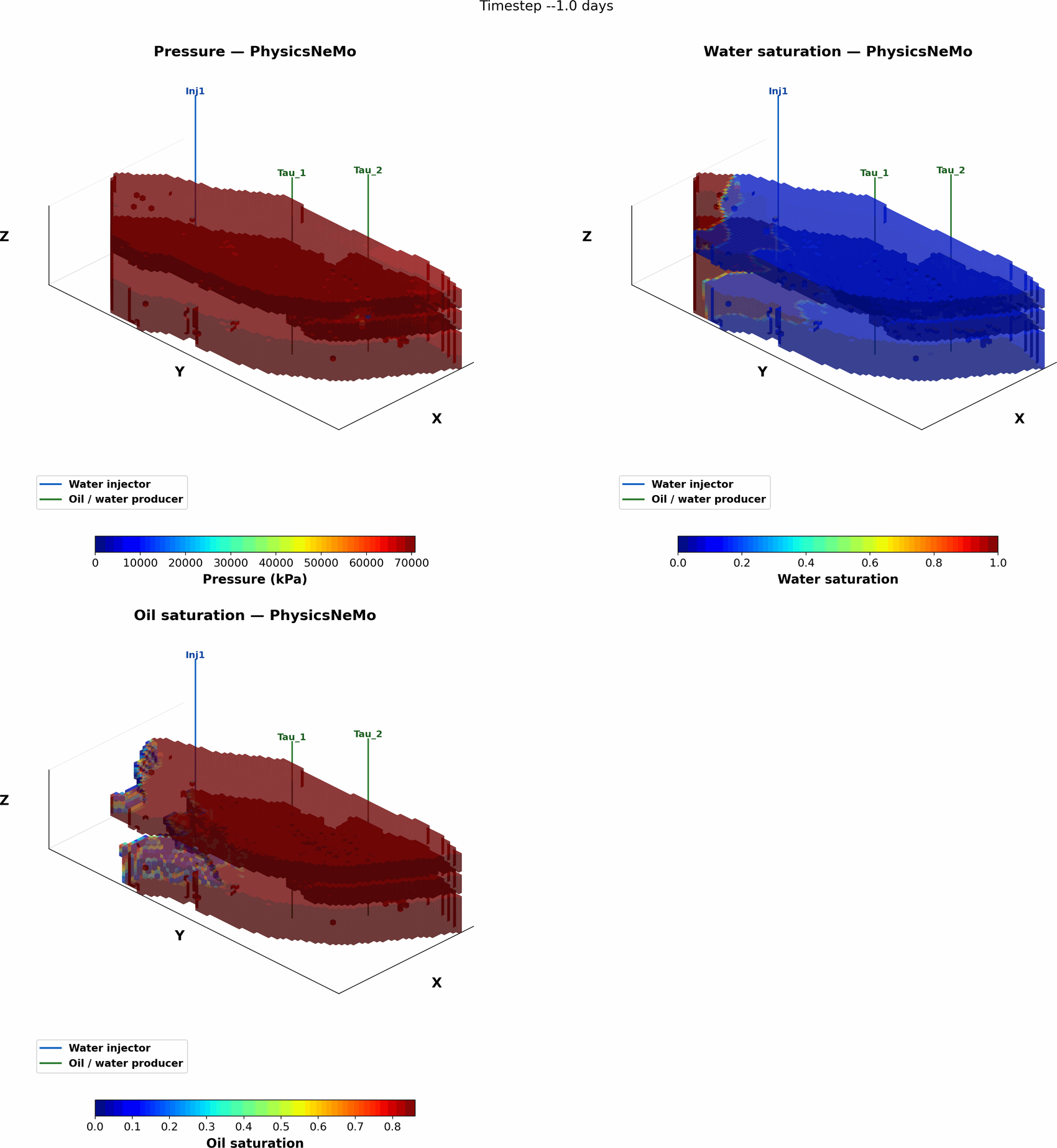}}\caption{$t\!\approx\!1$~d}\end{subfigure}\hfill
\begin{subfigure}{0.49\linewidth}\fbox{\includegraphics[width=\linewidth]{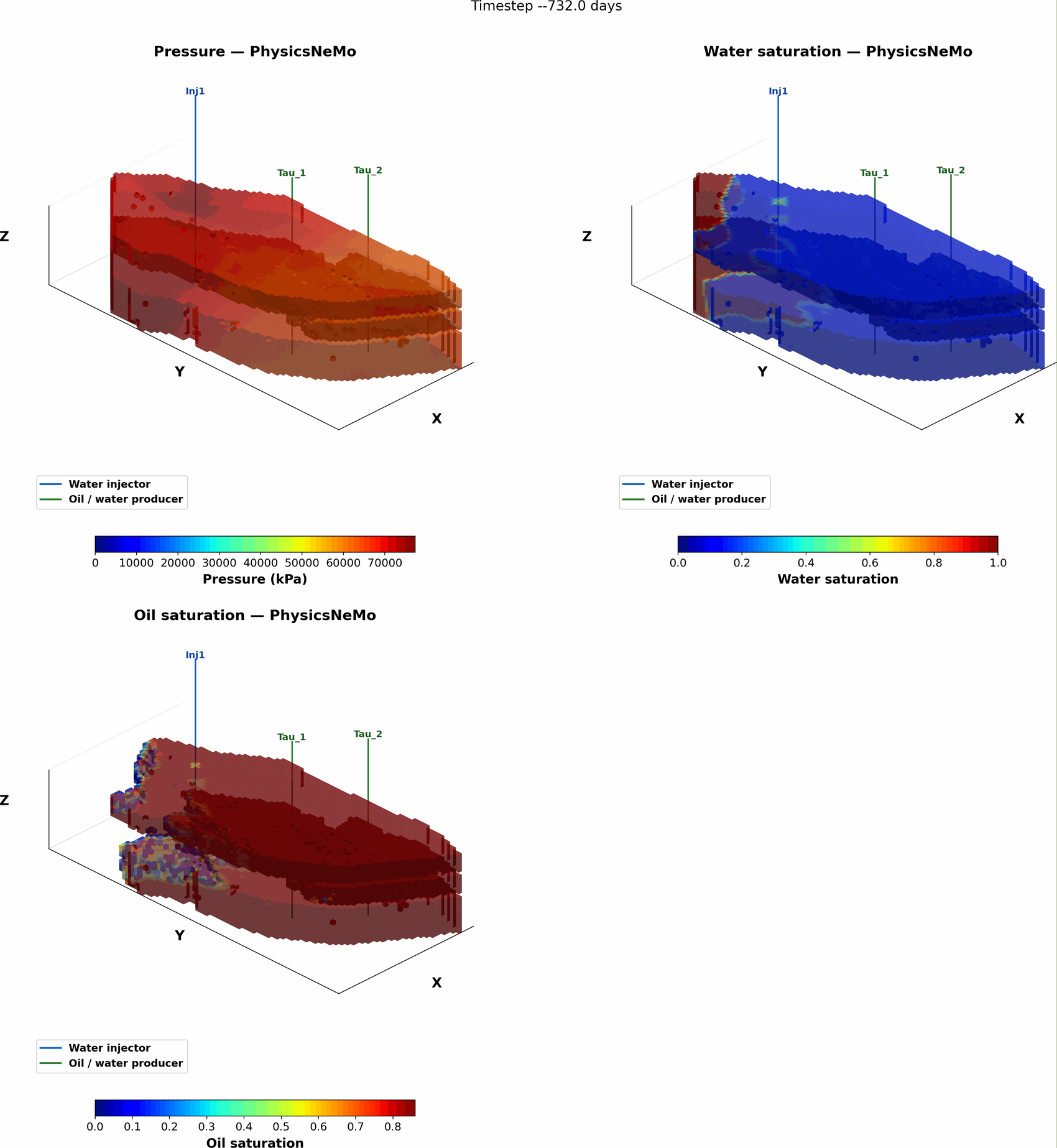}}\caption{$732$~d}\end{subfigure}\\[3pt]
\begin{subfigure}{0.49\linewidth}\fbox{\includegraphics[width=\linewidth]{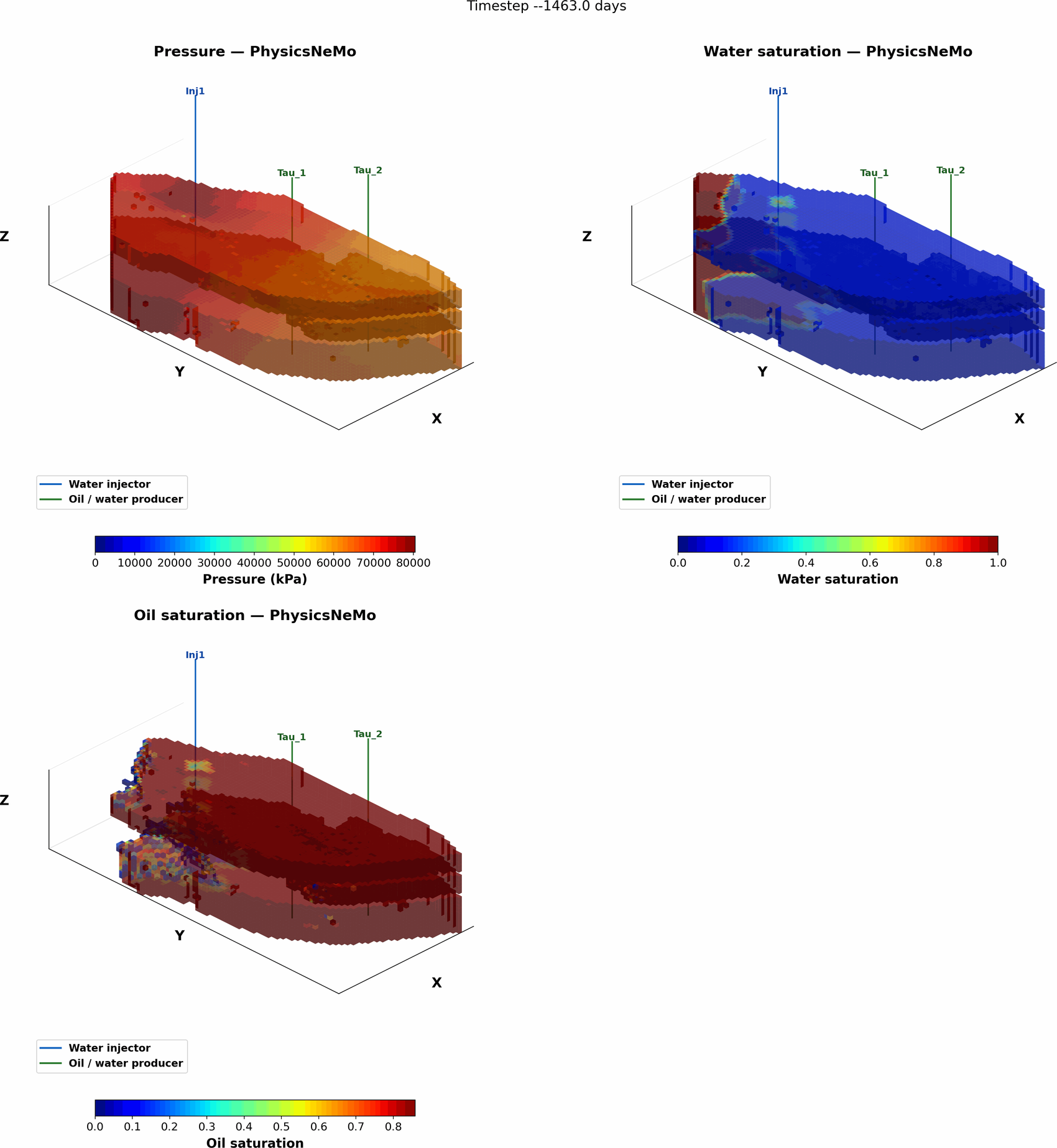}}\caption{$1463$~d}\end{subfigure}\hfill
\begin{subfigure}{0.49\linewidth}\fbox{\includegraphics[width=\linewidth]{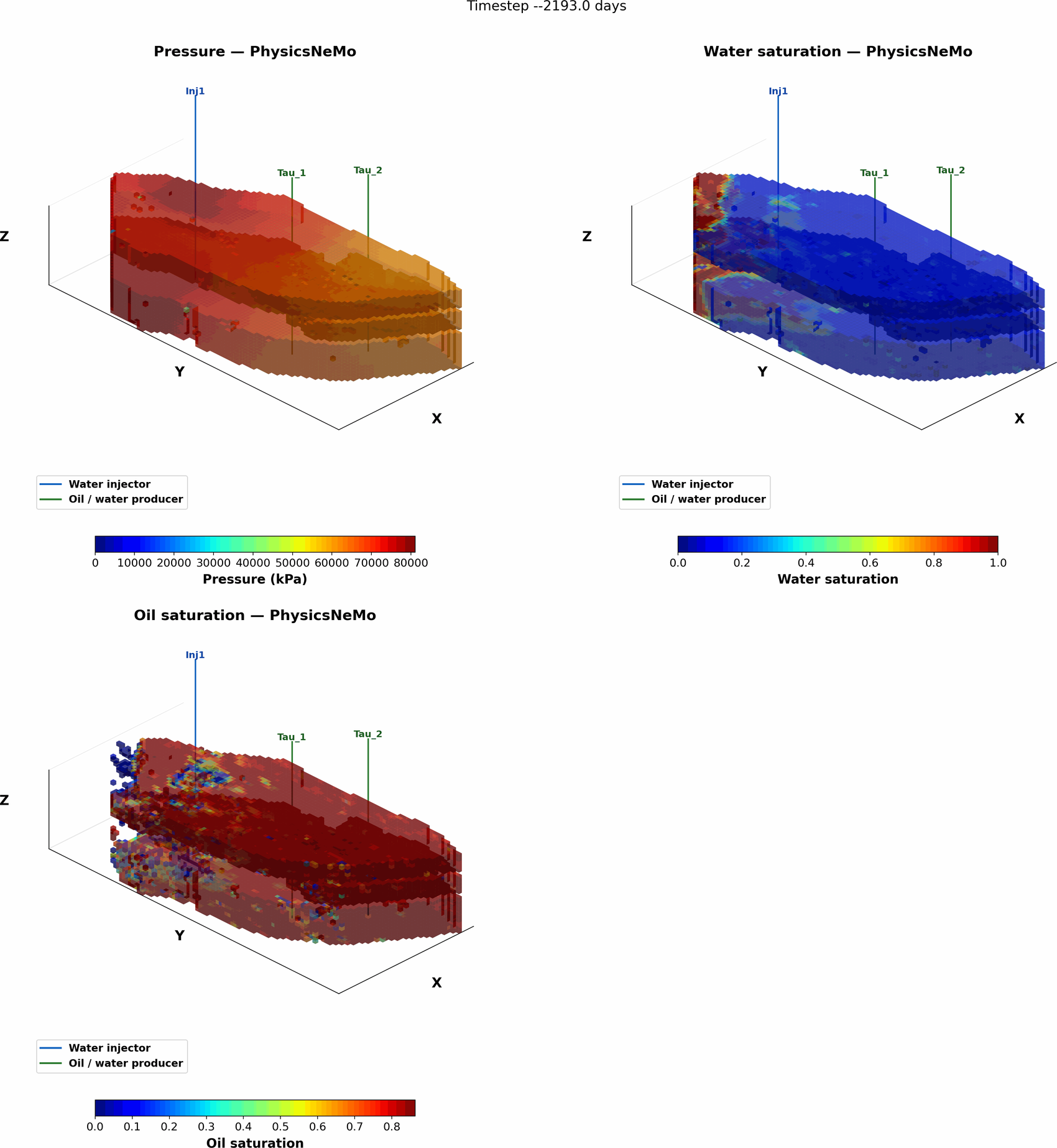}}\caption{$2193$~d}\end{subfigure}
\caption{The autoregressive surrogate's \emph{stand-alone} forward rollout of the
full three-dimensional reservoir-model state, pressure, water saturation and oil
saturation (top to bottom in each panel), predicted in closed loop in
gradient-free mode with \emph{zero} teacher forcing, at four report
steps. Injected water advances from \texttt{Inj1} and sweeps oil toward the
producers \texttt{Tau\_1}, \texttt{Tau\_2}; the pressure field relaxes globally
(elliptic) while the saturation front advects locally (hyperbolic), exactly the
diffusivity hierarchy $\kappa_P\!\gg\!\kappa_S$ of Theorem~\thmr{thm:contract}.
This is the state cube that the Peaceman heads and the Gassmann
petro-elastic map read to form the observables assimilated in
\S\ref{sec:inverse}.}
\label{fig:evo1}
\end{figure}

\subsection{A classical baseline: mixture-of-experts CCR}
\label{sec:ccr}

The fourth stage of the pipeline trains a classical
cluster-classify-regress (CCR) well-model surrogate
\citep{bernholdt2019ccr} as a baseline against the FNO Peaceman heads, and as an
optional in-loop well model. CCR was introduced as a general method for learning
discontinuous functions, and its deep mixture-of-experts form, in which the gate
and the experts are trained jointly, is developed in
\citet{etienam2024gpexperts}. We summarise its mathematics because the
comparison sharpens \emph{why} the operator approach wins.

\subsubsection{The Cluster-Classify-Regress decomposition}
CCR approximates a discontinuous map $x\mapsto y$ (here the near-well
feature$\to$well-response map) by a mixture of local experts gated by a learned
classifier. Given training pairs $\{(x_i,y_i)\}$:
(i) \emph{cluster} the joint space by $K$-means into $\{1,\dots,N_K\}$, with
$N_K$ chosen by the elbow (knee) heuristic on the within-cluster distortion
$D(k)=\frac1n\sum_i\min_c\norm{x_i-\mu_c}$;
(ii) \emph{classify} inputs to clusters by an XGBoost (or random-forest) gate
$\pi:x\mapsto\{1,\dots,N_K\}$;
(iii) \emph{regress} a local expert $r_c$ (polynomial/sparse-GP/XGBoost) per
cluster \citep{bernholdt2019ccr}. The prediction is the hard-gated composition
\begin{equation}
\hat y(x)=\sum_{c=1}^{N_K}\mathbb{1}\!\big[\pi(x)=c\big]\,r_c(x),
\label{eq:ccr}
\end{equation}
which can represent discontinuities the global smooth experts cannot, by
partitioning the domain at the gate boundaries. For the Tau model, the CCR
baseline uses XGBoost
experts with regularised, early-stopped trees.

\subsubsection{Why the operator wins}
The CCR gate consumes only the low-dimensional near-well feature vector (the
$4N_p+2=10$ producer features), so it cannot see the global pressure field that
drives drawdown, nor track the saturation front across the $20$-step rollout.
For WOPR, dominated by pressure drawdown at near-connate $S_w$, and for
WWPR, governed by the breakthrough bifurcation in $K_{rw}(S_w)$, this is
fatal: the comparison records FNO WOPR $R^2>99.99\%$ vs.\ CCR $18.7\%$, and
FNO WWPR $R^2=82.8\%$ vs.\ CCR $-72.4\%$ (worse than the mean predictor),
despite CCR's in-sample $R^2\approx0.999$, a textbook
train/test generalisation gap from fitting $\sim\!4000$ points in $10$
dimensions. The FNO, by contrast, extracts the relevant near-well behaviour
\emph{implicitly} from the full $100{,}188$-cell field it has rolled out, and
evaluates in $0.579$~s vs.\ CCR's $42$~s on $144$ CPU cores. Table~\ref{tab:ccr}
gives the head-to-head numbers from the comparison log.

\begin{table}[h]\centering\small
\caption{FNO Peaceman head vs.\ CCR well model on a held-out test realisation.
FNO inference $0.579$~s (1 GB200); CCR $42$~s (144 CPU cores).}
\label{tab:ccr}
\begin{tabular}{lcccc}
\toprule
\textbf{Quantity} & \textbf{FNO $R^2$} & \textbf{CCR $R^2$} & \textbf{FNO RMSE} & \textbf{CCR RMSE}\\
\midrule
WOPR (m$^3$/day)  & \textbf{$>$99.99\%} & $18.7\%$   & $0.470$ & $63.49$\\
WWPR (m$^3$/day)  & \textbf{82.8\%}  & $-72.4\%$  & $0.0019$& $0.0137$\\
WBHP (kPa, inj.)  & $89.8\%$         & \textbf{90.2\%} & $276.6$ & $270.9$\\
WBHP\_PROD (kPa)  & \textbf{98.0\%}  & $89.5\%$   & $63.4$  & $129.5$\\
\bottomrule
\end{tabular}
\end{table}

Scalable variational and deep-gated treatments of such expert mixtures, which
relax the hard gate of \eqr{eq:ccr}, are developed in
\citet{etienam2024gpexperts}.

The single quantity CCR wins, injector WBHP, is the one dominated by the
local injectivity index $II=2\pi Kh/(\mu_w B_w\ln(r_e/r_w))$, i.e.\ the near-well
permeability the CCR feature captures directly; everywhere the global field
matters, the operator wins decisively. This is the empirical case for replacing
the classical surrogate with the autoregressive operator inside the inversion
loop.

\section{The Bayesian Inverse Problem}
\label{sec:bayes}

We now construct the inverse problem rigorously, because the ensemble methods
of \S\ref{sec:inverse} are consistent approximations of the object built here.

\subsection{Measure-theoretic foundations}
\label{sec:mtf}

Throughout this section we follow the function-space Bayesian framework of
\citet{stuart2010inverse} and \citet{dashti2017bayesian}: the unknown is a
\emph{function}, the posterior is a \emph{measure} on a function space defined
through its Radon--Nikodym derivative with respect to the prior, and every
statement is made so that it survives the mesh-refinement limit $d\to\infty$.
The finite-dimensional (grid) problem actually solved by the ensemble is then a
consistent Galerkin-type approximation of this limit object, and its Lebesgue
density is recovered as a special case.

\subsubsection{Parameter space and the Borel structure}
\label{sec:paramspace}
The unknown is the pair of fields $u=(\log K,\phi):\Omega\to\R^2$. Idealised on
the continuum, $u$ lives in a separable Banach space $\big(\cX,\norm{\cdot}_\cX
\big)$, concretely a Sobolev space $H^s(\Omega;\R^2)$ or a H\"older space
$C^{0,\beta}(\Omega;\R^2)$ chosen so that draws from the prior have the desired
regularity; discretised on the model grid, $u\in\R^{d}$ with
$d=2N=200{,}376$. Separability is essential: it guarantees that $\cX$ is a
Radon space, so every Borel probability measure on it is inner-regular and tight
\citep{bogachev1998gaussian}. We equip $\cX$ with its \emph{Borel
$\sigma$-algebra}
\begin{equation}
\cB(\cX)=\sigma\big(\{\,U\subset\cX : U\ \text{open}\,\}\big),
\label{eq:borel}
\end{equation}
the smallest $\sigma$-algebra containing the open sets. On a separable Banach
space $\cB(\cX)$ coincides with the cylindrical $\sigma$-algebra generated by
the continuous linear functionals $\cX^\ast$, so a measure is determined by its
finite-dimensional projections, the property that makes the ensemble
(particle) representation of \S\ref{sec:inverse} meaningful. The pair
$(\cX,\cB(\cX))$ is the measurable space on which all subsequent probability
measures, prior $\mu_0$, posterior $\mu^y$, and their ensemble
approximations, are defined.

\subsubsection{The Gaussian prior and its Cameron--Martin space}
\label{sec:gaussprior}
The prior over the petrophysical properties is a Borel probability measure
$\mu_0$ on
$(\cX,\cB(\cX))$. We take it Gaussian, $\mu_0=\cN(m_0,C_0)$, with mean
$m_0\in\cX$ and covariance operator $C_0:\cX^\ast\to\cX$ that is self-adjoint,
positive, and \emph{trace-class}; trace-class is precisely the condition under
which a Gaussian measure exists as a countably additive measure on the
infinite-dimensional space $\cX$ (a ``white-noise'' prior with $C_0=I$ does
\emph{not} live on $L^2$ and must be handled through its Cameron--Martin space)
\citep{bogachev1998gaussian,daprato2014stochastic}. Writing $C_0$ through its
spectral decomposition $C_0\varphi_k=\lambda_k^2\varphi_k$ in an orthonormal
basis $\{\varphi_k\}$ of $L^2(\Omega)$, the Karhunen--Lo\`eve expansion
\begin{equation}
u=m_0+\sum_{k\ge1}\lambda_k\,\xi_k\,\varphi_k,\qquad
\xi_k\stackrel{\text{iid}}{\sim}\cN(0,1),\qquad
\sum_{k\ge1}\lambda_k^2=\mathrm{tr}(C_0)<\infty,
\label{eq:kl}
\end{equation}
represents a prior draw and makes the summability requirement explicit. The
\emph{Cameron--Martin space}
\begin{equation}
\cH:=\mathrm{Im}(C_0^{1/2}),\qquad
\langle h_1,h_2\rangle_\cH:=\langle C_0^{-1/2}h_1,C_0^{-1/2}h_2\rangle,
\label{eq:cm}
\end{equation}
is the Hilbert space of ``directions of finite prior precision''; its norm is
the Tikhonov regulariser $\norm{u-m_0}_{C_0^{-1}}^2$ appearing in the MAP
functional \eqr{eq:map}. A defining subtlety of the infinite-dimensional
theory is that $\mu_0(\cH)=0$: a prior draw is almost surely \emph{rougher}
than any element of its own Cameron--Martin space, which is exactly what
endows the samples with geological texture rather than artificial smoothness.
The Cameron--Martin theorem characterises the admissible shifts under which
$\mu_0$ remains equivalent (mutually absolutely continuous): $\mu_0(\cdot-h)
\sim\mu_0$ iff $h\in\cH$, and the Feldman--H\'ajek dichotomy states that two
Gaussian measures are either equivalent or mutually singular
\citep{bogachev1998gaussian}. These facts govern which prior perturbations the
data can inform and underlie the well-posedness theory below. In practice the
prior may instead be a non-Gaussian geostatistical measure, known only through
its samples (\S\ref{sec:prior}); the
construction \eqr{eq:borel}--\eqr{eq:cm} then applies to the Gaussian
reference measure in the latent space of the VCAE chart (\S\ref{sec:param}).

\subsubsection{Forward operator and the data model}
Let $\cG:\cX\to\R^{k}$ be the (surrogate) parameter-to-observable map,
the composition of the closed-loop rollout (Algorithm~\ref{alg:inference})
with the Peaceman heads and, for the joint problem, the Gassmann seismic map
(\S\ref{sec:seismic}). The data are generated by
\begin{equation}
y=\cG(u)+\eta,\qquad \eta\sim\cN(0,\Gamma),\quad \eta\perp u,
\label{eq:datamodel}
\end{equation}
with observation-noise covariance $\Gamma$. The negative log-likelihood is the
\emph{data-misfit potential}
\begin{equation}
\Phi(u;y)=\tfrac12\norm{y-\cG(u)}_{\Gamma^{-1}}^2
=\tfrac12\big(y-\cG(u)\big)^\top\Gamma^{-1}\big(y-\cG(u)\big).
\label{eq:phi}
\end{equation}

\subsubsection{Assumptions on the potential}
\label{sec:assumptions}
The existence and stability of the posterior rest not on the fine structure of
$\cG$ but on four qualitative properties of the potential $\Phi$, stated below
in the form used by \citet{stuart2010inverse} and \citet{dashti2017bayesian}.
Write $\Phi:\cX\times\R^k\to\R$ for the map \eqr{eq:phi}.

\begin{assumption}[Properties of $\Phi$]
\label{ass:phi}
There is a measurable set $\cX'\subseteq\cX$ with $\mu_0(\cX')=1$ such that,
for every $r>0$:
\begin{enumerate}
\item[\textup{(A1)}] \textbf{(Lower bound / positivity)} there is
$M(r)\in\R$ with $\Phi(u;y)\ge M(r)$ for all $u\in\cX'$ and all
$\norm{y}\le r$; in the least-squares case \eqr{eq:phi}, $\Phi\ge0$, so
$M=0$;
\item[\textup{(A2)}] \textbf{(Local boundedness above)} there is $K(r)>0$ with
$\Phi(u;y)\le K(r)$ for all $\norm{u}_\cX\le r$ and $\norm{y}\le r$;
\item[\textup{(A3)}] \textbf{(Local Lipschitz in $u$)} there is $L(r)>0$ with
$|\Phi(u_1;y)-\Phi(u_2;y)|\le L(r)\,\norm{u_1-u_2}_\cX$ for
$\norm{u_1}_\cX,\norm{u_2}_\cX\le r$ and $\norm{y}\le r$;
\item[\textup{(A4)}] \textbf{(Local Lipschitz in $y$)} there is $C(r)>0$ with
$|\Phi(u;y_1)-\Phi(u;y_2)|\le C(r)\big(1+g(u)\big)\norm{y_1-y_2}$ for
$\norm{y_1},\norm{y_2}\le r$, where $g\in L^2_{\mu_0}(\cX)$.
\end{enumerate}
\end{assumption}

\begin{proposition}[The surrogate potential satisfies Assumption~\thmr{ass:phi}]
\label{prop:phiok}
Let $\cG=\cG_\theta$ be the composition of the closed-loop rollout
(Alg.~\ref{alg:inference}) with the Peaceman heads and the Gassmann seismic map
\eqr{eq:gassmann}--\eqr{eq:vp}, and let $\Gamma\succ0$. Then $\Phi$ in
\eqr{eq:phi} satisfies \textup{(A1)--(A4)}.
\end{proposition}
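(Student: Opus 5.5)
The plan is to reduce all four properties to two facts about the surrogate map $\cG=\cG_\theta$: it is \emph{bounded on bounded sets} and \emph{locally Lipschitz} from $\cX$ (restricted to its discretisation) into $\R^k$. The key structural observation is that $\cG_\theta$ is a finite composition of finitely many smooth pieces. These are the ini-state heads, then $T=20$ applications of the dynamic FNO step \eqr{eq:fno}, then the Peaceman heads, then the Gassmann map \eqr{eq:gassmann}--\eqr{eq:vp}. Each FNO layer is built from affine maps, a fixed linear spectral multiplier $\cF^{-1}R^{(\ell)}\cF$ and the SiLU activation $\sigma$. Both $\sigma$ and $\sigma'$ are bounded by affine functions, and $\sigma'$ is globally bounded. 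The pieces are therefore globally Lipschitz on the finite-dimensional grid space, with constants set by the operator norms of the trained weights. The clamp in Algorithm~\ref{alg:inference} is $1$-Lipschitz.

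With these facts in hand, (A1) is immediate. Since $\Gamma\succ0$, the potential \eqr{eq:phi} is a nonnegative quadratic form, so $\Phi\ge0$ and $M(r)=0$ with $\cX'=\cX$.

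For (A2)--(A4) I would first establish, step by step along the rollout, a linear growth bound $\norm{\cG(u)}\le a+b\norm{u}_\cX$ and a local Lipschitz bound $\norm{\cG(u_1)-\cG(u_2)}\le L_\cG(r)\norm{u_1-u_2}_\cX$ on $\{\norm{u}\le r\}$. Both follow by composing the layer bounds, exactly as in the induction of Proposition~\thmr{prop:rollout}, with the per-step constant raised to the power $T$. The continuum norm on $\cX$ dominates the grid norm up to a constant because the discretisation is a bounded linear map (e.g.\ $H^s\hookrightarrow C^0$ for $s>3/2$, or directly in the Hölder choice). The three properties then follow in turn.
\begin{itemize}
\item[(A2)] $\Phi(u;y)\le\tfrac12\lambda_{\min}(\Gamma)^{-1}\big(r+a+br\big)^2=:K(r)$.
\item[(A3)] The identity $\Phi(u_1)-\Phi(u_2)=\tfrac12\langle 2y-\cG(u_1)-\cG(u_2),\Gamma^{-1}(\cG(u_2)-\cG(u_1))\rangle$ gives $L(r)=\norm{\Gamma^{-1}}(r+a+br)L_\cG(r)$.
\item[(A4)] Similarly, $|\Phi(u;y_1)-\Phi(u;y_2)|\le\tfrac12\norm{\Gamma^{-1}}\,\norm{y_1+y_2-2\cG(u)}\,\norm{y_1-y_2}$, so one may take $g(u)=\norm{u}_\cX$ and $C(r)=\norm{\Gamma^{-1}}\max(r+a,b)$. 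Then $g\in L^2_{\mu_0}$ by Fernique's theorem, since Gaussian measures have all moments.
\end{itemize}

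The main obstacle is the Gassmann channel. It involves quotients such as $(1-K_{\mathrm{dry}}/K_{\mathrm{min}})^2/(\phi/K_{\mathrm{fl}}+\dots)$, along with $\rho^{-1/2}$ in $V_p$ and the porosity $\phi$ itself. These are not globally Lipschitz and could blow up as $\phi\to0$ or under nonphysical inputs. I would handle this by arguing that the implementation clamps $\phi$ and the saturations to a compact physical range, with the saturations clamped as in Algorithm~\ref{alg:inference}. On that range the denominators are bounded away from zero and the map is smooth, hence Lipschitz and bounded on a compact set. This makes the composite bounded globally and Lipschitz locally, which is all (A2)--(A4) require.

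If the clamping of $\phi$ is not available, the fallback is to restrict $\cX'$ to a set where $\phi$ lies in $[\phi_{\min},1]$. This would weaken the statement to one holding on the physically admissible support of the prior.
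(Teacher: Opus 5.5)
Your argument is correct and follows the paper's sketch essentially step for step. (A1) comes from $\Gamma\succ0$. Local Lipschitzness and boundedness of $\cG_\theta$ on balls come from the FNO layer structure together with the input-clipped Gassmann map, and (A2) follows from that boundedness. (A3) follows from the same polarisation identity.

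The only real difference is in (A4). The paper takes $g(u)=\norm{\cG(u)}$ and appeals to a bounded-range surrogate output. You instead take $g(u)=\norm{u}_\cX$, combining your linear-growth bound with Fernique's theorem. Your route uses Gaussianity of $\mu_0$, but in exchange it does not need the Peaceman heads to have bounded range, even though their inputs include the permeability itself.
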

\begin{proof}[Sketch]
$\Phi=\tfrac12\norm{\Gamma^{-1/2}(y-\cG(u))}^2\ge0$ gives (A1) with $M=0$. The
FNO layers \eqr{eq:fno} are compositions of bounded linear maps, truncated
Fourier multipliers, and globally Lipschitz activations, hence $\cG_\theta$ is
Lipschitz on bounded sets of $\cX$; the Peaceman map \eqr{eq:qwell} and the
input-clipped Gassmann map are likewise locally Lipschitz. Boundedness of
$\cG_\theta$ on $\norm{u}_\cX\le r$ then yields (A2), and the identity
$\Phi(u_1)-\Phi(u_2)=\tfrac12\langle \cG(u_2)-\cG(u_1),\,\Gamma^{-1}(2y-\cG(u_1)
-\cG(u_2))\rangle$ combined with local Lipschitzness of $\cG_\theta$ yields (A3).
For (A4), $\partial_y\Phi=\Gamma^{-1}(y-\cG(u))$ is affine in $y$ and its norm
is controlled by $1+\norm{\cG(u)}$, which is $\mu_0$-square-integrable because a
bounded-range surrogate output has all moments; take $g(u)=\norm{\cG(u)}$.
\end{proof}

\subsubsection{Posterior as a Radon--Nikodym density}
\label{sec:rnpost}
With the assumptions in place, the posterior $\mu^y$ is \emph{defined} by its
density with respect to the prior. This is the only definition that makes sense
in infinite dimensions, where there is no Lebesgue measure and hence no ``$\pi(u
\mid y)\propto\pi(y\mid u)\pi(u)$'' to write down; Bayes' theorem is instead the
statement that $\mu^y\ll\mu_0$ with an explicit Radon--Nikodym derivative.

\begin{theorem}[Existence of the posterior, after
\cite{stuart2010inverse,dashti2017bayesian}]
\label{thm:existence}
Let Assumption~\thmr{ass:phi} hold and fix $y$ with $Z(y):=\int_\cX
e^{-\Phi(u;y)}\dd\mu_0(u)$. Then $0<Z(y)<\infty$, the probability measure
$\mu^y$ given by
\begin{equation}
\frac{\dd\mu^y}{\dd\mu_0}(u)=\frac{1}{Z(y)}\exp\!\big(-\Phi(u;y)\big),\qquad
Z(y)=\int_\cX\exp\!\big(-\Phi(u;y)\big)\dd\mu_0(u),
\label{eq:rn}
\end{equation}
is well defined, absolutely continuous with respect to $\mu_0$, and is the
posterior corresponding to the data model \eqr{eq:datamodel}.
\end{theorem}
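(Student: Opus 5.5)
The argument follows Stuart's Theorem 4.1 and Dashti--Stuart Theorem 14, in three steps: bound the normaliser, check measurability, and identify $\mu^y$ as a regular conditional distribution.

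\textbf{Step 1 (bounds on $Z(y)$).} Upper bound: by (A1), on the full-measure set $\cX'$ we have $e^{-\Phi(u;y)}\le e^{-M(r)}$ with $r\ge\norm{y}$. Hence $Z(y)\le e^{-M(r)}<\infty$, and $Z(y)\le1$ in the least-squares case $M=0$. Lower bound:
\begin{itemize}
\item Choose $R>\norm{y}$ large enough that $\mu_0(\{\norm{u}_\cX\le R\})>0$; such an $R$ exists because $\mu_0$ is a Borel probability measure and the balls of radius $R$ increase to $\cX$.
\item By (A2) with $r=\max(R,\norm{y})$,
\begin{equation*}
Z(y)\;\ge\;\int_{\{\norm{u}_\cX\le R\}} e^{-K(r)}\,\dd\mu_0(u)\;=\;e^{-K(r)}\,\mu_0(\norm{u}_\cX\le R)\;>\;0 .
\end{equation*}
\end{itemize}
This gives $0<Z(y)<\infty$, so \eqr{eq:rn} defines a probability measure that is absolutely continuous with respect to $\mu_0$ by construction.

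\textbf{Step 2 (measurability).} The integrand must be Borel. By (A3), $\Phi(\cdot\,;y)$ is continuous on bounded sets of $\cX$, hence continuous on $\cX$ and therefore $\cB(\cX)$-measurable. By (A4), $\Phi$ is also continuous in $y$, so it is jointly measurable on $\cX\times\R^k$.

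\textbf{Step 3 (identification with Bayes' rule).} This is the substantive step.
\begin{itemize}
\item Form the joint law $\nu(\dd u,\dd y)$ of $(u,y)$ under \eqr{eq:datamodel}. Since $\eta\perp u$, the conditional law of $y$ given $u$ is $\cN(\cG(u),\Gamma)$.
\item Its Lebesgue density is $\propto\exp(-\Phi(u;y))\exp(-\tfrac12\norm{y}_{\Gamma^{-1}}^2)$, after absorbing the $y$-independent factor $e^{-\frac12\norm{\cG(u)}^2_{\Gamma^{-1}}}e^{\langle y,\cG(u)\rangle_{\Gamma^{-1}}}$ into a re-expansion. Equivalently, $\nu\ll\nu_0:=\mu_0\otimes\cN(0,\Gamma)$ with $\dd\nu/\dd\nu_0\propto\exp(-\Phi(u;y)+\tfrac12\norm{y}^2_{\Gamma^{-1}})$, a version that is jointly measurable by Step~2.
\item Apply the conditioning lemma (Dashti--Stuart Lemma~3.3). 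If $\nu\ll\nu_0$ with density $\varphi$, $\nu_0$ is a product measure, and $\int\varphi(u,y)\,\mu_0(\dd u)>0$, then the regular conditional distribution of $u$ given $y$ exists and has density $\varphi(\cdot,y)/\int\varphi(u,y)\,\mu_0(\dd u)$ with respect to $\mu_0$.
\item The $y$-only factors cancel in this ratio, leaving exactly \eqr{eq:rn}. Existence of the regular conditional distribution is guaranteed because $\cX$ is separable, hence Polish/Radon (\S\ref{sec:paramspace}).
\end{itemize}

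\textbf{Main obstacle.} The hard part is Step 3: making the conditioning rigorous in infinite dimensions, where no Lebesgue density on $\cX$ is available. The route above avoids densities on $\cX$ altogether, because it needs only the finite-dimensional Gaussian density in $y$ and the product structure $\mu_0\otimes\cN(0,\Gamma)$. There are two technical points. First, the positivity of the normaliser required by the conditioning lemma holds for every $y$ by Step 1, so the identity holds pointwise in $y$ rather than merely $\nu$-a.e. Second, the disintegration is unique only $\nu$-a.e. in $y$, so the theorem's claim is understood in that sense.
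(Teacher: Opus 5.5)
Your proposal is correct, and its Step~1 is the paper's argument. For the lower bound you take a ball on which (A2) caps $\Phi$, giving $Z(y)\ge e^{-K}\mu_0(B)>0$; for the upper bound (A1) gives $Z(y)\le e^{-M}$.

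The paper's sketch stops at that point. It concludes only that \eqref{eq:rn} is a bona fide $\mu_0$-density and leaves the clause ``is the posterior corresponding to the data model'' to the cited references. Your Steps~2--3 prove that clause. You check Borel measurability of $\Phi$ via (A3)--(A4), then apply the conditioning lemma to $\nu\ll\mu_0\otimes\cN(0,\Gamma)$ with $\dd\nu/\dd(\mu_0\otimes\cN(0,\Gamma))=\exp\bigl(-\Phi(u;y)+\tfrac12\norm{y}^2_{\Gamma^{-1}}\bigr)$. The $y$-only factor then cancels in the ratio, and Step~1 supplies the required positivity of the normaliser for every $y$. What the paper's route buys is brevity; what yours buys is a self-contained proof of all three assertions of the theorem.

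Your lower bound is also slightly more robust than the paper's. The paper justifies $\mu_0(B)>0$ by ``every ball has positive Gaussian measure''. That needs a non-degenerate, full-support Gaussian, and it does not cover the non-Gaussian, sample-defined prior of \S\ref{sec:prior}. The paper also leaves implicit that the radius must dominate $\norm{y}$ for (A2) to apply. Your choice of $R>\norm{y}$, obtained from continuity from below, works for any Borel probability measure.

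One sentence in Step~3 should be corrected. The Lebesgue density of $y$ given $u$ is $\propto e^{-\Phi(u;y)}$, not $\propto e^{-\Phi(u;y)}e^{-\frac12\norm{y}^2_{\Gamma^{-1}}}$. The factor $e^{\langle y,\cG(u)\rangle_{\Gamma^{-1}}}$ you describe as $y$-independent in fact depends on $y$. The Radon--Nikodym derivative you state immediately afterwards is correct (with constant exactly $1$), and it is the one the lemma uses, so the conclusion is unaffected.
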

\begin{proof}[Sketch]
By (A2) there is a ball $B=\{\norm{u}_\cX\le \rho\}$ with $\mu_0(B)>0$ (every
ball has positive Gaussian measure) on which $\Phi\le K(\rho)$; hence
$Z(y)\ge\int_B e^{-K(\rho)}\dd\mu_0=e^{-K(\rho)}\mu_0(B)>0$. By (A1),
$e^{-\Phi}\le e^{-M}<\infty$ so $Z(y)\le e^{-M}<\infty$. Positivity and
finiteness of $Z(y)$ make the right-hand side of \eqr{eq:rn} a bona-fide
$\mu_0$-density, and $\int\dd\mu^y=1$ by construction.
\end{proof}
\subsubsection{Specialisation to a Lebesgue density}
\label{sec:lebesgue}
The abstract density \eqr{eq:rn} reduces to the familiar finite-dimensional
Bayes formula exactly when a Lebesgue reference measure exists, i.e.\ after
discretisation. On the model grid $\cX=\R^d$, the Gaussian prior
$\mu_0=\cN(m_0,C_0)$ has a Lebesgue density
$\pi_0(u)=\big((2\pi)^d\det C_0\big)^{-1/2}\exp\!\big(-\tfrac12\norm{u-m_0}_{
C_0^{-1}}^2\big)$ with respect to $\dd u$, and \eqr{eq:rn} multiplies it by
$e^{-\Phi}$: the posterior has Lebesgue density
\begin{equation}
\pi^y(u)=\frac{1}{Z(y)}\,\exp\!\Big(-\underbrace{\tfrac12\norm{y-\cG(u)}_{
\Gamma^{-1}}^2}_{\text{likelihood }=\Phi}
-\underbrace{\tfrac12\norm{u-m_0}_{C_0^{-1}}^2}_{\text{prior}}\Big),
\label{eq:lebesgue}
\end{equation}
the product of the Gaussian likelihood and Gaussian prior densities. The reason
\eqr{eq:lebesgue} cannot be written in infinite dimensions is that there is no translation-invariant (Lebesgue) measure on an
infinite-dimensional separable Banach space, so ``$\pi_0(u)\,\dd u$'' is
meaningless there; only the \emph{ratio} $\dd\mu^y/\dd\mu_0$ in \eqr{eq:rn}
survives the limit $d\to\infty$. The finite-$d$ density \eqr{eq:lebesgue} is
the projection of the well-defined measure $\mu^y$ onto the grid, and the
mesh-independence of the ensemble solver (\S\ref{sec:inverse}) is a direct
consequence of working with the dimension-robust object \eqr{eq:rn} rather
than \eqr{eq:lebesgue}. The maximum-a-posteriori (MAP) estimator minimises the
negative log-density,
\begin{equation}
u_{\mathrm{MAP}}=\arg\min_{u}\Big\{\tfrac12\norm{y-\cG(u)}_{\Gamma^{-1}}^2
+\tfrac12\norm{u-m_0}_{C_0^{-1}}^2\Big\},
\label{eq:map}
\end{equation}
a Tikhonov-regularised nonlinear least-squares problem whose regulariser is
exactly the prior precision $\norm{\cdot}_\cH^2$ of \eqr{eq:cm}. In infinite
dimensions the same functional is recovered rigorously as the Onsager--Machlup
functional of $\mu^y$, the object whose minimisers are the centres of the
small-ball modes of the posterior \citep{dashti2013map}, with the
Cameron--Martin norm in place of $\norm{u-m_0}_{C_0^{-1}}^2$, which is why the
grid MAP \eqr{eq:map} is a consistent estimator as the mesh is refined.

\subsubsection{Well-posedness in the Hellinger metric}
\label{sec:wellposed}
Existence (Theorem~\thmr{thm:existence}) is not enough: for the inverse problem
to be worth solving, the posterior must depend \emph{continuously} on the data,
so that finite measurement and surrogate errors perturb the answer boundedly.
The natural metric on measures for this purpose is the Hellinger distance,
\begin{equation}
d_{\mathrm{Hell}}(\mu,\mu')=\Big(\tfrac12\int_\cX\Big(\sqrt{\tfrac{\dd\mu}{\dd\nu}}
-\sqrt{\tfrac{\dd\mu'}{\dd\nu}}\Big)^2\dd\nu\Big)^{1/2},
\label{eq:hell}
\end{equation}
for any common dominating measure $\nu$ (e.g.\ $\nu=\mu_0$); it controls the
error in every square-integrable posterior expectation, since
$|\E_{\mu}f-\E_{\mu'}f|\le 2\big(\E_\mu f^2+\E_{\mu'}f^2\big)^{1/2}
d_{\mathrm{Hell}}(\mu,\mu')$ \citep{stuart2010inverse}.

\begin{theorem}[Well-posedness, after \cite{stuart2010inverse,dashti2017bayesian}]
\label{thm:wellposed}
Let Assumption~\thmr{ass:phi} hold. Then for every $r>0$ there is $C=C(r)>0$ such
that, for all $y,y'$ with $\norm{y},\norm{y'}\le r$, the posteriors
$\mu^y,\mu^{y'}$ given by \eqr{eq:rn} satisfy
\begin{equation}
d_{\mathrm{Hell}}\big(\mu^y,\mu^{y'}\big)\le C\,\norm{y-y'}.
\label{eq:hellbound}
\end{equation}
Consequently $y\mapsto\mu^y$ is locally Lipschitz, and every posterior moment
$\E_{\mu^y}f$ with $f\in L^2_{\mu_0}$ is locally Lipschitz in $y$.
\end{theorem}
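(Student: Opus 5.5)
The argument is the standard Stuart/Dashti one: work with the unnormalised densities $e^{-\Phi(u;y)}$ relative to the common dominating measure $\nu=\mu_0$, and control the Hellinger integrand pointwise through (A4). Write $Z=Z(y)$, $Z'=Z(y')$, and fix $r$ with $\norm{y},\norm{y'}\le r$. By the proof of Theorem~\thmr{thm:existence}, $Z$ and $Z'$ are bounded below by a constant depending only on $r$: choose the ball $B$ with $\mu_0(B)>0$ and use (A2), which holds uniformly over $\norm{y}\le r$. They are bounded above by $e^{-M(r)}$ via (A1). These two-sided bounds are what make the final constant depend on $r$ alone.

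\textbf{Step 1: normalising constants.} Using $|e^{-a}-e^{-b}|\le e^{-\min(a,b)}|a-b|$, (A1) and (A4),
\[
|Z-Z'|\le\int_\cX e^{-M(r)}\,C(r)\big(1+g(u)\big)\norm{y-y'}\,\dd\mu_0(u)\le C_1(r)\norm{y-y'}.
\]
The last integral is finite because $g\in L^2_{\mu_0}\subset L^1_{\mu_0}$ ($\mu_0$ is a probability measure).

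\textbf{Step 2: splitting the Hellinger distance.} Write
\[
\sqrt{\tfrac{\dd\mu^y}{\dd\mu_0}}-\sqrt{\tfrac{\dd\mu^{y'}}{\dd\mu_0}}
=Z^{-1/2}\big(e^{-\Phi/2}-e^{-\Phi'/2}\big)+\big(Z^{-1/2}-Z'^{-1/2}\big)e^{-\Phi'/2},
\]
where $\Phi=\Phi(u;y)$ and $\Phi'=\Phi(u;y')$. Then $2d_{\mathrm{Hell}}^2\le I_1+I_2$, with
\[
I_1=\tfrac{2}{Z}\int_\cX\big(e^{-\Phi/2}-e^{-\Phi'/2}\big)^2\dd\mu_0,\qquad
I_2=2\big|Z^{-1/2}-Z'^{-1/2}\big|^2 Z'.
\]
For $I_1$, apply the same exponential Lipschitz estimate with exponent $1/2$ and then (A4):
\[
I_1\le\tfrac{2}{Z}\,e^{-M(r)}\tfrac{C(r)^2}{4}\int_\cX\big(1+g\big)^2\dd\mu_0\,\norm{y-y'}^2 .
\]
This is exactly where \emph{square} integrability of $g$ is needed, and it is the one genuinely delicate point. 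For $I_2$, since both $Z,Z'$ lie in a compact interval $[c(r),e^{-M(r)}]$ with $c(r)>0$, the map $z\mapsto z^{-1/2}$ is Lipschitz there, so Step~1 gives $I_2\le C_2(r)\norm{y-y'}^2$. Taking square roots yields \eqr{eq:hellbound}.

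\textbf{Step 3: consequences and the main obstacle.} The moment statement follows by inserting \eqr{eq:hellbound} into the expectation inequality quoted after \eqr{eq:hell}. One must still check that $\E_{\mu^y}f^2$ is bounded uniformly for $\norm{y}\le r$, which holds because $\dd\mu^y/\dd\mu_0\le e^{-M(r)}/c(r)$, so $\E_{\mu^y}f^2\le (e^{-M(r)}/c(r))\E_{\mu_0}f^2$.

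The step I expect to be the main obstacle is the uniform lower bound on $Z(y)$ over the data ball. It requires (A2) to hold uniformly in $y$, as stated, and a ball of positive prior mass. For a Gaussian $\mu_0$ on a separable space, the ball of radius $\rho$ centred at the origin has positive mass for every $\rho>0$, because centred Gaussian measures charge every ball about their mean and the mean $m_0$ lies in some ball about the origin. I would invoke \citep{bogachev1998gaussian} for this fact.
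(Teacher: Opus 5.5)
Your proof is correct and follows the paper's own sketch essentially step for step. It uses the same add-and-subtract splitting of the Hellinger integrand, applies (A4) with $g\in L^2_{\mu_0}$ to the first term, and controls the second by the Lipschitz bound on $z\mapsto z^{-1/2}$ away from zero together with $|Z(y)-Z(y')|\le C'(r)\norm{y-y'}$; your uniform bound on $\E_{\mu^y}f^2$ supplies a detail the paper leaves implicit.

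One small correction. Your justification only shows that origin-centred balls of radius $\rho>\norm{m_0}$ have positive prior mass. The claim for every $\rho>0$ can fail when $m_0\neq0$ and the support $m_0+\overline{\cH}$ is a proper subset of $\cX$, but a ball of radius $\rho\ge\max(r,\norm{m_0}+1)$ is all the uniform lower bound on $Z$ needs.
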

\begin{proof}[Sketch]
Both $Z(y),Z(y')$ are bounded below by a positive constant depending only on
$r$ (Theorem~\thmr{thm:existence}). Expanding \eqr{eq:hell} with $\nu=\mu_0$,
$2d_{\mathrm{Hell}}^2 = \int\big(Z(y)^{-1/2}e^{-\Phi(u;y)/2}
-Z(y')^{-1/2}e^{-\Phi(u;y')/2}\big)^2\dd\mu_0$. Add and subtract
$Z(y)^{-1/2}e^{-\Phi(u;y')/2}$, and bound the two resulting terms using
(i) the local Lipschitz property (A4), $|e^{-\Phi(u;y)/2}-e^{-\Phi(u;y')/2}|
\le\tfrac12 C(r)(1+g(u))\norm{y-y'}e^{-M/2}$ with $g\in L^2_{\mu_0}$, and
(ii) the elementary estimate $|Z(y)^{-1/2}-Z(y')^{-1/2}|\le\tfrac12
\min(Z(y),Z(y'))^{-3/2}|Z(y)-Z(y')|$ together with
$|Z(y)-Z(y')|\le\int|e^{-\Phi(u;y)}-e^{-\Phi(u;y')}|\dd\mu_0\le
C'(r)\norm{y-y'}$. Squaring and integrating gives \eqr{eq:hellbound}; the
moment statement follows from the display preceding the theorem.
\end{proof}

By Proposition~\thmr{prop:phiok} the surrogate potential satisfies
Assumption~\thmr{ass:phi}, so Theorems~\thmr{thm:existence}
and~\thmr{thm:wellposed} apply verbatim to the \emph{surrogate} posterior:
replacing the full-physics $\cG$ by the composite operator $\cG_\theta$
preserves existence and well-posedness. What the pipeline additionally requires
is that the posterior obtained with the surrogate be close to the one obtained
with the simulator, and that is a separate statement.

\begin{proposition}[Surrogate perturbation of the posterior]\label{prop:surr}
Let $\mu^y$ and $\mu^y_\theta$ denote the posteriors \eqr{eq:rn} built from
$\cG$ and from $\cG_\theta$ respectively, both satisfying
Assumption~\thmr{ass:phi}, and suppose the surrogate is uniformly accurate on a
set of full prior mass,
\begin{equation}
\varepsilon_s:=\big\|\,\norm{\cG(u)-\cG_\theta(u)}_{\Gamma^{-1}}\,\big\|
_{L^2_{\mu_0}(\cX)}<\infty .
\label{eq:surrerr}
\end{equation}
Then there is $C=C(r)$, depending only on the bounds in
Assumption~\thmr{ass:phi} for $\norm{y}\le r$, such that
\begin{equation}
d_{\mathrm{Hell}}\big(\mu^y,\mu^y_\theta\big)\le C\,\varepsilon_s .
\label{eq:surrbound}
\end{equation}
\end{proposition}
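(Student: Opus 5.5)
The plan is to follow the same template as the proof of Theorem~\thmr{thm:wellposed}, but now perturbing the forward map rather than the data. Write $\Phi(u)=\tfrac12\norm{y-\cG(u)}_{\Gamma^{-1}}^2$ and $\Phi_\theta(u)=\tfrac12\norm{y-\cG_\theta(u)}_{\Gamma^{-1}}^2$, with normalising constants $Z$ and $Z_\theta$. Take $\nu=\mu_0$ as the common dominating measure in \eqr{eq:hell}; this is admissible because both posteriors are absolutely continuous with respect to $\mu_0$ by Theorem~\thmr{thm:existence}. Then
\[
2\,d_{\mathrm{Hell}}^2=\int\big(Z^{-1/2}e^{-\Phi/2}-Z_\theta^{-1/2}e^{-\Phi_\theta/2}\big)^2\dd\mu_0 .
\]
I add and subtract $Z^{-1/2}e^{-\Phi_\theta/2}$ and use $(a+b)^2\le2a^2+2b^2$. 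This splits the bound into two terms:
\[
I_1=\frac{2}{Z}\int\big(e^{-\Phi/2}-e^{-\Phi_\theta/2}\big)^2\dd\mu_0,\qquad
I_2=2\big|Z^{-1/2}-Z_\theta^{-1/2}\big|^2\,Z_\theta .
\]

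\textbf{Step 1: lower bounds on $Z$ and $Z_\theta$.} By the argument of Theorem~\thmr{thm:existence}, both $Z$ and $Z_\theta$ are at least $e^{-K(\rho)}\mu_0(B_\rho)>0$, uniformly for $\norm{y}\le r$. This uses the (A2) bound $K(r)$ for each forward map. Both are also at most $e^{-M}=1$, since $\Phi,\Phi_\theta\ge0$.

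\textbf{Step 2: the pointwise estimate.} I would avoid the (A3)-type Lipschitz route in $u$. The function $t\mapsto e^{-t/2}$ is $\tfrac12$-Lipschitz on $[0,\infty)$, so $|e^{-\Phi/2}-e^{-\Phi_\theta/2}|\le\tfrac12|\Phi-\Phi_\theta|$. Writing $a=\Gamma^{-1/2}(y-\cG(u))$ and $b=\Gamma^{-1/2}(y-\cG_\theta(u))$, we have
\[
|\Phi-\Phi_\theta|=\tfrac12\big|\norm{a}^2-\norm{b}^2\big|\le\tfrac12\norm{a-b}\,(\norm{a}+\norm{b}),
\]
and $\norm{a-b}=\norm{\cG(u)-\cG_\theta(u)}_{\Gamma^{-1}}=:\epsilon(u)$.

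\textbf{The main obstacle} is the factor $\norm{a}+\norm{b}$: it is unbounded in $u$, yet we must bound $\int\epsilon^2(\norm{a}+\norm{b})^2\dd\mu_0$ by $\varepsilon_s^2$ alone. I would first try to sharpen the pointwise bound so that the growing factor is absorbed by the exponential. The identity
\[
e^{-\norm{a}^2/4}-e^{-\norm{b}^2/4}
\]
is controlled through $\sup_t |t|e^{-t^2/4}<\infty$ along the segment from $a$ to $b$. Concretely, $\nabla_z e^{-\norm{z}^2/4}=-\tfrac12 z\,e^{-\norm{z}^2/4}$ is uniformly bounded, by $(2e)^{-1/2}$. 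The mean value theorem in $\R^k$ then gives
\[
|e^{-\Phi/2}-e^{-\Phi_\theta/2}|\le c\,\epsilon(u),\qquad c=(2e)^{-1/2}.
\]
This is a clean global bound that needs no moment assumption, and it resolves the obstacle.

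Step 2 then yields $I_1\le 2c^2\varepsilon_s^2/Z$. For $I_2$, I would use $|Z-Z_\theta|\le\int|e^{-\Phi}-e^{-\Phi_\theta}|\dd\mu_0$. The same gradient argument, applied to $e^{-\norm{z}^2/2}$ whose gradient is bounded by $e^{-1/2}$, together with Cauchy--Schwarz, gives $|Z-Z_\theta|\le e^{-1/2}\norm{\epsilon}_{L^1_{\mu_0}}\le e^{-1/2}\varepsilon_s$. Combined with the elementary estimate $|Z^{-1/2}-Z_\theta^{-1/2}|\le\tfrac12\min(Z,Z_\theta)^{-3/2}|Z-Z_\theta|$ and Step 1, this gives $I_2\le C\varepsilon_s^2$. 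Summing $I_1$ and $I_2$ and taking square roots yields \eqr{eq:surrbound}. The constant $C(r)$ depends only on the lower bound on $Z$ and $Z_\theta$ from (A2), as claimed.
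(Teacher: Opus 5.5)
Your argument is correct and keeps the paper's outer skeleton: the Hellinger distance with $\nu=\mu_0$, the add-and-subtract of $Z^{-1/2}e^{-\Phi_\theta/2}$, lower bounds on $Z,Z_\theta$ from the existence argument, and the mean-value estimate for $Z^{-1/2}$. Where you differ is the key pointwise estimate. The paper stays with the polarisation bound $|\Phi-\Phi_\theta|\le\tfrac12\,\epsilon(u)\,\beta(u)$, where $\epsilon(u)=\norm{\cG(u)-\cG_\theta(u)}_{\Gamma^{-1}}$ and $\beta(u)=2\norm{y}_{\Gamma^{-1}}+\norm{\cG(u)}_{\Gamma^{-1}}+\norm{\cG_\theta(u)}_{\Gamma^{-1}}$. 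It then applies the scalar bound $|e^{-s}-e^{-t}|\le|s-t|$ and asserts that $\beta$ is bounded in $L^2_{\mu_0}$. That is precisely the obstacle you flagged, and as stated it is not quite enough. The product of $\epsilon\in L^2_{\mu_0}$ and $\beta\in L^2_{\mu_0}$ is controlled only in $L^1_{\mu_0}$. That suffices for $|Z-Z_\theta|$, but for the square-root densities it yields only $d_{\mathrm{Hell}}\lesssim\varepsilon_s^{1/2}$, unless $\beta\in L^\infty$. Uniformly bounded outputs are assumed only tacitly in the paper, through its bounded-range remark. You instead regard $e^{-\Phi/2}=e^{-\norm{a}^2/4}$ as a function of $a=\Gamma^{-1/2}(y-\cG(u))\in\R^k$ and use its global gradient bound $(2e)^{-1/2}$. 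The Gaussian then absorbs the growing factor, and $|e^{-\Phi/2}-e^{-\Phi_\theta/2}|\le(2e)^{-1/2}\epsilon(u)$ holds with no moment assumption. This buys a genuinely linear rate with explicit constants: the $r$-dependence enters only through the lower bound on $Z,Z_\theta$ via (A2), and (A3)--(A4) are not used at all. The price is that your argument exploits the exact quadratic form of the potential. The paper's template is phrased through a Lipschitz-type control of $\Phi$ in the forward map, which could in principle transfer to other potentials given suitable, effectively $L^\infty$, bounds on the outputs.
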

\begin{proof}
Write $\Phi$ and $\Phi_\theta$ for the two potentials. By the polarisation
identity for the $\Gamma^{-1}$ inner product,
$|\Phi(u;y)-\Phi_\theta(u;y)|\le\tfrac12\norm{\cG(u)-\cG_\theta(u)}_{\Gamma^{-1}}
\big(2\norm{y}_{\Gamma^{-1}}+\norm{\cG(u)}_{\Gamma^{-1}}
+\norm{\cG_\theta(u)}_{\Gamma^{-1}}\big)$, and the bracket is bounded in
$L^2_{\mu_0}$ by (A2) and (A4). Since $|e^{-a}-e^{-b}|\le|a-b|$ for
$a,b\ge0$, the two unnormalised densities differ in $L^2_{\mu_0}$ by at most a
constant multiple of $\varepsilon_s$, and both normalising constants are bounded
below as in the proof of Theorem~\thmr{thm:existence}. Repeating the two-term
splitting used for Theorem~\thmr{thm:wellposed}, with $\varepsilon_s$ in place of
$\norm{y-y'}$, gives \eqr{eq:surrbound}.
\end{proof}

Inequality \eqr{eq:surrbound} is the licence for the pipeline: it bounds the
error in the \emph{answer} by the error in the forward map, in a metric that
controls every square-integrable posterior expectation. It is conditional, since
$\varepsilon_s$ is not known a priori; \S\ref{sec:results} supplies an estimate
of $\varepsilon_s$ on a held-out test set, and the surrogate posterior inherits
whatever accuracy that estimate certifies.

\subsubsection{Ensemble (particle) approximation}
Sampling $\mu^y$ exactly in $d\!\sim\!2\times10^5$ dimensions is infeasible.
Ensemble Kalman methods (\S\ref{sec:inverse}) approximate $\mu^y$ by a
weighted/unweighted set of particles $\{u^{(j)}\}_{j=1}^{N_e}$ propagated by
the surrogate, replacing the intractable density~\eqr{eq:lebesgue} with its
Gaussian projection characterised by the empirical mean and covariance. The
quality of this Gaussian projection, and the rank deficiency that follows from
$N_e\ll d$, motivate both the reparametrisation of \S\ref{sec:param} and the
localisation of \S\ref{sec:inverse}.

\subsubsection{The prior ensemble}
\label{sec:prior}
The prior $\mu_0$ is represented by an ensemble of $N_e$ realisations of the
petrophysical properties, drawn from a pool of $1400$ geostatistical
realisations supplied with the Tau model (\S\ref{sec:results}). Each
realisation $u^{(j)}$ is flattened in Fortran order to a column of the
$N\times N_e$ ensemble matrix. We treat the pool as a set of samples from an
unknown and generally non-Gaussian prior measure: the ensemble methods of
\S\ref{sec:inverse} require only samples of $\mu_0$, not the algorithm that
generated them. Geologically constructed realisations of this kind can carry
higher-order spatial statistics, such as connected high-permeability geobodies
and sharp facies contacts, that a two-point (Gaussian) covariance does not
represent, and this is the structure the seismic data later sharpen.
It is also the geological content that the VCAE chart (\S\ref{sec:param})
learns to encode, so that the EKI update stays on the manifold of geologically
plausible realisations.

\subsubsection{The observation covariance}
The observation covariance is diagonal and percentage-of-data, with a variance
floor to prevent unbounded weight on near-zero data,
\begin{equation}
\Gamma=\mathrm{diag}\big(\sigma_i^2\big),\qquad
\sigma_i=\max\!\big(r_i\,|d_i|,\ \sigma_i^{\min}\big),\qquad
r_p=0.25,\ r_s=0.30,
\label{eq:gamma}
\end{equation}
which scales noise to each datum's magnitude, equalising signal-to-noise across
multi-decade-range observations (WOPR in m$^3$/day vs.\ WBHP in kPa vs.\
$\Delta V_p$ in m/s). The floor $\sigma_i^{\min}$ is essential for early-time
WWPR, where $|d_i|\!\approx\!0$ before breakthrough would otherwise assign
unbounded weight to precisely the datum the surrogate reproduces least
accurately; we set $\sigma_i^{\min}$ to a small fraction of each response's
peak magnitude. For the joint problem
$\Gamma=\mathrm{blkdiag}(\Gamma_p,\Gamma_s)$; the optional rescaling
$\langle\Gamma_s\rangle\!\approx\!\langle\Gamma_p\rangle$ (joint-noise balancing)
is discussed, and its cost stated, in \S\ref{sec:seismic}.

\subsection{The 4-D seismic forward channel}
\label{sec:seismic}

The enrichment that resolves the inter-well null space is a closed-form
petro-elastic map from the surrogate state to time-lapse seismic velocity.

\subsubsection{What 4-D seismic measures}
A 4-D (time-lapse) seismic survey acquires a baseline and one or more monitor
surveys; their difference images the change in elastic properties caused by
production-induced changes in pressure and fluid saturation. Where water
replaces oil, or pressure drops, the rock's saturated bulk modulus and density
change, shifting the compressional velocity $V_p$. The time-lapse $\Delta V_p$
is therefore a \emph{map of the swept and depleted volume}, precisely the
quantity that fixes inter-well connectivity and is invisible to point-well
production data.

\subsubsection{Gassmann fluid substitution}
For pore pressure $P$, oil saturation $S_o$ and porosity $\phi$, Gassmann's
equation~\citep{gassmann1951} gives the saturated bulk modulus from the dry
frame and fluid:
\begin{equation}
K_{\mathrm{sat}}=K_{\mathrm{dry}}+
\frac{\big(1-K_{\mathrm{dry}}/K_{\mathrm{min}}\big)^2}
{\phi/K_{\mathrm{fl}}+(1-\phi)/K_{\mathrm{min}}-K_{\mathrm{dry}}/K_{\mathrm{min}}^2},
\qquad G_{\mathrm{sat}}=G_{\mathrm{dry}},
\label{eq:gassmann}
\end{equation}
where the mineral modulus $K_{\mathrm{min}}$ is a Voigt--Reuss--Hill
quartz/feldspar average. The shear modulus is fluid-independent (Gassmann's
key assumption). The dry frame is pressure-sensitive (MacBeth
2004,~\cite{macbeth2004}),
\begin{equation}
K_{\mathrm{dry}}(P)=\frac{K_\infty}{1+S_K\,e^{-P_{\mathrm{eff}}/P_c}},\quad
G_{\mathrm{dry}}(P)=\frac{G_\infty}{1+S_G\,e^{-P_{\mathrm{eff}}/P_c}},\quad
P_{\mathrm{eff}}=P_{\mathrm{ob}}-P,
\label{eq:macbeth}
\end{equation}
with $P_{\mathrm{eff}}$ clamped at $0$ for numerical safety. The pore fluid is
a Batzle--Wang~\citep{batzle1992} oil/water mixture under the Reuss
(isostress) average,
\begin{equation}
\frac{1}{K_{\mathrm{fl}}}=\frac{S_w}{K_w}+\frac{S_o}{K_o},\qquad
\rho_{\mathrm{fl}}=S_w\rho_w+S_o\rho_o,\qquad S_w=1-S_o,
\label{eq:fluid}
\end{equation}
with $K_o,\rho_o$ from the pressure-dependent Batzle--Wang oil correlations.
Explicitly, with $P$ in MPa, temperature $T_C$ in $^\circ$C, gas gravity
$G$ and reference oil density $\rho_0=141.5/(\mathrm{API}+131.5)$, we use
the Batzle--Wang dead-oil density and acoustic velocity
\begin{align}
\rho_{\mathrm{oil}}&=\frac{\rho_0+(0.00277P-1.71\!\times\!10^{-7}P^3)(\rho_G-1.15)^2+3.49\!\times\!10^{-4}P}
{0.972+3.81\!\times\!10^{-4}(T_C+17.78)^{1.175}},
\label{eq:bwrho}\\
V_{\mathrm{oil}}&=2096\sqrt{\tfrac{\rho_{\mathrm{ps}}}{2.6-\rho_{\mathrm{ps}}}}
-3.7\,T_C+4.64\,P+0.0115\,T_C P\big(4.12\sqrt{1.08/\rho_{\mathrm{ps}}-1}-1\big),
\label{eq:bwv}
\end{align}
and $K_o=\rho_{\mathrm{oil}}V_{\mathrm{oil}}^2$, while the brine modulus
$K_w=1/C_w$ (in consistent units) uses the configured water compressibility.
The compressional velocity and impedance are
\begin{equation}
V_p=\sqrt{\frac{K_{\mathrm{sat}}+\tfrac43 G_{\mathrm{sat}}}{\rho_b}},\qquad
\rho_b=(1-\phi)\rho_{\mathrm{min}}+\phi\rho_{\mathrm{fl}},\qquad I_p=\rho_b V_p.
\label{eq:vp}
\end{equation}
The map~\eqr{eq:gassmann}--\eqr{eq:vp} is backend-agnostic, input-clipped
against degenerate cells, and \emph{fully differentiable}, so it can sit inside
an automatic-differentiation loop; for ensemble assimilation it is dispatched
across GPUs, evaluated at the two acquisition steps $[2,10]$ (baseline,
monitor), and replaces inactive cells with the active-cell mean to keep the
downstream transform well-behaved.

\subsubsection{The 4-D observable and its compression}
The seismic datum is the depth-averaged absolute time-lapse change
\begin{equation}
\Delta(x,y)=\Big|\,\langle V_p(x,y,\cdot,t_1)\rangle_z
-\langle V_p(x,y,\cdot,t_0)\rangle_z\,\Big|,
\label{eq:delta}
\end{equation}
matching the monitor-minus-baseline convention of field practice. Two
encodings are supported.

\paragraph{Raw ($k_s=n_xn_y=4554$).}
The full areal map $\Delta$ is used directly, then row-scaled to a one-digit
dynamic range so each datum contributes comparably to the misfit. This is the
mode used in all reported inversion and prediction runs ($k_s=4554$, raw mode).

\paragraph{DCT ($k_s=K_1K_2$).}
The 2-D discrete cosine transform $\cD$ of $\Delta$ is truncated to its
leading $K_1\times K_2$ low-frequency block, $\tilde d_s=\Pi_K\cD\,\Delta$,
keeping a fraction $0.05$ of modes. Because $\cD$
is orthonormal (a unitary change of basis), \emph{Parseval's identity} holds:
\begin{equation}
\norm{\cD\Delta_{\mathrm{obs}}-\cD\Delta_{\mathrm{sim}}}_2
=\norm{\Delta_{\mathrm{obs}}-\Delta_{\mathrm{sim}}}_2,
\label{eq:parseval}
\end{equation}
so the innovation norm, and hence the Mahalanobis misfit under a diagonal
$\Gamma_s$, is preserved by the transform. White physical-space noise maps
to white coefficient-space noise, making a diagonal $\Gamma_s$ in DCT space a
defensible default. Truncation discards high-wavenumber modes that carry little
assimilable signal (and that the depth-averaging has already largely removed),
cutting $k_s$ by $\sim\!20\times$ while preserving the swept-volume signature.
The implementation uses the Makhoul FFT-based DCT for even-length axes with a
guardrail enforcing the parity requirement.

\subsubsection{Noise balancing in the joint problem}
Seismic ($V_p\!\sim\!10^3$ m/s) and production (WWPR$\sim\!10^{-2}$ m$^3$/day)
differ by orders of magnitude. Optionally, the seismic block $\Gamma_s$ is
rescaled so its mean diagonal matches that of $\Gamma_p$, equalising the two
modalities' leverage on the Kalman gain. This is a
\emph{regularisation choice, not a correction}: if $\Gamma_p$ and $\Gamma_s$ are
both percentage-of-data covariances, their relative leverage is already
correct, and rescaling deliberately departs from the likelihood of the stated
data model~\eqr{eq:datamodel}, trading statistical fidelity for numerical
balance between an $O(10^2)$-dimensional production channel and an
$O(10^3)$-dimensional seismic channel. Its cost is a posterior that is no longer
the exact Bayesian posterior for $\Gamma$; we report the reported run with this
option and note in \S\ref{sec:limitations} that the seismic information gain
should be shown to survive without it.

Figure~\ref{fig:evoseis} renders the petro-elastic channel in three dimensions:
the compressional velocity $V_p$ synthesised by \eqr{eq:gassmann}--\eqr{eq:vp}
from the surrogate state, at the two seismic acquisition steps (baseline and
monitor) that define the time-lapse difference \eqr{eq:delta}.

\begin{figure}[h]\centering
\begin{subfigure}{0.49\linewidth}\fbox{\includegraphics[width=\linewidth]{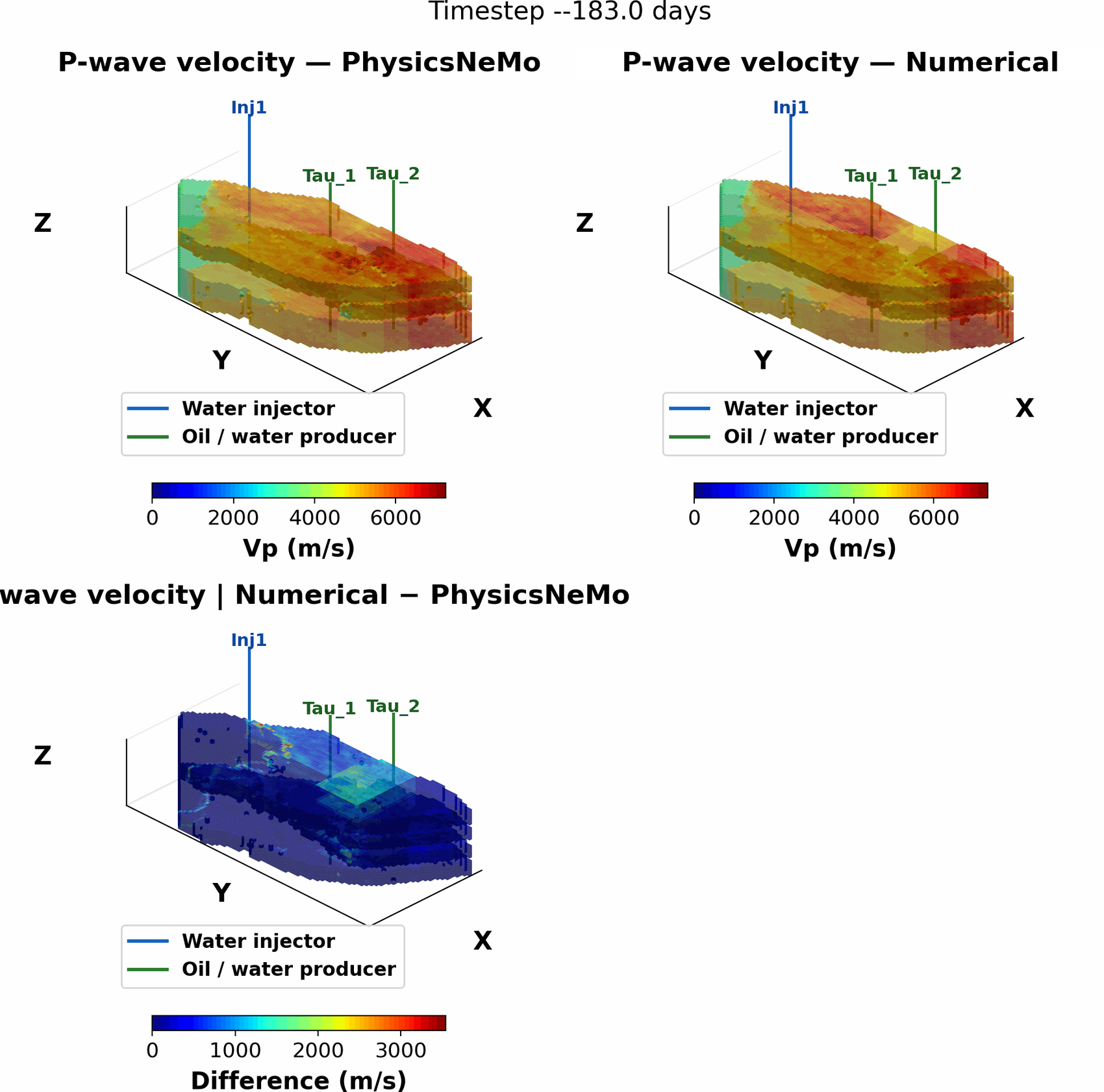}}\caption{Baseline survey, $183$~d}\end{subfigure}\hfill
\hfill
\begin{subfigure}{0.49\linewidth}\fbox{\includegraphics[width=\linewidth]{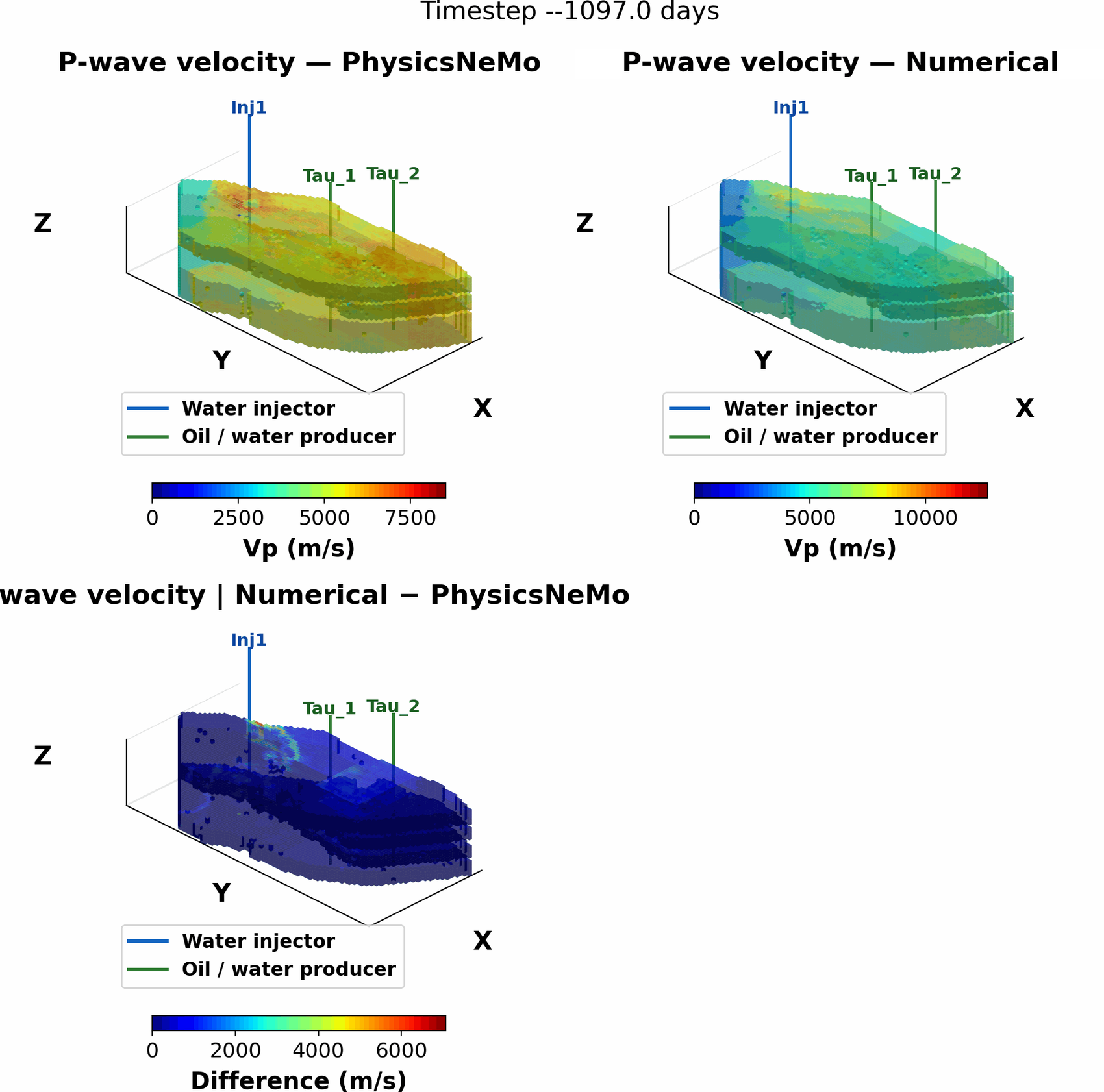}}\caption{Monitor survey, $1097$~d}\end{subfigure}
\caption{Three-dimensional compressional velocity $V_p$ (m/s) of the synthetic
Tau reservoir model at the two 4-D seismic acquisition steps, synthesised by the
closed-form Gassmann petro-elastic transform \eqr{eq:gassmann}--\eqr{eq:vp} from the
autoregressive surrogate's own pressure and saturation cubes. Each panel shows
the surrogate-derived $V_p$ (left), the finite-volume reference (centre) and
their absolute difference (right), with \texttt{Inj1}, \texttt{Tau\_1},
\texttt{Tau\_2} marked. The velocity slow-down in the water-swept region between
baseline and monitor is precisely the time-lapse signature $\Delta V_p$
\eqr{eq:delta} that the joint inversion assimilates to resolve the inter-well
connectivity invisible to production data.}
\label{fig:evoseis}
\end{figure}

\subsection{Parametrisation of permeability and porosity}
\label{sec:param}

\subsubsection{Why reduce dimension at all}
The grid parameter has dimension $d=2N\approx2\times10^5$, but the ensemble
has only $N_e=190$ members. The sample covariance
\begin{equation}
\hat C_{uu}=\frac{1}{N_e-1}\sum_{j=1}^{N_e}(u^{(j)}-\bar u)(u^{(j)}-\bar u)^\top
\in\R^{d\times d}
\label{eq:cuu}
\end{equation}
has rank at most $N_e-1=189\ll d$: it is profoundly rank-deficient. The Kalman
update can only move the ensemble within the $189$-dimensional span of the
prior anomalies, and any spurious long-range correlations in $\hat C_{uu}$
inject artefacts into the update. Two remedies are available: \emph{reduce} the
parameter dimension so the ensemble adequately samples it (this section), and
\emph{taper} the covariance to kill spurious correlations
(localisation, \S\ref{sec:inverse}). The framework supports both the
reparametrisation charts and covariance localisation; for the
reported run, grid-space inversion with $N_e=190$ furnished sufficient rank,
but we document the machinery because it is essential at larger scales.

\subsubsection{DCT parametrisation (linear chart)}
The DCT chart writes the field in a truncated cosine basis,
\begin{equation}
\tilde m=\Pi_K\cD\,m,\qquad m=\cD^{-1}\Pi_K^\top\tilde m,\qquad
K=\big\lceil r_{\mathrm{DCT}}\,n_xn_y\big\rceil,
\label{eq:dctparam}
\end{equation}
with a retention ratio $r_{\mathrm{DCT}}=15\%$ keeping $\sim\!225$ of the $4554$
areal modes. Because
permeability fields are spatially smooth (correlated over
many cells), their energy concentrates in low-wavenumber DCT coefficients, so a
$15\%$ truncation captures most of the variance while collapsing the effective
dimension from $\sim\!10^5$ to $\sim\!10^2$, now \emph{below} $N_e$, so the
reduced sample covariance is full-rank. The chart is linear and orthonormal, so
the prior pushes forward to a Gaussian in $\tilde m$ and the EKI update is
exact in the reduced coordinates. Its limitation: a linear basis cannot
represent sharp, multi-modal facies boundaries without Gibbs ringing.

\subsubsection{VCAE parametrisation (nonlinear chart)}
A variational convolutional auto-encoder learns a \emph{nonlinear} chart from
the prior ensemble,
\begin{equation}
\tilde m=\mathrm{enc}_\theta(m)\in\R^{\mathrm{latent}},\qquad
m=\mathrm{dec}_\theta(\tilde m),
\label{eq:vcae}
\end{equation}
trained by maximising the evidence lower bound
\begin{equation}
\mathcal{L}_{\mathrm{ELBO}}(\theta)=\E_{q_\theta(\tilde m|m)}
\big[\log p_\theta(m|\tilde m)\big]
-\mathrm{KL}\!\big(q_\theta(\tilde m|m)\,\|\,\cN(0,I)\big),
\label{eq:elbo}
\end{equation}
so that the latent prior is standard Gaussian by construction. Inverting in
$\tilde m$ and decoding back guarantees that \emph{every} ensemble member,
prior, intermediate, and posterior, is a geologically plausible draw from
the learned manifold: geobodies stay connected, facies contacts stay sharp, and the
posterior can be genuinely multi-modal (a property a single Gaussian or a
linear DCT basis cannot represent). This is why VCAE is the preferred chart
when the prior is a complex, non-Gaussian geostatistical model: it bakes the
geological prior into the search coordinates, so the Gaussian EKI update in
latent space corresponds to a strongly non-Gaussian, prior-consistent move in
grid space. It requires a pre-trained auto-encoder checkpoint, and is the
default reparametrisation chart.

\begin{remark}[Geometry]
DCT is a fixed orthonormal frame: cheap, exact, but blind to geology beyond
smoothness. VCAE is a learned curved chart: it adapts the coordinate system to
the prior manifold, trading exactness for geological fidelity. Both reduce the
EKI search to a space the ensemble can sample, restoring full rank to the
reduced cross-covariance $C_{up}$.
\end{remark}

\subsection{The inverse engine: aREKI and the two-stage schedule}
\label{sec:inverse}

\subsubsection{Ensemble Kalman Inversion}
EKI~\citep{iglesias2013ensemble} approximates the posterior by an interacting
particle system. With $G^{(j)}=\cG_\theta(u^{(j)})$ and ensemble means
$\bar u,\bar G$, define the empirical cross- and data-covariances
\begin{equation}
C_{up}=\tfrac{1}{N_e-1}\!\sum_j(u^{(j)}-\bar u)(G^{(j)}-\bar G)^\top,\qquad
C_{pp}=\tfrac{1}{N_e-1}\!\sum_j(G^{(j)}-\bar G)(G^{(j)}-\bar G)^\top.
\label{eq:cov}
\end{equation}
The (perturbed-observation) analysis update at iteration $k$ is
\begin{equation}
u^{(j)}\leftarrow u^{(j)}+\underbrace{C_{up}\big(C_{pp}+\alpha_k\Gamma\big)^{-1}}_{=:K_k\ (\text{Kalman gain})}
\big(y+\sqrt{\alpha_k}\,\eta^{(j)}-G^{(j)}\big),\qquad\eta^{(j)}\sim\cN(0,\Gamma).
\label{eq:eki}
\end{equation}
The $k\times k$ solve ($k=k_p$ for production-only, $k_p+k_s$ jointly) is
trivial; cost is dominated by the $N_e$ surrogate rollouts per iteration. It is
important to distinguish the \emph{assimilated} data dimension from the
\emph{predicted} one: the surrogate always rolls out the full horizon $T=20$,
producing $9\!\times\!20=180$ well-response values per member, but only the
first $T_a=15$ report steps enter the misfit, so
$k_p=9\!\times\!15=135$ and the final $5$ steps are never seen by the update
(\S\ref{sec:results}). In the mean-field/continuum limit
the ensemble mean follows the preconditioned gradient flow
$\dot{\bar u}=-C_{uu}D_u\Phi$, so EKI is preconditioned gradient descent on the
misfit~\eqr{eq:phi} in the $C_{uu}$-metric, converging to the regularised
minimiser~\eqr{eq:map} in the linear-Gaussian
limit~\citep{schillings2017convergence}.

\subsubsection{Continuous-time limit and the gradient-flow structure}
\label{sec:gradflow}
To see why~\eqr{eq:eki} is a sound posterior sampler, take the
artificial-time limit $\alpha_k\to\infty$ with $\sum_k\alpha_k^{-1}=1$ and let
$\Delta s=\alpha_k^{-1}$ be the pseudo-time increment. Subtracting $u^{(j)}$
from both sides of~\eqr{eq:eki} and dividing by $\Delta s$, the deterministic
part of the update becomes, to leading order,
\begin{equation}
\frac{u^{(j)}_{k+1}-u^{(j)}_k}{\Delta s}
= C_{up}\,\Gamma^{-1}\big(y-G^{(j)}\big)+\mathcal{O}(\Delta s),
\label{eq:disc}
\end{equation}
since $(C_{pp}+\alpha_k\Gamma)^{-1}\to\alpha_k^{-1}\Gamma^{-1}$. Linearising
the forward map $G^{(j)}=\cG_\theta(u^{(j)})\approx\cG_\theta(\bar u)
+J(u^{(j)}-\bar u)$ gives $C_{up}=\hat C_{uu}J^\top$ and, for the ensemble mean,
the continuous-time gradient flow
\begin{equation}
\frac{\dd\bar u}{\dd s}=-\hat C_{uu}\,J^\top\Gamma^{-1}\big(\cG_\theta(\bar u)-y\big)
=-\hat C_{uu}\,D_u\Phi(\bar u;y),
\label{eq:flow}
\end{equation}
with $\Phi$ the misfit potential~\eqr{eq:phi}. Equation~\eqr{eq:flow} is
\emph{preconditioned gradient descent} on $\Phi$ in the metric induced by the
ensemble covariance $\hat C_{uu}$: the ensemble cannot move outside
$\mathrm{span}\{u^{(j)}-\bar u\}$, but within that subspace it descends the
misfit along the steepest direction in the $\hat C_{uu}$-geometry. In the
linear-Gaussian case the flow~\eqr{eq:flow} converges to the
MAP/Tikhonov minimiser~\eqr{eq:map}; the spread of the particles around the
mean reproduces the posterior covariance~\citep{schillings2017convergence}.
This is the rigorous sense in which aREKI \emph{samples}~\eqr{eq:lebesgue}
rather than merely optimising it.

\subsubsection{Adaptive regularisation: aREKI}
The inflation $\alpha_k$ tempers each update; its adaptive selection is the
``a'' in aREKI and follows \citet{iglesias2021adaptive}, building on the
regularising EKI of \citet{iglesias2016regularising}, as in our earlier
surrogate-based history-matching studies \citep{etienam2024modulus,etienam2024norne}. Writing
$J(m)=\tfrac12\norm{y-\cG_\theta(m)}_{\Gamma^{-1}}^2$ for the per-member misfit
and $N_{\mathrm{op}}=k$ for the observation dimension, we choose
\begin{equation}
\alpha_k=\min\!\Big(\max\big(\alpha_{\mathrm{left}},\alpha_{\mathrm{right}}\big),\;
\big(1-\textstyle\sum_{j<k}\alpha_j^{-1}\big)^{-1}\Big),\qquad
\alpha_{\mathrm{left}}=\frac{N_{\mathrm{op}}}{2\,\langle J\rangle},\;\;
\alpha_{\mathrm{right}}=\sqrt{\frac{N_{\mathrm{op}}}{2\,\mathrm{Var}(J)}}.
\label{eq:alpha}
\end{equation}
The interpretation is the \emph{Morozov discrepancy principle}: at convergence
a well-calibrated misfit satisfies $\E[2J]\approx N_{\mathrm{op}}$ (one unit of
$\chi^2$ per observation). $\alpha_{\mathrm{left}}$ sizes the step so the
\emph{mean} misfit relaxes toward $N_{\mathrm{op}}$; $\alpha_{\mathrm{right}}$
guards against ensemble collapse when the misfit \emph{variance} is large; the
final cap enforces the ES-MDA budget $\sum_k\alpha_k^{-1}=1$ so that the
product of tempered likelihoods reconstructs the full
likelihood,
\begin{equation}
\prod_{k}\exp\!\Big(-\tfrac{\Phi}{\alpha_k}\Big)
=\exp\!\Big(-\Phi\textstyle\sum_k\alpha_k^{-1}\Big)=e^{-\Phi}.
\label{esmdaid}
\end{equation}
The iteration terminates early when the budget is exhausted
($\sum_{l\le k}\alpha_l^{-1}\ge1$). After each update a physics projection
clips $K\in[50,2\times10^4]$ mD and $\phi\in[10^{-3},0.25]$ and resets
sealed-fault zeros, returning the ensemble to the physically admissible set.

\subsubsection{ES-MDA as exact tempered assimilation}
\label{sec:esmda}
The joint phase uses ES-MDA rather than aREKI. ES-MDA assimilates the data
$N_a$ times, each with an \emph{inflated} covariance $\alpha_k\Gamma$ and a
correspondingly inflated observation perturbation
$\varepsilon_k\sim\cN(0,\alpha_k\Gamma)$, under the constraint
$\sum_{k=1}^{N_a}\alpha_k^{-1}=1$. The constraint is exactly what makes the
multiple assimilation equivalent to a single Bayesian update for a linear
Gaussian model: the product of the $N_a$ tempered Gaussian likelihoods is
\begin{equation}
\prod_{k=1}^{N_a}\exp\!\Big(-\tfrac{1}{2\alpha_k}\norm{y-\cG(u)}_{\Gamma^{-1}}^2\Big)
=\exp\!\Big(-\tfrac12\norm{y-\cG(u)}_{\Gamma^{-1}}^2\!\sum_k\alpha_k^{-1}\Big)
=\exp\!\Big(-\tfrac12\norm{y-\cG(u)}_{\Gamma^{-1}}^2\Big),
\label{eq:esmdaexact}
\end{equation}
i.e.\ the full likelihood. Distributing the update over $N_a$ gentler steps
linearises the forward map locally at each step and so handles the
nonlinearity of $\cG_\theta$ far better than a single Gauss--Newton step, while
the \emph{deterministic} schedule $\alpha_k=N_a$ avoids the variance-criterion
degeneracy~\eqr{eq:alpha} that afflicts aREKI on a pre-conditioned ensemble.
This is the precise reason the schedule switches algorithms between phases.

\subsubsection{Covariance localisation}
For $N_e<d$, the rank-deficient $\hat C_{uu}$ carries spurious long-range
sample correlations. Furrer/Gaspari--Cohn localisation~\citep{furrer2007,
gaspari1999} tapers them by a Schur (element-wise) product with a compactly
supported correlation function,
\begin{equation}
K_k^{\mathrm{loc}}=L\odot K_k,\qquad
\rho(r)=\begin{cases}
1-\tfrac53 r^2+\tfrac58 r^3+\tfrac12 r^4-\tfrac14 r^5,&0\le r\le1,\\[2pt]
4-5r+\tfrac53 r^2+\tfrac58 r^3-\tfrac12 r^4+\tfrac1{12}r^5-\tfrac{2}{3r},&1<r\le2,\\
0,&r>2,
\end{cases}\quad r=\frac{\norm{x_i-x_j}}{r_0},
\label{eq:gc}
\end{equation}
which zeroes correlations beyond $2r_0$, raising the effective rank of the
update and suppressing noise-driven updates at distant cells. For the reported
run $N_e=190$ gave sufficient rank and localisation was disabled; it is
recommended for $N_e<100$ or finer grids.

\subsubsection{The two-stage production$\to$joint schedule}
The inversion proceeds in three steps.
\textbf{Phase 1 (production aREKI):} condition the prior on $d_p$ only, using
the adaptive schedule~\eqr{eq:alpha}, which is well-behaved while the misfit
variance is informative.
\textbf{Inter-phase inflation:}
$m_j^{\mathrm{inf}}=\bar m+\gamma(m_j-\bar m)$ with $\gamma=1.01$, giving
$\mathrm{Cov}(m^{\mathrm{inf}})=\gamma^2\mathrm{Cov}(m)$, a $\sim\!2\%$
variance restoration that re-opens the collapsed ensemble spread just enough to
keep the Phase-2 cross-covariance $C_{up}$ full-rank, without discarding
Phase-1 information.
\textbf{Phase 2 (joint ES-MDA):} assimilate the stacked
$d=(d_p,d_s)$, $\Gamma=\mathrm{blkdiag}(\Gamma_p,\Gamma_s)$ (with seismic
balancing), minimising the joint Mahalanobis cost
\begin{equation}
J(m)=\big(g_p(m)-d_p\big)^\top\Gamma_p^{-1}\big(g_p(m)-d_p\big)
+\big(g_s(m)-d_s\big)^\top\Gamma_s^{-1}\big(g_s(m)-d_s\big),
\label{eq:jointcost}
\end{equation}
where $g_s=$ Gassmann$\circ$ rollout (\S\ref{sec:seismic}). ES-MDA's
\emph{deterministic} schedule $\alpha_k=N_a$ is used here because the
Phase-1-conditioned ensemble has collapsed misfit variance, in which regime the
aREKI variance criterion $\alpha_{\mathrm{right}}$ in~\eqr{eq:alpha}
degenerates, a subtle but important robustness choice.

\subsubsection{Posterior selection}
Each phase's posterior is scored by a dimensionless misfit normalised against
the prior RMSE in each modality, tie-break order P2$>$P1. On the Tau model,
$s_{\mathrm{P1}}=2.556$ (production) and $s_{\mathrm{P2}}=0.876$ (joint), so
P2 is selected with gap $1.68$.

\begin{equation}
K_k=C_{up}\big(C_{pp}+\alpha_k\Gamma\big)^{-1}.
\label{eq:gain}
\end{equation}

\begin{algorithm}[H]
\caption{\textsc{TwoStage-aREKI}, joint seismic--production inversion}
\label{alg:areki}
\small
\begin{algorithmic}[1]
\Require prior $\{u^{(j)}\}_{j=1}^{190}$ (optionally in DCT/VCAE coords); $d_p,d_s$; $\Gamma_p,\Gamma_s$; $N_{\mathrm{iter}}{=}10$; $\gamma{=}1.01$
\State \textbf{Phase 1 (aREKI, production):}
\For{$k=1,\dots,N_{\mathrm{iter}}$}
  \State $G^{(j)}\gets g_p(u^{(j)})$ via Alg.~\ref{alg:inference}; $C_{up},C_{pp}\gets$~\eqr{eq:cov}
  \State $\alpha_k\gets$~\eqr{eq:alpha}; $K_k\gets$~\eqr{eq:gain}; update $u^{(j)}$ by~\eqr{eq:eki}; (optional $K_k\!\to\!L\odot K_k$)
  \State \textsc{ClipToBounds}: clip $K,\phi$; reset fault zeros
  \State \textbf{if} $\sum_{l\le k}\alpha_l^{-1}\ge1$ \textbf{break}
\EndFor
\State \textbf{Inflation:} $u^{(j)}\gets\bar u+\gamma(u^{(j)}-\bar u)$
\State \textbf{Phase 2 (ES-MDA, joint):} stack $d{=}(d_p,d_s)$, $\Gamma{=}\mathrm{blkdiag}(\Gamma_p,\Gamma_s)$ (balanced)
\For{$k=1,\dots,N_a$}
  \State $G^{(j)}\gets(g_p(u^{(j)}),\,g_s(u^{(j)}))$, $g_s=$ Gassmann$\circ$rollout; $\alpha_k\gets N_a$; update by~\eqr{eq:eki}; \textsc{ClipToBounds}
\EndFor
\State score P1, P2 (tie-break P2$>$P1); extract P10/P50/P90/MAP/best
\State \Return selected posterior ensemble \& statistics
\end{algorithmic}
\end{algorithm}

\subsubsection{Non-monotone convergence and Jensen bias}
\label{sec:nonmono}
A subtlety of nonlinear EKI worth stating precisely: the ensemble-mean misfit
need not decrease monotonically, even as the best member improves steadily.
Expanding the mean update~\eqr{eq:eki},
\begin{equation}
\bar u_{k+1}=\bar u_k+K_k\big(y-\cG_\theta(\bar u_k)\big)
-K_k\underbrace{\big(\overline{\cG_\theta(u_k)}-\cG_\theta(\bar u_k)\big)}_{\delta_{\mathrm{NL}}\ (\text{Jensen gap})},
\label{eq:jensen}
\end{equation}
the term $\delta_{\mathrm{NL}}$ is the ensemble nonlinearity bias: by Jensen's
inequality $\overline{\cG_\theta(u)}\ne\cG_\theta(\bar u)$ whenever
$\cG_\theta$ is nonlinear, and its sign is indefinite. Hence the
\emph{mean}-RMSE trajectory can transiently rise while individual members
descend the regularised misfit~\eqr{eq:map}; the \emph{best}-member RMSE,
which tracks the unregularised data term, decreases near-monotonically. This is
the standard signature of nonlinear ensemble Kalman dynamics and is benign: the
posterior statistics (P10/P50/P90/MAP) are extracted from the final ensemble,
not from the mean trajectory. The fixed ES-MDA budget
$\sum_k\alpha_k^{-1}=1$ guarantees the tempered updates compose to the full
Bayesian update~\eqr{esmdaid} regardless of the transient behaviour.

\subsubsection{Why joint helps: a linearised information argument}
\label{sec:whyjoint}
The dominance $s_{\mathrm{P2}}\ll s_{\mathrm{P1}}$ is structural, not a tuning
artefact. Linearise the forward map about the current mean, $\cG_\theta(u)
\approx \cG_\theta(\bar u)+J_\bullet(u-\bar u)$, with Jacobian
$J_p\in\R^{k_p\times d}$ (production) and $J_s\in\R^{k_s\times d}$ (seismic).
The Gaussian posterior precision is the prior precision plus the
Fisher information of each modality,
\begin{equation}
C_{\mathrm{post}}^{-1}=C_0^{-1}+\underbrace{J_p^\top\Gamma_p^{-1}J_p}_{\mathcal I_p}
+\underbrace{J_s^\top\Gamma_s^{-1}J_s}_{\mathcal I_s}.
\label{eq:fisher}
\end{equation}
Production data, sampled at three wells over $20$ steps, give
$\mathrm{rank}(\mathcal I_p)\le k_p=O(10^2)$: the production Fisher information
is supported on a low-dimensional subspace, the directions of $u$ to which
the well responses are sensitive, leaving a vast \emph{null space}
$\ker(\mathcal I_p)$ along which the posterior reverts to the prior. The
inter-well permeability connectivity lies largely in this null space, because
two permeability fields with the same near-well values but different inter-well
connectivity produce nearly identical well rates. The seismic map, by contrast, has
$\mathrm{rank}(\mathcal I_s)\le k_s=4554$ and its rows are areal sensitivities
$\partial\Delta V_p(x,y)/\partial u$ that are non-zero precisely in the swept
inter-well region. Hence $\mathcal I_s$ supplies independent constraints across
$\ker(\mathcal I_p)$, and
\begin{equation}
\mathcal I_p\preceq\mathcal I_p+\mathcal I_s
\;\Longrightarrow\; C_{\mathrm{post}}^{\mathrm{joint}}\preceq C_{\mathrm{post}}^{\mathrm{prod}},
\label{eq:loewner}
\end{equation}
in the L\"owner order. This is \emph{elementary} monotonicity of
the Gaussian posterior precision under added data, linearised about the ensemble
mean of a strongly nonlinear map; it is a motivating inequality, not a proof of
dominance for the true nonlinear problem or for real seismic with model error.
It nonetheless captures the mechanism behind the recovered
\texttt{Inj1}$\to$\texttt{Tau\_2} high-permeability corridor (Fig.~\ref{fig:joint_perm}) and the
collapsed WWPR histogram (Fig.~\ref{fig:hist}): water rate is governed by the
swept volume, which seismic sees directly and production sees only after
breakthrough. The empirical demonstration, not the inequality, is the evidence.

\section{Implementation and Results}
\label{sec:impl}

\subsection{Workflow orchestration and distributed computing}
\label{sec:hpc}

\subsubsection{Workflow stages}
The workflow executes as a sequence of six stages (Figure~\ref{fig:workflow}):
(1) optional extraction of the simulator output into tensors; (2) autoregressive
forward training; (3) evaluation of the training diagnostics; (4) the MoE/CCR
comparison surrogate; (5) the comparison of the surrogate against the
finite-volume reference; and (6) the two-stage inverse problem. A run is
specified by three choices, namely the run mode, whether training is
distributed, and the operator type, together with the number of GPU ranks
$1\le R\le$ \#GPUs. Only the forward stage and, optionally, the comparison and
inverse stages use $R>1$ ranks; the remaining stages run on a single rank. Each
stage is timed and logged, and a failure is reported against the stage in
which it occurred.

\begin{figure}[h]\centering
\resizebox{0.93\textwidth}{!}{%
\begin{tikzpicture}[node distance=0.5cm and 1.0cm]
\node (start)[startstop]{Start};
\node (yaml)[proc,below=of start]{Run specification\\(mode, distributed training, operator type)};
\node (gpu)[io,below=of yaml]{Detect GPUs; set $R$; initialise NCCL};
\node (d1)[dec,below=of gpu]{Extract\\data?};
\node (extract)[proc,left=1.4cm of d1]{Data extraction\\simulator output$\to$tensors};
\node (fwd)[proc,below=0.7cm of d1]{Forward training, AR-PINO, $K$-TBPTT\\$R\times$GB200 (DDP/NCCL)};
\node (plot)[proc,below=of fwd]{Training diagnostics\\13 loss components};
\node (moe)[proc,below=of plot]{MoE / CCR baseline\\Cluster-Classify-Regress};
\node (cmp)[proc,below=of moe]{FV comparison\\$R^2/L^2$ vs FV; Gassmann seismic};
\node (inv)[proc,right=1.6cm of cmp]{Inverse problem\\Two-stage aREKI$\to$ES-MDA};
\node (post)[io,above=of inv]{Posterior P10/P50/P90/MAP/best;\\score P1 vs P2 (select P2)};
\node (stop)[startstop,above=of post]{Timestamped logs; End};
\draw[arrow](start)--(yaml);
\draw[arrow](yaml)--(gpu);
\draw[arrow](gpu)--(d1);
\draw[arrow](d1)--node[above]{\scriptsize Yes}(extract);
\draw[arrow](extract)|-(fwd);
\draw[arrow](d1)--node[right]{\scriptsize No}(fwd);
\draw[arrow](fwd)--(plot);
\draw[arrow](plot)--(moe);
\draw[arrow](moe)--(cmp);
\draw[arrow](cmp)--(inv);
\draw[arrow](inv)--(post);
\draw[arrow](post)--(stop);
\end{tikzpicture}}
\caption{End-to-end orchestration of the pipeline. Each box is a distributed
stage logged to a timestamped directory; the forward stage runs data-parallel
over $R$ GB200 GPUs, the inverse stage drives the two-stage aREKI$\to$ES-MDA
joint inversion.}
\label{fig:workflow}
\end{figure}

Algorithm~\ref{alg:master} collects the entire mathematical pipeline, from
the raw simulation ensemble to the selected posterior, in a single
end-to-end statement, cross-referencing the equations and per-stage algorithms
that define each step. It is the formal counterpart of the cartoon in
Figure~\ref{fig:cartoon}.

\begin{algorithm}[H]
\caption{\textsc{iSIM} (integrated Surrogate Inversion and Modelling), the
end-to-end surrogate-accelerated Bayesian history-match pipeline of this paper}
\label{alg:master}
\small
\begin{algorithmic}[1]
\Require reservoir-simulation ensemble $\{(u^{(j)},x_{0:T}^{(j)})\}_{j=1}^{200}$; observed data
$d=(d_p,d_s)$; noise $\Gamma=\mathrm{blkdiag}(\Gamma_p,\Gamma_s)$; $N_e{=}190$, $N_a{=}10$
\Statex \textbf{\emph{Stage A, data \& normalisation}} \hfill (\S\ref{sec:blackoil})
\State parse simulator outputs $\to$ tensors $\{P,S_w,S_o\}\in\R^{200\times T\times n_xn_yn_z}$; cache maxima, static $(\log K,\phi,f)$
\Statex \textbf{\emph{Stage B, train the autoregressive PINO surrogate}} \hfill (\S\ref{sec:forward}, Alg.~\ref{alg:train})
\State $\theta^\star\!\gets\!\arg\min_\theta\ \cL_{\mathrm{AR}}(\theta)+\lambda_R\,\E\!\sum_n\norm{R(\hat x_{n+1},\hat x_n)}^2$ \Comment{closed loop, $K$-TBPTT, noise~\eqr{eq:noise}}
\State \quad distributed over $R$ GB200 GPUs by DDP/NCCL all-reduce~\eqr{eq:ddp}
\Statex \textbf{\emph{Stage C, forward observables}} \hfill (\S\ref{sec:forward},\,\ref{sec:seismic})
\State define $g_p(u)=\Pi_{\mathrm{well}}\!\circ\cG_{\theta^\star}^{(T)}(u)$ (Peaceman) and $g_s(u)=\mathrm{Gassmann}\!\circ\cG_{\theta^\star}^{(T)}(u)$~\eqr{eq:delta}
\State $\cG(u)\gets(g_p(u),g_s(u))$; \quad potential $\Phi(u;d)=\tfrac12\norm{d-\cG(u)}_{\Gamma^{-1}}^2$~\eqr{eq:phi}
\Statex \textbf{\emph{Stage D, prior ensemble}} \hfill (\S\ref{sec:mtf})
\State draw $\{u^{(j)}\}_{j=1}^{N_e}\sim\mu_0$ from the prior realisation pool; (optional DCT/VCAE chart~\eqr{eq:dctparam}--\eqr{eq:vcae})
\Statex \textbf{\emph{Stage E, two-stage Bayesian inversion}} \hfill (\S\ref{sec:inverse}, Alg.~\ref{alg:areki})
\State \textbf{Phase 1:} aREKI on $d_p$ with adaptive $\alpha_k$~\eqr{eq:alpha}; \textsc{ClipToBounds}
\State inflate $u^{(j)}\!\gets\!\bar u+\gamma(u^{(j)}-\bar u)$, $\gamma{=}1.01$
\State \textbf{Phase 2:} ES-MDA on $d=(d_p,d_s)$, $N_a$ steps $\alpha_k{=}N_a$, gain~\eqr{eq:gain}
\Statex \textbf{\emph{Stage F, posterior}} \hfill (\S\ref{sec:results})
\State score P1 vs P2 (select lower); extract P10/P50/P90/MAP/best from $\mu^y$
\State \Return posterior ensemble approximating $\dd\mu^y/\dd\mu_0\propto e^{-\Phi}$~\eqr{eq:rn}
\end{algorithmic}
\end{algorithm}

\subsubsection{PyTorch DistributedDataParallel}
The forward stage trains under PyTorch \emph{DistributedDataParallel} (DDP),
the standard single-program multiple-data paradigm. Each of the $R$ ranks owns
one GPU and a full replica of the composite model; the launcher establishes the
process group via a master address/port rendezvous and assigns ranks. The
dataset is sharded by a distributed sampler so each rank sees a disjoint
mini-batch slice. In
the backward pass, DDP \emph{all-reduces} the per-rank gradients so every
replica applies the same averaged update, keeping the replicas bit-identical:
\begin{equation}
g\leftarrow\frac1R\sum_{r=1}^{R}g_r,\qquad
\theta\leftarrow\theta-\eta\,g,
\label{eq:ddp}
\end{equation}
which is mathematically an $R$-fold larger effective batch with linear weak
scaling. The reduction is performed by NVIDIA's NCCL collective library, which
overlaps gradient communication with backward computation by bucketing
gradients and launching the all-reduce as soon as each bucket fills, so the
gradient payload is reduced in the shadow of the $19.9$~s/epoch compute, making
communication essentially free. Concretely, the six $3$-D heads carry
$\approx\!2\times10^{8}$ parameters in total (\S\ref{sec:complexity}), i.e.\ an
FP32 gradient payload of $\approx\!800$~MB per all-reduce. On the GB200
node the all-reduce rides NVLink-5 (1.8~TB/s/GPU) through the NVSwitch
crossbar, with NCCL configured over four InfiniBand HCAs.

A ring all-reduce of an $M$-byte payload over $R$ ranks transfers $2(R-1)M/R$
bytes per rank, asymptotically independent of $R$, so the gradient communication
is $\mathcal{O}(M)$ rather than $\mathcal{O}(RM)$ and, at $M\!\approx\!800$~MB
over NVLink-5, costs $\mathcal{O}(1)$~ms per reduction, small relative to the
per-step compute and overlapped with the backward pass. We therefore expect
near-ideal weak scaling but do not claim a measured efficiency here: a proper $1/2/4$-GPU timing study is left to future
work, and the $\approx\!19.9$~s/epoch figure is reported at the four-GPU
configuration only.

\subsubsection{CUDA, mixed precision and graphs}
Each replica's forward/backward executes as CUDA kernels on the Blackwell GPU.
Three features matter for the AR-TBPTT loop. (i) \emph{Mixed precision}: the
Transformer-Engine FP8/BF16 path accelerates the dense FNO channel mixing while
keeping the spectral multiply and loss accumulation in higher precision for
stability. (ii) \emph{Memory}: weights, gradients, optimiser state and the
retained rollout activations together remain a small fraction of the $192$~GB
HBM3e, so no activation checkpointing or domain decomposition is needed; the
TBPTT detach at every $K=10$ steps caps the retained backward graph to one
window rather than the full $T=20$ horizon, which is what keeps the activation
footprint bounded. (iii) \emph{CUDA
graphs}: when enabled, the per-epoch kernel sequence is captured once and
replayed, removing CPU launch latency from the critical path. With $T=20$ and
$K=10$ the rollout contains two TBPTT windows and hence two backward passes per
step-through, so the launch overhead the capture eliminates is
$\mathcal{O}(\text{kernels per rollout})$ repeated over $700$ epochs. The
NaN/Inf-guarded optimiser step and
unit-norm gradient clipping protect the long run from a single corrupt batch.
The inverse stage runs single-GPU but batches all $N_e=190$ surrogate rollouts,
and dispatches the Gassmann transform across GPUs by multi-process spawning.

\subsubsection{Software stack}
The forward stage is built on NVIDIA PhysicsNeMo
\citep{physicsnemo} with Hydra/OmegaConf
configuration, MLflow tracking of all $13$ loss components, and a distributed
launcher; a fixed seed, NaN/Inf-safe optimiser and unit-norm clipping make the
$700$-epoch run repeatable. The complete workflow, from data extraction through
joint inversion, runs as a single four-rank distributed job on $4\times$GB200.

\subsection{Inverse modelling results}
\label{sec:results}

\subsubsection{Data split and run configuration}
The Tau model ensemble pool contains $1400$ prior realisations. Of these,
\textbf{$200$} are used for the surrogate, partitioned into
\textbf{$190$ training, $5$ validation} (early-stopping) and \textbf{$5$ held-out
test} realisations; the forward-accuracy metrics below are reported on the
$5$-member test set, which the operator never sees during training. The
inversion's \textbf{$N_e=190$} prior members are a \emph{different} set of
the same size, drawn from the \emph{disjoint remaining $1200$} realisations: they
share no member with the surrogate's $200$, so the inverse results are strictly
out-of-sample, and the truth realisation is additionally excluded from the prior
ensemble. We note plainly that a $5$-member test set bounds how strongly
generalisation can be asserted; the error \emph{distribution} over a larger
held-out set is the natural strengthening (\S\ref{sec:limitations}).

\paragraph{Assimilation window and blind forecast.}
The horizon splits as $20=15+5$. The observation vector presented to both
inversions covers only the first $T_a=15$ report steps, giving an assimilated
production dimension $k_p=9\!\times\!15=135$; the last $5$ steps are withheld
from the misfit entirely. The forward model, however, is always evaluated over
the full $T=20$ steps: every ensemble member is rolled out to $2193$~days, so
the trailing $5$ steps are produced by the surrogate but never conditioned on,
and therefore constitute a genuine \textbf{blind forecast}. Assessing the
posterior over that window is the operationally relevant test of a history
match, and it is what Figs.~\ref{fig:wwpr_compare} and \ref{fig:bhp_ensemble}
display: the dashed vertical line is the boundary, everything to its left is
conditioned on, everything to its right is predicted.

\paragraph{Configuration.}
Both inversions used $N_e=190$, $N_{\mathrm{iter}}=10$, the FNO surrogate, raw
seismic ($k_s=4554$), $25\%$ production / $30\%$ seismic noise, no
localisation, grid-space parameters. The production-only run completed in
$2303$~s; the two-stage joint run in $3151$~s and selected P2.

\paragraph{Inverse crime and scope.}
The seismic ``observations'' are synthesised by the same Gassmann code, with the
same mineral moduli, MacBeth pressure coefficients and Batzle--Wang correlations
used inside the inversion; there is no petro-elastic model error, no NRMS
repeatability noise, and no time-shift, registration or depth-conversion error.
The Tau model is itself synthetic, a realistic model of a fault-dissected
clastic reservoir rather than a producing field, so the history-matching
experiment is \emph{synthetic} end to end. The consequences of
this, and the further validation the results require, are stated in
\S\ref{sec:limitations}.

\paragraph{End-to-end speed-up.}
The $\sim\!2500\times$ figure is \emph{per rollout}. The end-to-end accounting is
less dramatic: generating the $200$ full-physics runs that supply
the $190/5/5$ split costs $\mathcal{O}(200\times5\text{ min})\approx17$~h of
simulation, and training costs $\approx4$ GPU-hours; amortised against
$\approx1900$ full-physics solves
($\sim\!160$~h) for a single inversion, the end-to-end acceleration is
\emph{single-digit}. The surrogate pays off only when it is reused across many
inversions/scenarios; we therefore also report the per-call number, which is the
quantity relevant to reuse. Note too that the Peaceman heads have a
well-count-specific feature layout ($N_p=2$ producers), so the trained surrogate
does \emph{not} transfer as-is to a different well count or grid.

\subsubsection{Joint vs.\ production-only: posterior comparison}
\label{sec:jvp}
Figures~\ref{fig:perm_compare}--\ref{fig:wwpr_compare} make the central
finding visual. Figure~\ref{fig:perm_compare} places the production-only
posterior permeability (left $2\!\times\!2$ panel: MAP / best / prior / truth)
beside the joint posterior (right): the production-only MAP smears the
inter-well region and misses the \texttt{Inj1}$\to$\texttt{Tau\_2} high-permeability corridor,
whereas the joint MAP recovers the corridor and fault block in agreement with
the true model, the inter-well structure that the Fisher-information
argument~\eqr{eq:loewner} predicts seismic can resolve and production cannot.

\begin{figure}[h]\centering
\begin{subfigure}{0.49\linewidth}\includegraphics[width=\linewidth]{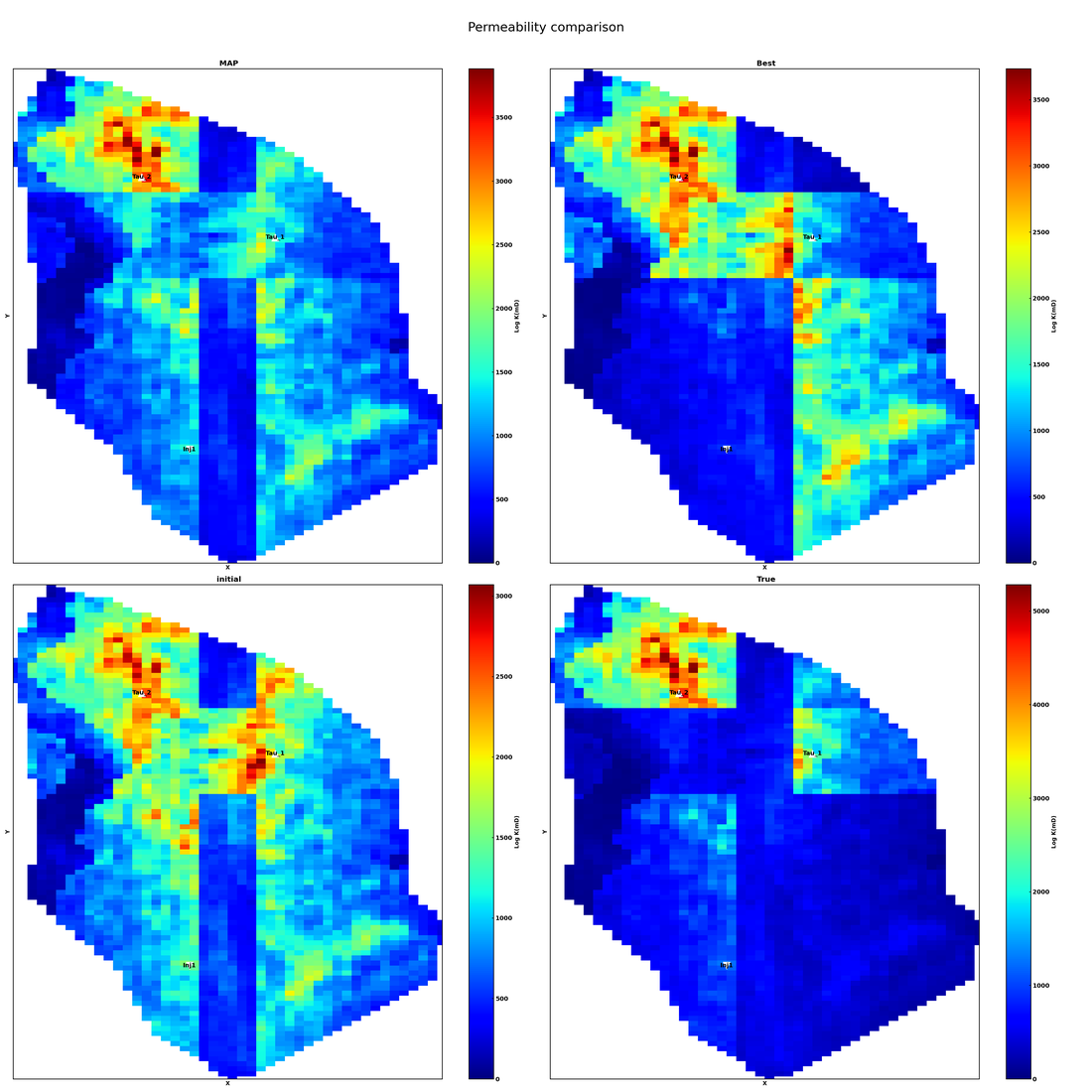}\caption{Production-only posterior}\end{subfigure}
\hfill
\begin{subfigure}{0.49\linewidth}\includegraphics[width=\linewidth]{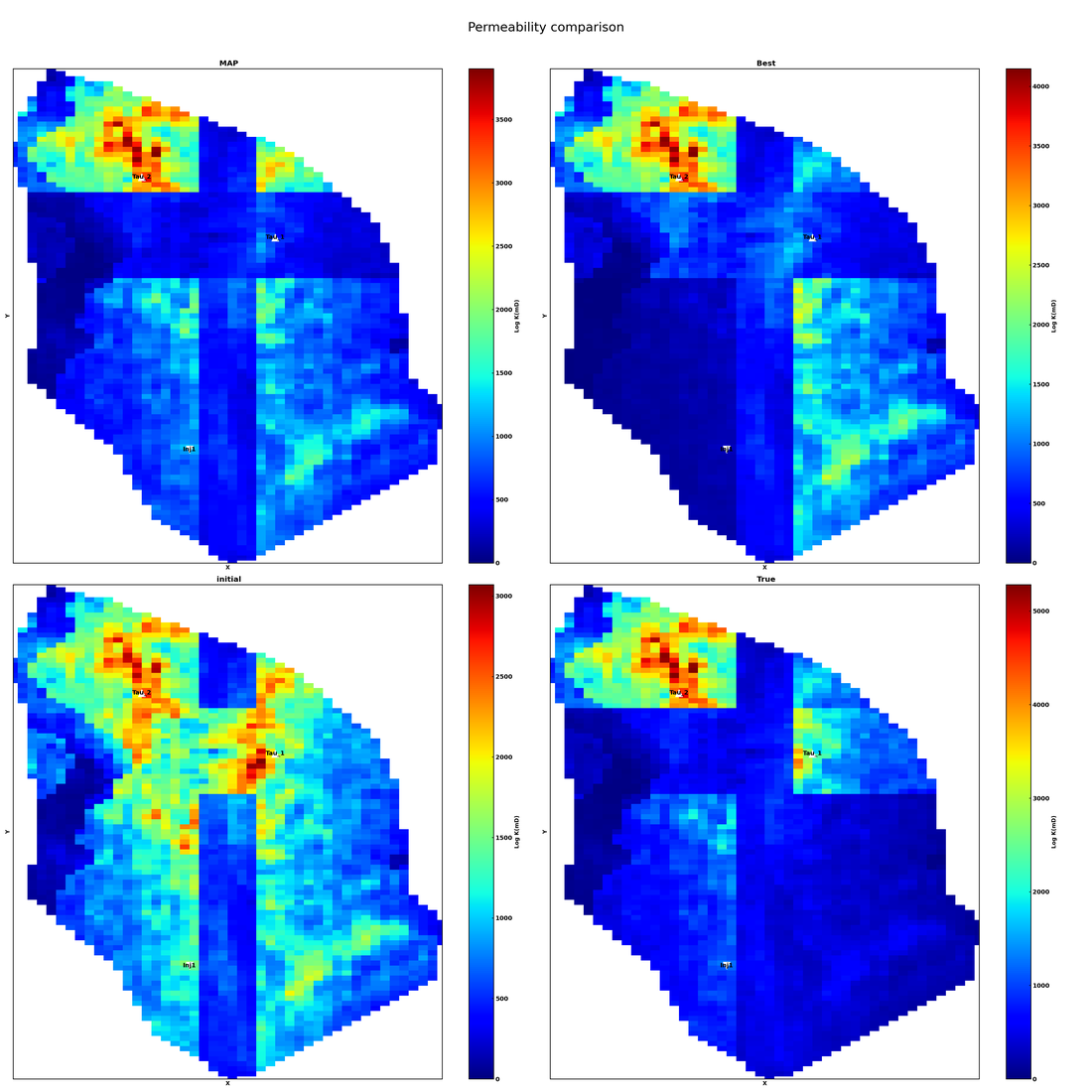}\caption{Joint seismic+production posterior}\end{subfigure}
\caption{MAP permeability vs.\ truth (each panel: MAP, best, prior, true model;
log-mD). Production-only (left) leaves the inter-well connectivity
under-determined; the joint inversion (right) recovers the high-$K$ corridor and
fault block matching the true model, the visual case for ``why joint is
better''.}
\label{fig:perm_compare}
\end{figure}

Figure~\ref{fig:wwpr_compare} contrasts the \emph{posterior} water-rate (WWPR)
ensembles. Under production-only assimilation the posterior WWPR retains a
broad spread about the truth (breakthrough timing is poorly constrained); under
joint assimilation it collapses tightly onto the observed history, because the
swept volume that governs water rate is directly imaged by the 4-D seismic
$\Delta V_p$. This is the production-space signature of the sharper joint
posterior.

\begin{figure}[h]\centering
\begin{subfigure}{0.49\linewidth}\includegraphics[width=\linewidth]{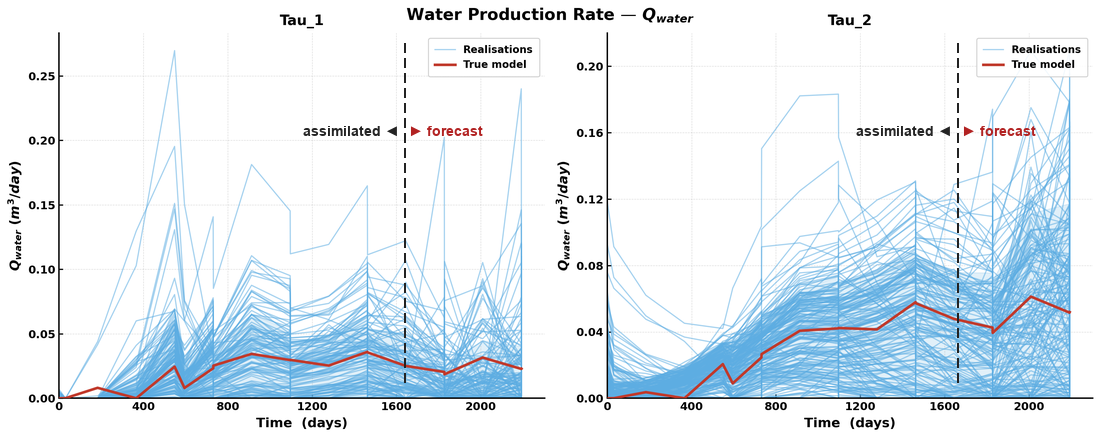}\caption{Production-only posterior WWPR}\end{subfigure}
\hfill
\begin{subfigure}{0.49\linewidth}\includegraphics[width=\linewidth]{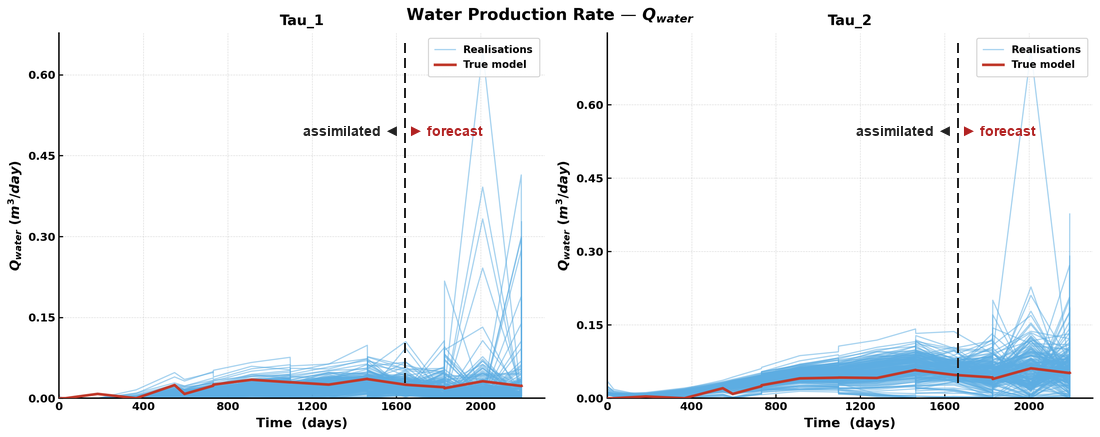}\caption{Joint posterior WWPR}\end{subfigure}
\caption{Posterior water-production-rate ensembles (red: truth). The dashed
vertical line marks the assimilation/forecast boundary: the first $15$ report
steps (left) are assimilated, the final $5$ (right) are forecast blind. The
joint posterior (right sub-panel) concentrates markedly more tightly than the
production-only posterior (left), including over the forecast window, quantifying
the seismic information gain ($s_{\mathrm{P2}}{=}0.876$ vs.\
$s_{\mathrm{P1}}{=}2.556$, Tab.~\ref{tab:scores}).}
\label{fig:wwpr_compare}
\end{figure}

\subsubsection{Producer bottom-hole pressure: prior-to-posterior collapse}
\label{sec:bhpprod}
The clearest single-response demonstration of the assimilation is the producer
bottom-hole-pressure observable WBHP\_PROD, whose ensemble is transported from
the broad geological prior onto the observed history by the two-stage schedule.
Figure~\ref{fig:bhp_ensemble} plots the full $N_e=190$-member ensemble of the
producer BHP defined by \eqr{eq:bhp}: the left panel is the \emph{prior}
ensemble, each grey trajectory a forward rollout of one prior realisation
(\S\ref{sec:prior}) through the surrogate and the Peaceman head, and the
right panel is the \emph{posterior} ensemble after the joint aREKI$\to$ES-MDA
update. Two features are diagnostic. First, the prior spread is wide and
biased: the untuned permeability realisations produce a family of drawdown
curves that bracket, but do not track, the observed BHP (red), a direct
visualisation of the prior predictive variance
$\mathrm{Var}_{\mu_0}[g_{\mathrm{BHP}}(u)]$. Second, the posterior ensemble
contracts sharply around the observations, its mean tracking the history and its
$\mathrm{P10}$--$\mathrm{P90}$ band narrowing to within the $25\%$
production-noise envelope encoded in $\Gamma_p$ \eqr{eq:gamma}. This collapse
is the L\"owner-order sharpening \eqr{eq:loewner} rendered in
observation space: WBHP\_PROD is a transmissibility-weighted average of the
near-well pressure field \eqr{eq:bhp}, so it lies substantially in
$\mathrm{row}(J_p)$ and is well constrained already by production, while the
residual spread is removed once the joint update fixes the surrounding pressure
support imaged by the 4-D seismic. The near-monotone tightening, without the
ensemble collapsing to a spurious delta, is the practical signature of the
mild inter-phase inflation $\gamma=1.01$ preserving rank while the misfit
relaxes toward the discrepancy target $\E[2J]\approx N_{\mathrm{op}}$
\eqr{eq:alpha}. These two panels were absent from the original reporting of
this study and are included here because the prior/posterior BHP envelope is the
most legible operational summary of the history match: it is the plot an asset
team reads to judge whether the calibrated model can be trusted for forecasting.

\begin{figure}[h]\centering
\begin{subfigure}{0.49\linewidth}\includegraphics[width=\linewidth]{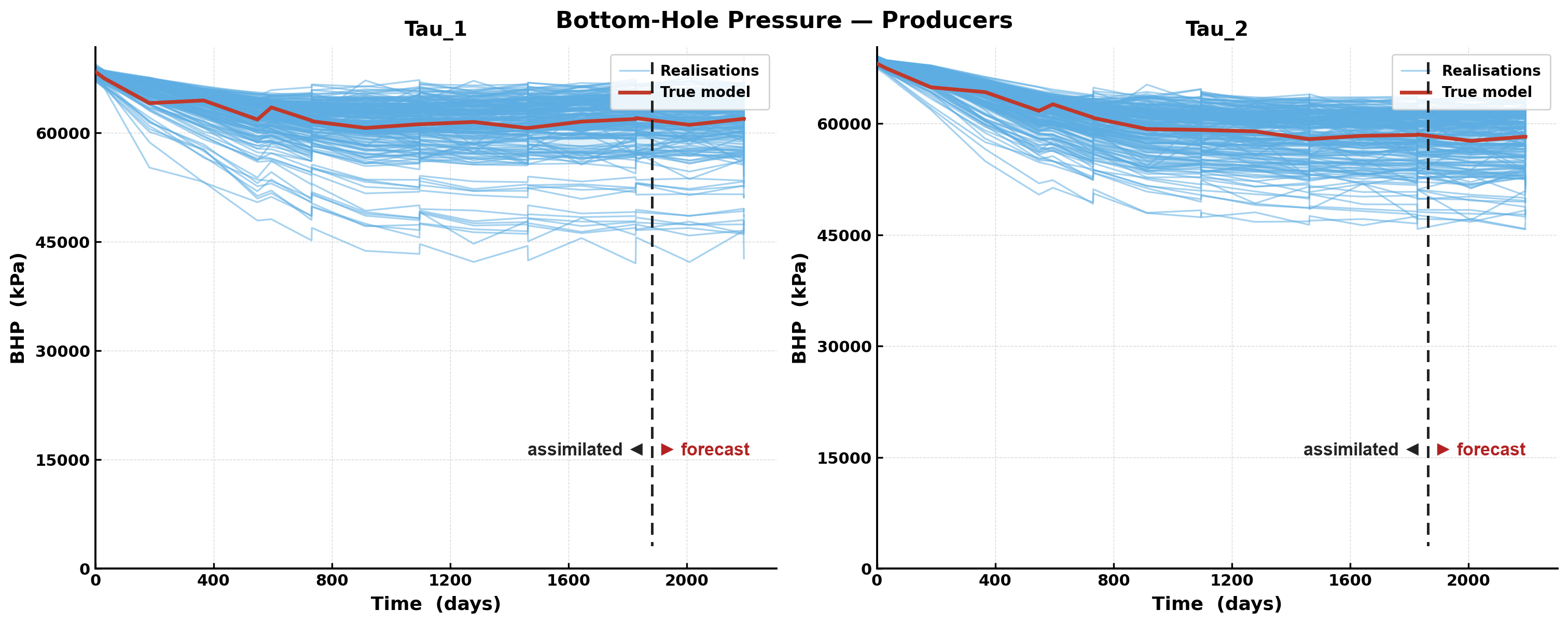}\caption{Prior ensemble WBHP\_PROD}\end{subfigure}
\hfill
\begin{subfigure}{0.49\linewidth}\includegraphics[width=\linewidth]{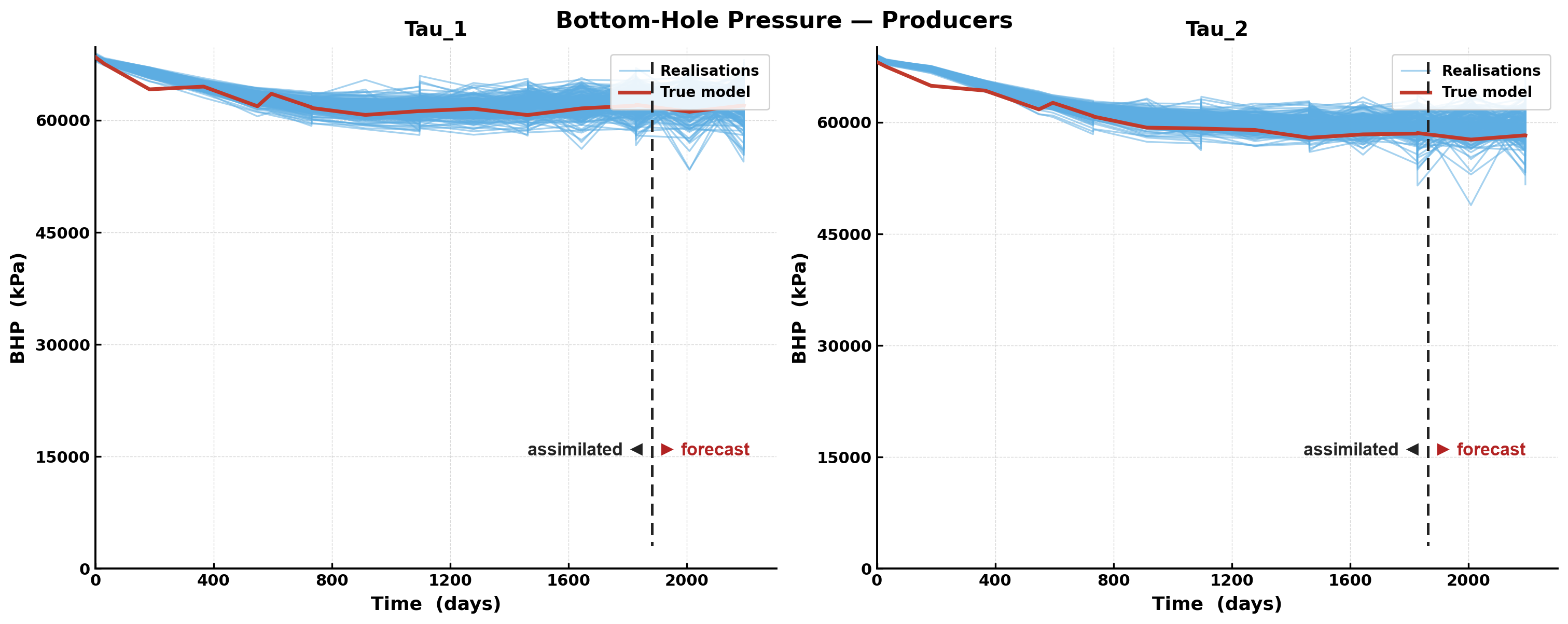}\caption{Posterior ensemble WBHP\_PROD}\end{subfigure}
\caption{Producer bottom-hole pressure (WBHP\_PROD) ensembles across the
$N_e{=}190$ members (blue: individual realisations; red: observed/true history).
The dashed vertical line is the assimilation/forecast boundary (first $15$ steps
assimilated, final $5$ forecast). \textbf{Left:} the \emph{prior} ensemble,
forward rollouts of the unconditioned prior realisations, shows a wide, biased
predictive spread. \textbf{Right:} the \emph{posterior} ensemble after the joint
aREKI$\to$ES-MDA update tracks the observed history, including across the
forecast window. We caution that the posterior $\mathrm{P10}$--$\mathrm{P90}$
band narrows to within the $25\%$ production-noise envelope of $\Gamma_p$; a
predictive band tighter than the observation noise on assimilated data is a
symptom of \emph{over-collapse} rather than an unqualified success, and should
be read together with the spread diagnostics called for in
\S\ref{sec:limitations}.}
\label{fig:bhp_ensemble}
\end{figure}

The joint posterior percentile fields (Fig.~\ref{fig:joint_perm}) confirm the
uncertainty collapse: the P10--P90 band is narrow over the
seismically-imaged corridor and wider at the unobserved periphery.

\begin{figure}[h]\centering
\includegraphics[width=0.95\linewidth]{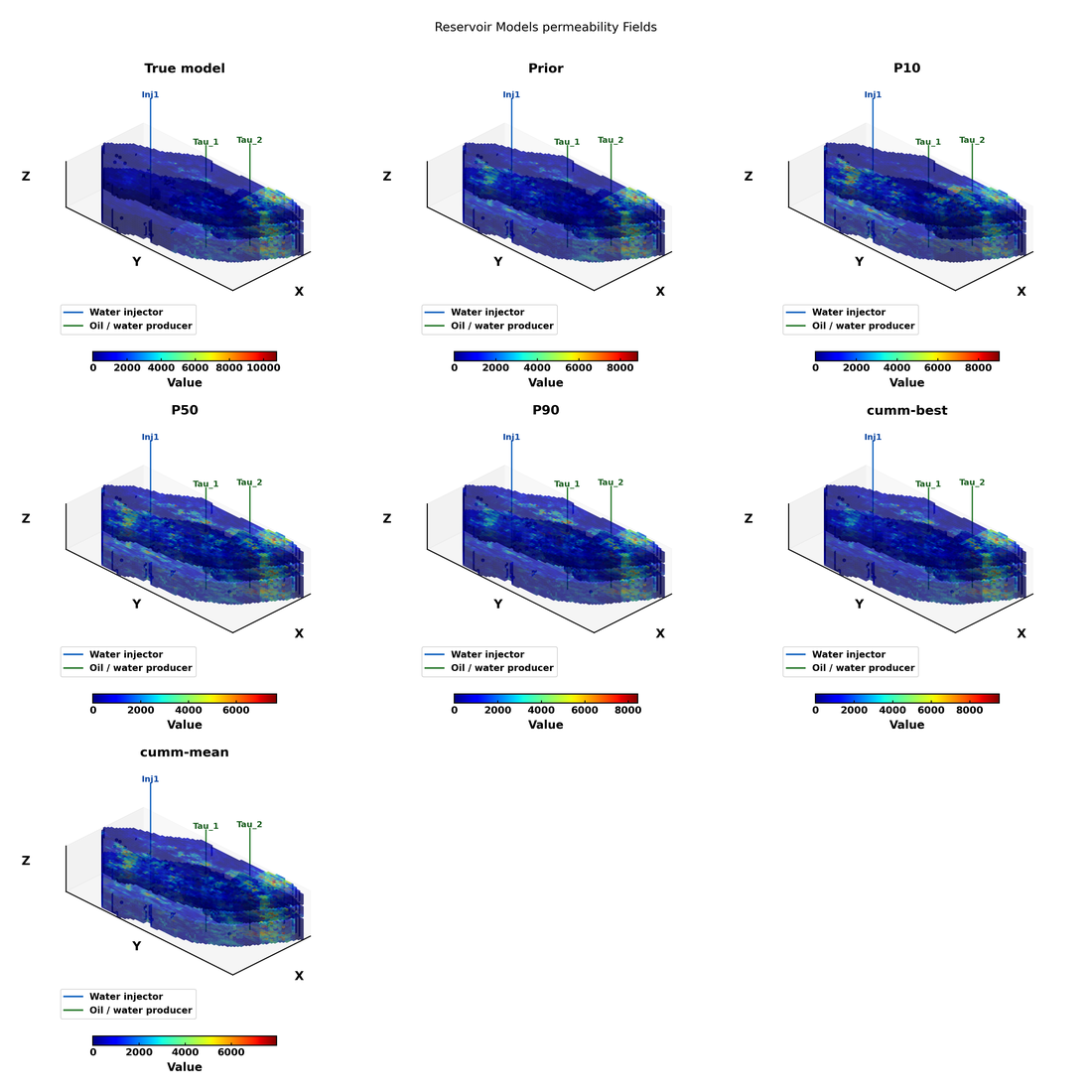}
\caption{Joint posterior permeability percentiles (true, prior, P10/P50/P90,
best, mean; log-mD). The posterior concentrates on the true high-$K$ corridor
and fault structure.}
\label{fig:joint_perm}
\end{figure}
\FloatBarrier

\subsubsection{Cost reduction}
Figure~\ref{fig:cost} summarises the cost-function reduction across the
two-stage schedule: the joint-posterior well responses (Fig.~\ref{fig:wwpr_compare})
collapse onto the observed history, and the ensemble-mean RMSE drops by nearly
two orders of magnitude.

\begin{figure}[h]\centering
\includegraphics[width=0.46\linewidth]{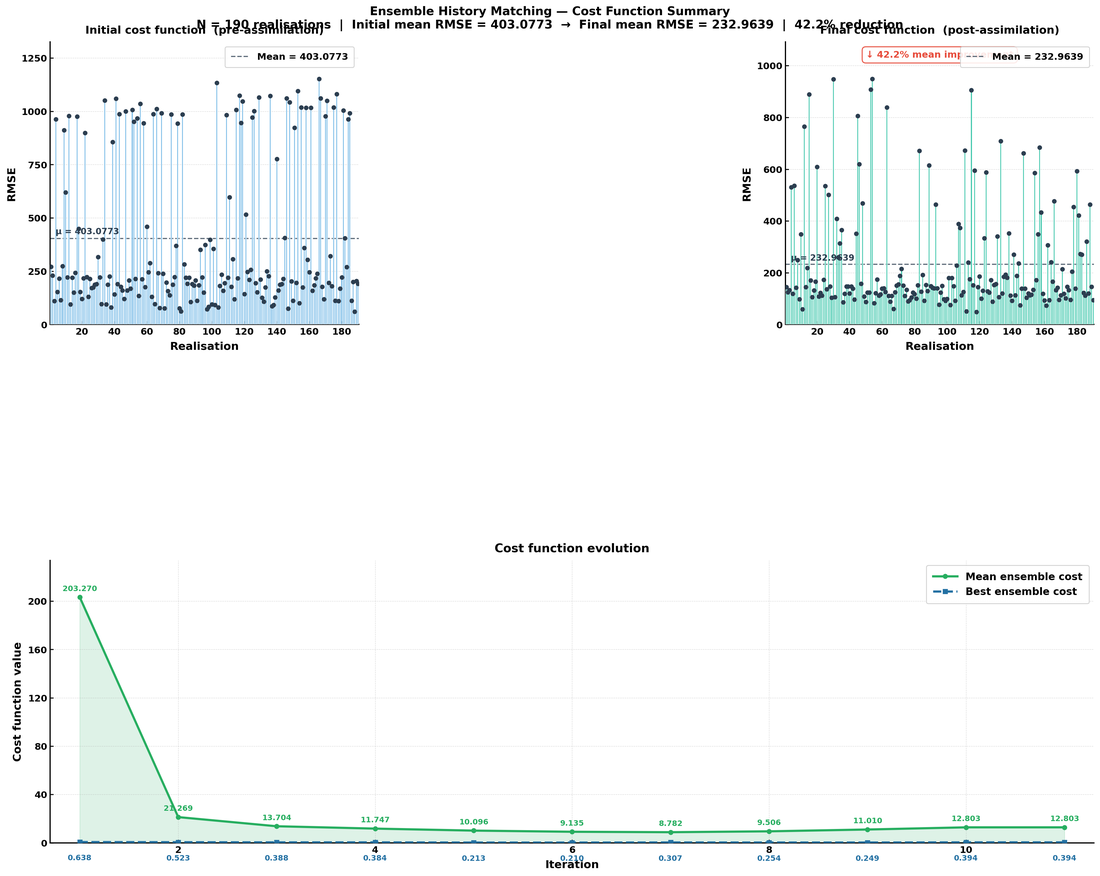}
\caption{Joint two-stage cost-function summary ($N_e{=}190$): the ensemble-mean
RMSE drops by nearly two orders of magnitude from prior to posterior.}
\label{fig:cost}
\end{figure}

\subsubsection{Per-response RMSE histograms}
Figure~\ref{fig:hist} gives posterior RMSE histograms across the ensemble,
contrasting base case, P10/P50/P90, MAP and best. The MAP/best models lie at
the low-RMSE tail of every response; the joint seismic constraint sharpens the
water-rate (WWPR) distribution most strongly, consistent with seismic
sensitivity to the water-invaded volume.

\begin{figure}[htbp]\centering
\begin{subfigure}{0.48\linewidth}\includegraphics[width=\linewidth]{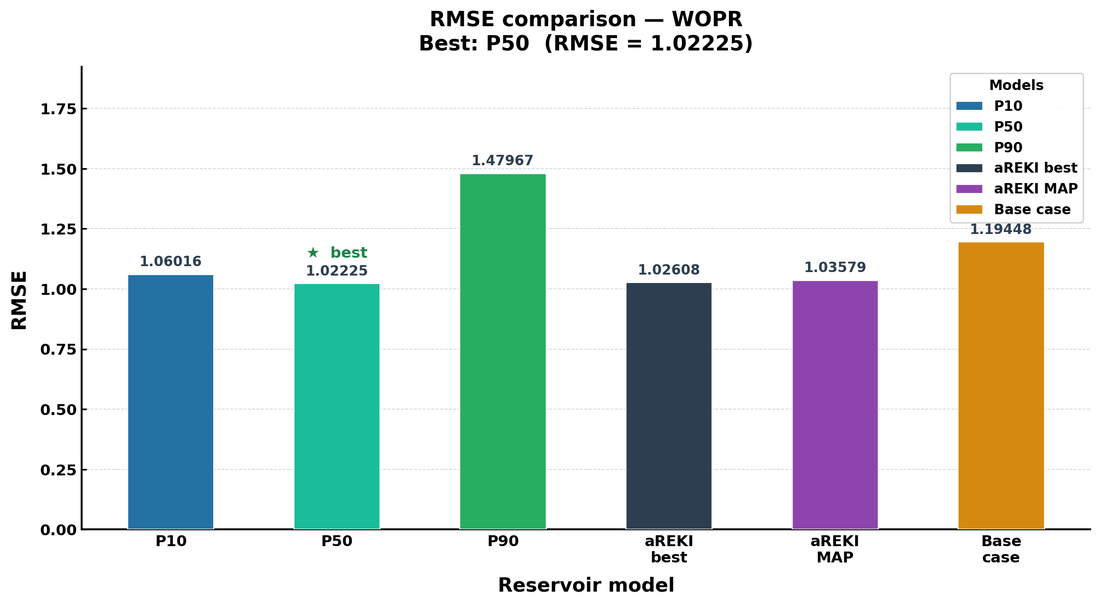}\caption{WOPR}\end{subfigure}
\hfill
\begin{subfigure}{0.48\linewidth}\includegraphics[width=\linewidth]{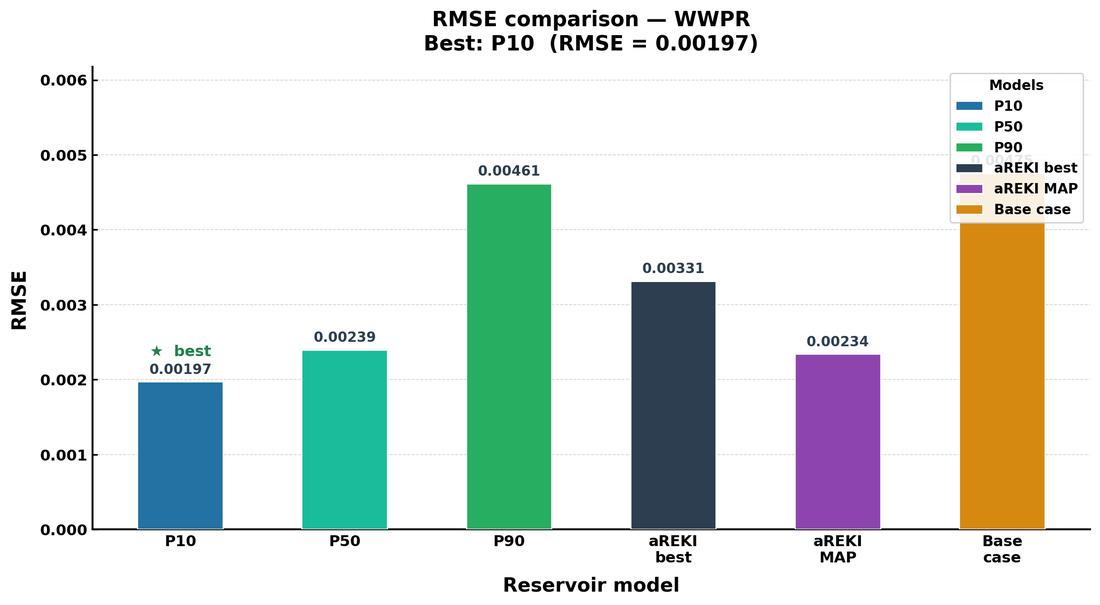}\caption{WWPR}\end{subfigure}\\[4pt]
\begin{subfigure}{0.48\linewidth}\includegraphics[width=\linewidth]{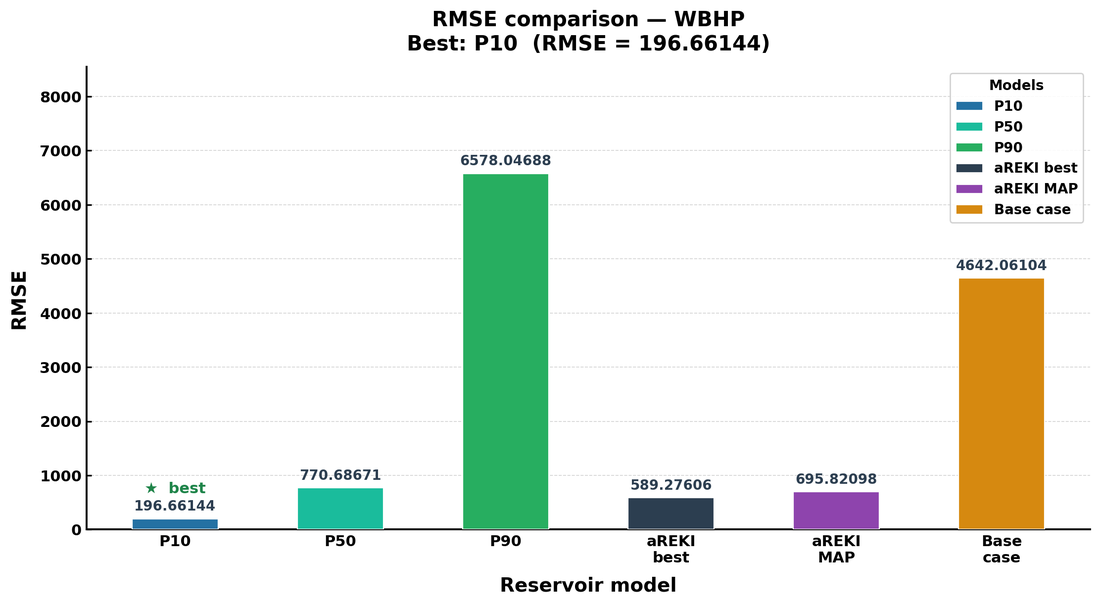}\caption{WBHP inj.}\end{subfigure}
\hfill
\begin{subfigure}{0.48\linewidth}\includegraphics[width=\linewidth]{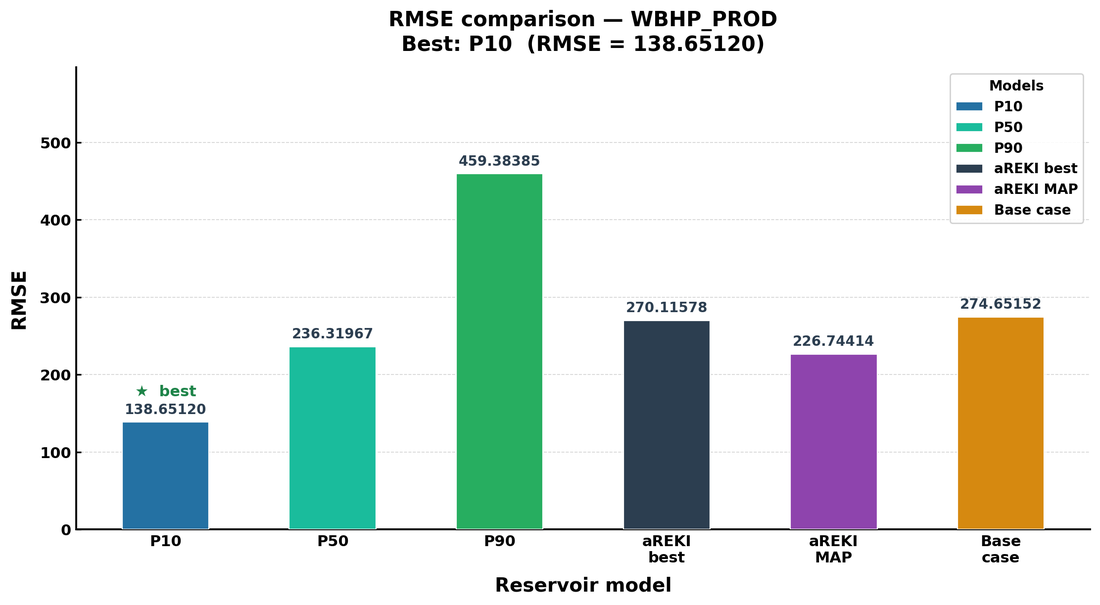}\caption{WBHP prod.}\end{subfigure}
\caption{Posterior RMSE histograms (joint inversion). MAP/best markers lie at
the low-RMSE tail across all four responses.}
\label{fig:hist}
\end{figure}

\subsubsection{Quantitative dominance and summary}
Table~\ref{tab:scores} consolidates the comparison: the joint score
($0.876<1$) means the posterior misfit is below the prior-normalised reference,
whereas production-only ($2.556$) is not. Table~\ref{tab:summary} summarises
the forward and inverse metrics.

\begin{table}[h]\centering\small
\caption{Production-only (P1) vs.\ joint seismic--production (P2). Lower score
is better; P2 selected.}
\label{tab:scores}
\begin{tabular}{p{0.42\linewidth}cc}
\toprule
\textbf{Quantity} & \textbf{P1 (production)} & \textbf{P2 (joint)}\\
\midrule
Assimilated data & $d_p$ only & $d=(d_p,d_s)$, $k_s{=}4554$\\
Algorithm & aREKI & aREKI $\to$ infl.\ $\to$ ES-MDA\\
Dimensionless score $s$ & $2.556$ & $\mathbf{0.876}$\\
Inter-phase inflation $\gamma$ &, & $1.01$\\
Wall-clock & $2303$~s & $3151$~s\\
Selected posterior & \multicolumn{2}{c}{\textbf{P2} (gap $1.68$)}\\
\bottomrule
\end{tabular}
\end{table}

\begin{table}[h]\centering\small
\caption{End-to-end summary of dynamic-modelling prediction accuracy and
inversion performance.}
\label{tab:summary}
\begin{tabular}{p{0.54\linewidth}cc}
\toprule
\textbf{Metric}&\textbf{Value}&\textbf{Ref.}\\
\midrule
\multicolumn{3}{l}{\textit{Autoregressive forward surrogate}}\\
Pressure / water / oil $R^2$ & $>\!0.999$ & Fig.~\ref{fig:r2l2}\\
WOPR $R^2$ / WBHP\_PROD $R^2$ & $>$99.99\% / 98\% & Fig.~\ref{fig:cmp}\\
Seismic $\Delta V_p$ fidelity & high & Fig.~\ref{fig:r2l2}\\
Rollout time (20 steps) & $0.579$~s & log\\
Training ($4\times$GB200, DDP) & $3$h$58$m & log\\
\midrule
\multicolumn{3}{l}{\textit{Two-stage inversion ($N_e{=}190$, $10$ iter.)}}\\
Production-only score & $2.556$ & Tab.~\ref{tab:scores}\\
Joint seismic--production score & $\mathbf{0.876}$ & Tab.~\ref{tab:scores}\\
Selected posterior & P2 (joint) & Tab.~\ref{tab:scores}\\
\bottomrule
\end{tabular}
\end{table}

\subsubsection{Posterior uncertainty quantification}
\label{sec:uq}
The deliverable of a Bayesian history match is not a single model but the
posterior \emph{distribution}, summarised by percentiles. From the final
ensemble we extract P10/P50/P90 permeability fields, the MAP (mean of
the posterior ensemble), and the best member (lowest individual misfit); these
appear in Figure~\ref{fig:joint_perm}. The percentile spread is the
calibrated forecast uncertainty: a narrow P10--P90 band where the data are
informative (the seismically-constrained inter-well corridor) and a wider band
where they are not (the aquifer periphery, unseen by either modality). This
spatial heterogeneity of the posterior variance is what the
Fisher-information decomposition~\eqr{eq:fisher} predicts, variance
collapses in $\mathrm{row}(J_p)\cup\mathrm{row}(J_s)$ and reverts to the prior
in the joint null space. Reporting the band, rather than a single MAP forecast,
is what distinguishes a defensible Bayesian match from a deterministic fit.

\section{Discussion and Conclusions}
\label{sec:discussion}
The forward pressure $R^2>0.999$ is the empirical signature of
Theorem~\thmr{thm:contract}: the elliptic block is strongly dissipative, its
rollout Jacobian contracts, and AR errors do not accumulate. The slower
oil-saturation convergence reflects the $BV$-regularity gap
in~\eqr{eq:fnoerr} and the front-direction Lipschitz amplification of
Propositions~\thmr{prop:tv},~\thmr{prop:rollout}; AR training with $K=10$ TBPTT
and noise injection is the remedy: it trains the network to damp
its own front-position errors (reducing $\tilde L$ in
Proposition~\thmr{prop:arstab}), while Proposition~\thmr{prop:tbptt} certifies
the truncated gradient is near-unbiased.

On the inverse side, the strict dominance of the joint posterior is the
L\"owner-order inequality~\eqr{eq:loewner}: production data leave a large
null space along inter-well connectivity, and the spatially-distributed seismic
Fisher information $\mathcal I_s$ fills it. The two-stage schedule is the
correct way to combine the modalities: aREKI's adaptive $\alpha$ is
well-behaved on the informative production phase, the deterministic ES-MDA
schedule is robust on the already-conditioned joint phase whose misfit variance
has collapsed, and the mild inflation $\gamma=1.01$ restores just enough spread
to keep $C_{up}$ full-rank. The petro-elastic channel makes all of this
tractable: because the Gassmann map is a closed-form, differentiable,
GPU-batched transform of the surrogate's own output, $g_s$ adds negligible cost
over $g_p$, and the entire two-stage inversion completes in under one GPU-hour.

The parametrisation machinery (\S\ref{sec:param}), unused at $N_e=190$ but
fully implemented, is the path to larger problems: at finer grids or smaller
ensembles the rank deficiency of $\hat C_{uu}$ forces either a DCT/VCAE
reduction or localisation, and the VCAE chart additionally guarantees
geologically plausible, multi-modal posteriors that the Gaussian EKI update
cannot otherwise represent. Several extensions are compatible with the present
architecture, since the seismic operator is an arbitrary differentiable
post-map of the surrogate state: a fully non-Gaussian (transport- or
score-based) sampler to remove the Jensen bias~\eqr{eq:jensen}, a learned
heteroscedastic surrogate-error term in $\Gamma$, and assimilation of the full
4-D sequence and AVO attributes beyond $\Delta V_p$.

\subsection{Limitations and scope}
\label{sec:limitations}
We state the scope of the reported study plainly; several of the following are
prerequisites for an operational claim rather than optional extensions.
\begin{enumerate}
\item \textbf{Posterior not yet verified against full physics.} The inversion
moves ensemble members off the manifold of prior realisations (the clip-to-bounds
projection is direct evidence), so the surrogate is queried under a
\emph{parameter-space} covariate shift analogous to the temporal one we treat in
\S\ref{sec:forward}. The decisive validation, running the selected P2 MAP and
best members back through the full-physics simulator and overlaying against the
observations, is outstanding; until it is done, part of the ``match'' may be a
match to surrogate-error structure.
\item \textbf{Synthetic experiment (inverse crime).} As noted in
\S\ref{sec:results}, the seismic data are generated by the same petro-elastic
code used in the inversion, with no model error, repeatability (NRMS) noise, or
time-shift/registration/depth-conversion error. The Tau model is itself a
realistic synthetic model, so the experiment is synthetic end to end.
\item \textbf{Likelihood-budget accounting in the two-stage schedule.} Phase 1
runs aREKI to budget exhaustion ($\sum_k\alpha_k^{-1}\!\ge\!1$), and Phase 2
re-assimilates the stacked $d=(d_p,d_s)$ with a fresh budget, so the production
block enters with total weight $\approx2$ rather than $1$. The consequence is
specific and bounded: the P2 posterior is tighter in $\mathrm{row}(J_p)$ than a
single-budget Bayesian update would give, so the \emph{absolute} value of
$s_{\mathrm{P2}}$ should be read as a fit diagnostic rather than a calibrated
posterior credibility statement. Two things limit the damage. First, the
comparison is like-for-like in the modality that matters: P1 and P2 differ by the
addition of $d_s$, and the seismic block is assimilated once in both accounts.
Second, the blind forecast window
(\S\ref{sec:results}) is \emph{immune} to this defect, because neither run
assimilated those $5$ steps under any budget; posterior behaviour there is
therefore uncontaminated evidence. The clean fix, which we recommend for any
operational use, is to allocate a \emph{partial} production budget to Phase 2 (or
to treat Phase 1 purely as a preconditioner) so that
$\sum_k\alpha_k^{-1}=1$ holds once across the whole schedule.
\item \textbf{Seismic information is overstated by a diagonal $\Gamma_s$.} The
depth-averaged $|\Delta V_p|$ map is spatially correlated over many cells, so its
effective degrees of freedom are $\mathcal{O}(10^2)$, not $k_s=4554$; the Fisher
bound $\mathrm{rank}(\mathcal I_s)\!\le\!k_s$ and the discrepancy target
$\E[2J]\!\approx\!N_{\mathrm{op}}$ should use the effective, not nominal, DOF.
A spatially-correlated $\Gamma_s$ (or the truncated-DCT encoding, which is the
more defensible of the two we implement) would reduce the over-confidence and
the ensemble-collapse pressure.
\item \textbf{Capacity versus training-set size.} The composite operator carries
$\approx\!2\times10^{8}$ parameters (\S\ref{sec:complexity}) trained on $190$
realisations, and the held-out test set is only $5$ members. Nothing in the
reported metrics indicates memorisation, but a $5$-member test set cannot exclude
it either. The direct evidence a reader should want is the train/validation/test
loss gap together with the error \emph{distribution} over a substantially larger
held-out set (tens of realisations), rather than metrics from a single member;
enlarging the test split costs only full-physics runs from the unused pool.
\item \textbf{Theory is not yet matched by ablation.} The PINO-induced
contractivity of Theorem~\thmr{thm:contract} assumes $\cL_{\mathrm{PINO}}(\theta^
\star)=0$, which is not attained (final loss $24.90$); Propositions~\thmr{prop:tv},
\thmr{prop:arstab},\thmr{prop:tbptt} hinge on $\E\norm{D_x\cG_\theta}$, which we
assume rather than measure. Both are testable: an AR-only / AR$+$noise /
AR$+$noise$+$PINO ablation (and a sweep of $K$ and the noise level) on
long-horizon drift and mass-balance violation, and a power-iteration estimate of
$\E\norm{D_x\cG_\theta}$ per field block to exhibit $\alpha_P\!\ll\!\alpha_O$.
\item \textbf{Over-collapse diagnostics.} A posterior predictive band narrower
than the observation noise (Fig.~\ref{fig:bhp_ensemble}) warrants spread
diagnostics, the trace of $\hat C_{uu}$ versus iteration, an effective sample
size, or rank histograms, which we do not yet report.
\item \textbf{Transferability.} The Peaceman heads use a well-count-specific
feature layout ($N_p=2$), so the trained operator does not transfer as-is to a
different well count or grid; retraining is required.
\end{enumerate}

\subsection{Conclusions}
We have presented and demonstrated a unified surrogate-and-inversion framework on
a realistic synthetic reservoir model referred to here as Tau.
(i) An \emph{autoregressive} PINO-FNO surrogate, trained
in closed loop with $K$-step TBPTT, discrete physics residuals and
covariate-shift noise, and distributed over four GB200 GPUs by DDP/NCCL,
reproduces the black-oil state and well responses at $R^2>0.999$ and rolls out
in $0.579$~s. (ii) A closed-form, GPU-batched Gassmann channel fused inside the
rollout turns the surrogate into a 4-D seismic forward model at negligible
marginal cost. (iii) A two-stage aREKI/ES-MDA inversion, with optional DCT/VCAE
reparametrisation and Furrer localisation, founded on a function-space Bayesian
posterior, assimilates production and seismic jointly. On the synthetic
reservoir-model experiment the joint posterior is sharper than production-only
matching and tracks a blind forecast window, recovering inter-well connectivity that
production data alone leave under-determined, consistent with the Löwner-order
argument. The remaining step to an operational claim, full-physics verification
of the posterior and the ablations of \S\ref{sec:limitations}, is set out
explicitly; the contribution here is the integrated architecture and its
demonstration on a realistic synthetic reservoir model.

\section*{Nomenclature}
\label{sec:nomenclature}
Mathematical symbols are collected in Table~\ref{tab:notation}; abbreviations
are listed below.

\begin{center}\small
\begin{tabular}{@{}p{0.10\linewidth}p{0.34\linewidth}@{\hspace{0.04\linewidth}}p{0.12\linewidth}p{0.33\linewidth}@{}}
\toprule
4-D & time-lapse (four-dimensional) seismic & MAP & maximum a posteriori\\
AR & autoregressive & MoE & mixture of experts\\
aREKI & adaptive regularised ensemble Kalman inversion & NCCL & NVIDIA Collective Communications Library\\
BHP & bottom-hole pressure & NRMS & normalised root-mean-square difference\\
CCR & cluster--classify--regress & PINN & physics-informed neural network\\
CFL & Courant--Friedrichs--Lewy & PINO & physics-informed neural operator\\
DCT & discrete cosine transform & PVT & pressure--volume--temperature\\
DDP & distributed data parallel & RMSE & root-mean-square error\\
EKI & ensemble Kalman inversion & TBPTT & truncated backpropagation through time\\
ELBO & evidence lower bound & TPFA & two-point flux approximation\\
ES-MDA & ensemble smoother with multiple data assimilation & VCAE & variational convolutional auto-encoder\\
FFT & fast Fourier transform & WBHP & injector bottom-hole pressure\\
FNO & Fourier neural operator & WBHP\_PROD & producer near-well drainage pressure\\
FV & finite volume & WOPR & well oil production rate\\
FVF & formation volume factor & WWCT & well water cut\\
GPU & graphics processing unit & WWPR & well water production rate\\
\bottomrule
\end{tabular}
\end{center}

\section*{Acknowledgments}
This work was supported by NVIDIA Corporation. The authors gratefully
acknowledge \textbf{Shell} for the realistic synthetic reservoir simulation data,
and in particular \textbf{David Blom} (Shell Information Technology International
B.V.) for preparing and curating the ensemble of reservoir simulation output
files encompassing grid-, well- and reservoir-level data, and the NVIDIA
PhysicsNeMo
team for framework support. Compute was provided on NVIDIA GB200 GPUs.

\section*{Data and code availability}
The Tau model data are proprietary to Shell and cannot be released, and the
software implementation of the workflow is not publicly available. The
methodology is nonetheless specified in full, so that it can be re-implemented
independently on any reservoir model and with any reservoir simulator: the
forward model, surrogate architecture and training procedure
(\S\ref{sec:forward}, Table~\ref{tab:arch}, Algorithm~\ref{alg:train}), the
petro-elastic channel (\S\ref{sec:seismic}), the inversion algorithms and their
settings (\S\ref{sec:inverse}, Algorithms~\ref{alg:areki}
and~\ref{alg:master}), and the data split and run configuration
(\S\ref{sec:results}).

{\footnotesize\setstretch{0.94}\setlength{\bibsep}{0.0ex plus 0.03ex}

}


\begin{thebibliography}{99}
\bibitem[Alpak et al.(2023)]{alpak2023lbfgs} Alpak, F.O., Gao, G., Florez, H., Shi, S., Vink, J., Blom, C., Saaf, F., Wells, T. (2023). A machine-learning-accelerated distributed LBFGS method for field development optimization: algorithm, validation, and applications. \textit{Computational Geosciences} 27(3):425--450. doi:10.1007/s10596-023-10197-3.
\bibitem[Azizzadenesheli et al.(2024)]{azizzadenesheli2024neuralop} Azizzadenesheli, K., Kovachki, N., Li, Z., Liu-Schiaffini, M., Kossaifi, J., Anandkumar, A. (2024). Neural operators for accelerating scientific simulations and design. \textit{Nature Reviews Physics} 6:320--328.
\bibitem[Baldassari et al.(2024)]{baldassari2024conditional} Baldassari, L., Siahkoohi, A., Garnier, J., S{\o}lna, K., de Hoop, M.V. (2024). Conditional score-based diffusion models for Bayesian inference in infinite dimensions. \textit{Adv.\ Neural Inf.\ Process.\ Syst.} 36. arXiv:2305.19147.
\bibitem[Batzle and Wang(1992)]{batzle1992} Batzle, M., Wang, Z. (1992). Seismic properties of pore fluids. \textit{Geophysics} 57(11):1396--1408.
\bibitem[Bernholdt et al.(2019)]{bernholdt2019ccr} Bernholdt, D.E., Cianciosa, M.R., Green, D.L., Park, J.M., Law, K.J.H., Etienam, C. (2019). Cluster, classify, regress: a general method for learning discontinuous functions. \textit{Foundations of Data Science} 1(4):491--506. doi:10.3934/fods.2019020.
\bibitem[Bengio et al.(2015)]{bengio2015scheduled} Bengio, S., Vinyals, O., Jaitly, N., Shazeer, N. (2015). Scheduled sampling for sequence prediction with RNNs. \textit{NeurIPS} 28.
\bibitem[Bl\"omker et al.(2019)]{blomker2019strongly} Bl\"omker, D., Schillings, C., Wacker, P., Weissmann, S. (2019). Well posedness and convergence analysis of the ensemble Kalman inversion. \textit{Inverse Problems} 35(8):085007.
\bibitem[Bogachev(1998)]{bogachev1998gaussian} Bogachev, V.I. (1998). \textit{Gaussian Measures}. Mathematical Surveys and Monographs 62, American Mathematical Society.
\bibitem[Brandstetter et al.(2022)]{brandstetter2022message} Brandstetter, J., Worrall, D., Welling, M. (2022). Message passing neural PDE solvers. \textit{ICLR}.
\bibitem[Canchumuni et al.(2019)]{canchumuni2019esmda} Canchumuni, S.W.A., Emerick, A.A., Pacheco, M.A.C. (2019). Towards a robust parameterization for conditioning facies models using deep variational autoencoders and ES-MDA. \textit{Computers \& Geosciences} 128:87--102.
\bibitem[Cao(2021)]{cao2021choose} Cao, S. (2021). Choose a transformer: Fourier or Galerkin. \textit{NeurIPS} 34:24924--24940.
\bibitem[Chada et al.(2020)]{chada2020tikhonov} Chada, N.K., Stuart, A.M., Tong, X.T. (2020). Tikhonov regularization within ensemble Kalman inversion. \textit{SIAM J.\ Numer.\ Anal.} 58(2):1263--1294.
\bibitem[Chandra et al.(2025)]{chandra2025co2} Chandra, A., Koch, M., Pawar, S., Panda, A., Azizzadenesheli, K., Snippe, J., Alpak, F.O., Hariri, F., Etienam, C., Devarakota, P., Anandkumar, A., Hohl, D. (2025). Fourier neural operator based surrogates for CO$_2$ storage in realistic geologies. \textit{arXiv preprint} arXiv:2503.11031. (Published in \textit{Adv.\ Theory Simul.}, 2026.)
\bibitem[Chung et al.(2023)]{chung2023dps} Chung, H., Kim, J., McCann, M.T., Klasky, M.L., Ye, J.C. (2023). Diffusion posterior sampling for general noisy inverse problems. \textit{ICLR}.
\bibitem[Cotter et al.(2013)]{cotter2013mcmc} Cotter, S.L., Roberts, G.O., Stuart, A.M., White, D. (2013). MCMC methods for functions: modifying old algorithms to make them faster. \textit{Statistical Science} 28(3):424--446.
\bibitem[Da Prato and Zabczyk(2014)]{daprato2014stochastic} Da Prato, G., Zabczyk, J. (2014). \textit{Stochastic Equations in Infinite Dimensions}, 2nd ed. Cambridge Univ.\ Press.
\bibitem[Dashti et al.(2013)]{dashti2013map} Dashti, M., Law, K.J.H., Stuart, A.M., Voss, J. (2013). MAP estimators and their consistency in Bayesian nonparametric inverse problems. \textit{Inverse Problems} 29(9):095017.
\bibitem[Dashti and Stuart(2017)]{dashti2017bayesian} Dashti, M., Stuart, A.M. (2017). The Bayesian approach to inverse problems. In \textit{Handbook of Uncertainty Quantification}, pp.\ 311--428. Springer.
\bibitem[Emerick and Reynolds(2013)]{emerick2013esmda} Emerick, A.A., Reynolds, A.C. (2013). Ensemble smoother with multiple data assimilation. \textit{Computers \& Geosciences} 55:3--15.
\bibitem[Emerick(2016)]{emerick2016review} Emerick, A.A. (2016). Analysis of the performance of ensemble-based assimilation of production and seismic data. \textit{J.\ Petroleum Sci.\ Eng.} 139:219--239.
\bibitem[Evensen(2018)]{evensen2018analysis} Evensen, G. (2018). Analysis of iterative ensemble smoothers for solving inverse problems. \textit{Computational Geosciences} 22:885--908.
\bibitem[Etienam(2019)]{etienam2019seismic} Etienam, C. (2019). 4D seismic history matching incorporating unsupervised learning. \textit{arXiv preprint} arXiv:1905.07469.
\bibitem[Etienam et al.(2024a)]{etienam2024gpexperts} Etienam, C., Law, K.J.H., Wade, S., Zankin, V. (2024). Fast deep mixtures of Gaussian process experts. \textit{Machine Learning} 113(3):1483--1508. doi:10.1007/s10994-023-06491-x.
\bibitem[Etienam et al.(2024b)]{etienam2024modulus} Etienam, C., Yang, J., Said, I., Ovcharenko, O., Tangsali, K., Dimitrov, P., Hester, K. (2024b). A novel A.I.\ enhanced reservoir characterization with a combined mixture of experts--NVIDIA Modulus based physics informed neural operator forward model. \textit{arXiv preprint} arXiv:2404.14447.
\bibitem[Etienam et al.(2024c)]{etienam2024norne} Etienam, C., Yang, J., Ovcharenko, O., Said, I. (2024c). Reservoir history matching of the Norne field with generative exotic priors and a coupled mixture of experts--physics informed neural operator forward model. \textit{arXiv preprint} arXiv:2406.00889.
\bibitem[Etienam et al.(2024d)]{etienam2024ecmor} Etienam, C., Yang, J., Said, I., Ovcharenko, O. (2024d). A reservoir history matching approach using a coupled mixture of experts--physics informed neural operator forward model. In \textit{Proc.\ ECMOR 2024}, pp.\ 1--35. EAGE.
\bibitem[Etienam et al.(2026a)]{etienam2025seqpino} Etienam, C., Yang, J., Ovcharenko, O., Luiken, N., Onishi, T., Moridis, N., Said, I. (2026a). Sequential physics-constrained neural operator forward modeling for the Norne reservoir system. \textit{arXiv preprint} arXiv:2605.28909. \url{https://arxiv.org/abs/2605.28909}.
\bibitem[Etienam et al.(2026b)]{etienam2026seqecmor} Etienam, C., Ovcharenko, O., Yang, J., Onishi, T., Luiken, N., Moridis, N., Said, I. (2026b). Sequential physics-constrained neural operator forward modelling for the Norne reservoir system. In \textit{Proc.\ ECMOR 2026}, pp.\ 1--28. EAGE. doi:10.3997/2214-4609.202637013.
\bibitem[Fahimuddin et al.(2010)]{fahimuddin2010} Fahimuddin, A., Aanonsen, S.I., Skjervheim, J.-A. (2010). Ensemble based 4D seismic history matching: integration of different levels and types of seismic data. \textit{SPE EUROPEC/EAGE}, SPE-131453.
\bibitem[Furrer and Bengtsson(2007)]{furrer2007} Furrer, R., Bengtsson, T. (2007). Estimation of high-dimensional prior and posterior covariance matrices in Kalman filter variants. \textit{J. Multivariate Anal.} 98(2):227--255.
\bibitem[Gaspari and Cohn(1999)]{gaspari1999} Gaspari, G., Cohn, S.E. (1999). Construction of correlation functions in two and three dimensions. \textit{QJRMS} 125(554):723--757.
\bibitem[Gassmann(1951)]{gassmann1951} Gassmann, F. (1951). \"Uber die Elastizit\"at por\"oser Medien. \textit{Vierteljahrsschrift Naturf.\ Ges.\ Z\"urich} 96:1--23.
\bibitem[Hao et al.(2023)]{hao2023gnot} Hao, Z., Wang, Z., Su, H., et al. (2023). GNOT: A general neural operator transformer for operator learning. \textit{ICML}, 12556--12569.
\bibitem[Iglesias et al.(2013)]{iglesias2013ensemble} Iglesias, M.A., Law, K.J.H., Stuart, A.M. (2013). Ensemble Kalman methods for inverse problems. \textit{Inverse Problems} 29(4):045001.
\bibitem[Iglesias(2016)]{iglesias2016regularising} Iglesias, M.A. (2016). A regularising iterative ensemble Kalman method for PDE-constrained inverse problems. \textit{Inverse Problems} 32(2):025002.
\bibitem[Iglesias and Yang(2021)]{iglesias2021adaptive} Iglesias, M.A., Yang, Y. (2021). Adaptive regularisation for ensemble Kalman inversion. \textit{Inverse Problems} 37(2):025008. doi:10.1088/1361-6420/abd29b.
\bibitem[Keisler(2022)]{keisler2022forecasting} Keisler, R. (2022). Forecasting global weather with graph neural networks. \textit{arXiv:2202.07575}.
\bibitem[Kovachki et al.(2021)]{kovachki2021neural} Kovachki, N., Li, Z., Liu, B., et al. (2021). Neural operator: learning maps between function spaces. \textit{JMLR} 24(89):1--97.
\bibitem[Laloy et al.(2017)]{laloy2017vae} Laloy, E., H\'erault, R., Lee, J., Jacques, D., Linde, N. (2017). Inversion using a new low-dimensional representation of complex binary geological media based on a deep neural network. \textit{Adv.\ Water Resour.} 110:387--405.
\bibitem[Laloy et al.(2018)]{laloy2018gan} Laloy, E., H\'erault, R., Jacques, D., Linde, N. (2018). Training-image based geostatistical inversion using a spatial generative adversarial neural network. \textit{Water Resour.\ Res.} 54(1):381--406.
\bibitem[Landr{\o}(2001)]{landro2001discrimination} Landr{\o}, M. (2001). Discrimination between pressure and fluid saturation changes from time-lapse seismic data. \textit{Geophysics} 66(3):836--844.
\bibitem[Lanthaler et al.(2022)]{lanthaler2022error} Lanthaler, S., Mishra, S., Karniadakis, G.E. (2022). Error estimates for DeepONets. \textit{Trans.\ Math.\ Appl.} 6(1):tnac001.
\bibitem[Li et al.(2020)]{li2020fourier} Li, Z., Kovachki, N., Azizzadenesheli, K., et al. (2020). Fourier neural operator for parametric PDEs. \textit{arXiv:2010.08895}.
\bibitem[Li et al.(2021)]{li2021physics} Li, Z., Zheng, H., Kovachki, N., et al. (2021). Physics-informed neural operator. \textit{arXiv:2111.03794}.
\bibitem[Li et al.(2022)]{li2022markov} Li, Z., Kovachki, N., Azizzadenesheli, K., et al. (2022). Learning dissipative dynamics in chaotic systems (Markov neural operator). \textit{NeurIPS} 35. arXiv:2106.06898.
\bibitem[Li et al.(2023)]{li2023geofno} Li, Z., Huang, D.Z., Liu, B., Anandkumar, A. (2023). Fourier neural operator with learned deformations for PDEs on general geometries. \textit{JMLR} 24(388):1--26.
\bibitem[Lippe et al.(2023)]{lippe2023pderefiner} Lippe, P., Veeling, B.S., Perdikaris, P., Turner, R.E., Brandstetter, J. (2023). PDE-Refiner: Achieving accurate long rollouts with neural PDE solvers. \textit{NeurIPS} 36.
\bibitem[Lu et al.(2021)]{lu2021deeponet} Lu, L., Jin, P., Pang, G., Zhang, Z., Karniadakis, G.E. (2021). Learning nonlinear operators via DeepONet based on the universal approximation theorem of operators. \textit{Nature Machine Intelligence} 3:218--229.
\bibitem[Lumley(2001)]{lumley2001timelapse} Lumley, D.E. (2001). Time-lapse seismic reservoir monitoring. \textit{Geophysics} 66(1):50--53.
\bibitem[Luo et al.(2018)]{luo2018seismic} Luo, X., Bhakta, T., N{\ae}vdal, G. (2018). Correlation-based adaptive localization with applications to ensemble-based 4D-seismic history matching. \textit{SPE Journal} 23(2):396--427.
\bibitem[MacBeth(2004)]{macbeth2004} MacBeth, C. (2004). A classification for the pressure-sensitivity properties of a sandstone rock frame. \textit{Geophysics} 69(2):497--510.
\bibitem[Mahalanobis(1936)]{mahalanobis1936} Mahalanobis, P.C. (1936). On the generalized distance in statistics. \textit{Proceedings of the National Institute of Sciences of India} 2(1):49--55.
\bibitem[Makhoul(1980)]{makhoul1980} Makhoul, J. (1980). A fast cosine transform in one and two dimensions. \textit{IEEE Trans.\ ASSP} 28(1):27--34.
\bibitem[Marzouk et al.(2016)]{marzouk2016transport} Marzouk, Y., Moselhy, T., Parno, M., Spantini, A. (2016). Sampling via measure transport: an introduction. In \textit{Handbook of Uncertainty Quantification}, pp.\ 1--41. Springer.
\bibitem[Mavko et al.(2009)]{mavko2009rock} Mavko, G., Mukerji, T., Dvorkin, J. (2009). \textit{The Rock Physics Handbook: Tools for Seismic Analysis of Porous Media}, 2nd ed. Cambridge Univ.\ Press.
\bibitem[McCabe et al.(2023)]{mccabe2023multiple} McCabe, M., R\'egaldo-Saint~Blancard, B., Parker, L.H., et al. (2023). Multiple physics pretraining for physical surrogate models. \textit{arXiv:2310.02994}.
\bibitem[Millevoi et al.(2025)]{millevoi2025seismic} Millevoi, C., Zoccarato, C., Ferronato, M. (2025). A deep learning-based surrogate model for seismic data assimilation in fault activation modeling. \textit{International Journal for Numerical Methods in Engineering}. doi:10.1002/nme.70040. arXiv:2409.01215.
\bibitem[Mosser et al.(2020)]{mosser2020stochastic} Mosser, L., Dubrule, O., Blunt, M.J. (2020). Stochastic seismic waveform inversion using generative adversarial networks as a geological prior. \textit{Mathematical Geosciences} 52:53--79.
\bibitem[PhysicsNeMo Contributors(2023)]{physicsnemo} PhysicsNeMo Contributors (2023). NVIDIA PhysicsNeMo: an open-source framework for physics-based deep learning in science and engineering. \url{https://github.com/NVIDIA/physicsnemo}.
\bibitem[Oliver et al.(2008)]{oliver2008inverse} Oliver, D.S., Reynolds, A.C., Liu, N. (2008). \textit{Inverse Theory for Petroleum Reservoir Characterization and History Matching}. Cambridge Univ.\ Press.
\bibitem[Pascanu et al.(2013)]{pascanu2013difficulty} Pascanu, R., Mikolov, T., Bengio, Y. (2013). On the difficulty of training RNNs. \textit{ICML}, 1310--1318.
\bibitem[Pathak et al.(2022)]{pathak2022fourcastnet} Pathak, J., Subramanian, S., Harrington, P., et al. (2022). FourCastNet: A global data-driven high-resolution weather model using adaptive Fourier neural operators. \textit{arXiv:2202.11214}.
\bibitem[Peaceman(1978)]{peaceman1978} Peaceman, D.W. (1978). Interpretation of well-block pressures in numerical reservoir simulation. \textit{SPE J.} 18(3):183--194.
\bibitem[Pfaff et al.(2021)]{pfaff2021learning} Pfaff, T., Fortunato, M., Sanchez-Gonzalez, A., Battaglia, P.W. (2021). Learning mesh-based simulation with graph networks. \textit{ICLR}.
\bibitem[Rahman et al.(2023)]{rahman2023uno} Rahman, M.A., Ross, Z.E., Azizzadenesheli, K. (2023). U-NO: U-shaped neural operators. \textit{Trans.\ Mach.\ Learn.\ Res.}
\bibitem[Raissi et al.(2019)]{raissi2019physics} Raissi, M., Perdikaris, P., Karniadakis, G.E. (2019). Physics-informed neural networks. \textit{J.\ Comput.\ Phys.} 378:686--707.
\bibitem[Ranzato et al.(2016)]{ranzato2016sequence} Ranzato, M., Chopra, S., Auli, M., Zaremba, W. (2016). Sequence level training with recurrent neural networks. \textit{ICLR}.
\bibitem[Reich and Cotter(2015)]{reich2015probabilistic} Reich, S., Cotter, C. (2015). \textit{Probabilistic Forecasting and Bayesian Data Assimilation}. Cambridge Univ.\ Press.
\bibitem[Ross et al.(2011)]{ross2011dagger} Ross, S., Gordon, G.J., Bagnell, J.A. (2011). A reduction of imitation learning and structured prediction to no-regret online learning. \textit{AISTATS}, 627--635.
\bibitem[Rozet and Louppe(2023)]{rozet2023scoreda} Rozet, F., Louppe, G. (2023). Score-based data assimilation. \textit{NeurIPS} 36.
\bibitem[Sanchez-Gonzalez et al.(2020)]{sanchez2020learning} Sanchez-Gonzalez, A., Godwin, J., Pfaff, T., Ying, R., Leskovec, J., Battaglia, P.W. (2020). Learning to simulate complex physics with graph networks. \textit{ICML}, 8459--8468.
\bibitem[Schillings and Stuart(2017)]{schillings2017convergence} Schillings, C., Stuart, A.M. (2017). Analysis of the ensemble Kalman filter for inverse problems. \textit{SIAM J.\ Numer.\ Anal.} 55(3):1264--1290.
\bibitem[Schillings and Stuart(2018)]{schillings2018convergence} Schillings, C., Stuart, A.M. (2018). Convergence analysis of ensemble Kalman inversion: the linear, noisy case. \textit{Applicable Analysis} 97(1):107--123.
\bibitem[Smith et al.(2003)]{smith2003gassmann} Smith, T.M., Sondergeld, C.H., Rai, C.S. (2003). Gassmann fluid substitutions: a tutorial. \textit{Geophysics} 68(2):430--440. doi:10.1190/1.1567211.
\bibitem[Song et al.(2021)]{song2021sde} Song, Y., Sohl-Dickstein, J., Kingma, D.P., Kumar, A., Ermon, S., Poole, B. (2021). Score-based generative modeling through stochastic differential equations. \textit{ICLR}.
\bibitem[Stachenfeld et al.(2022)]{stachenfeld2022learned} Stachenfeld, K., Fielding, D.B., Kochkov, D., et al. (2022). Learned simulators for turbulence. \textit{ICLR}.
\bibitem[Stuart(2010)]{stuart2010inverse} Stuart, A.M. (2010). Inverse problems: a Bayesian perspective. \textit{Acta Numerica} 19:451--559.
\bibitem[Tallec and Ollivier(2017)]{tallec2017unbiased} Tallec, C., Ollivier, Y. (2017). Unbiasing truncated backpropagation through time. \textit{arXiv:1705.08209}.
\bibitem[Tang et al.(2020)]{tang2020deep} Tang, M., Liu, Y., Durlofsky, L.J. (2020). A deep-learning-based surrogate model for data assimilation in dynamic subsurface flow problems. \textit{J.\ Comput.\ Phys.} 413:109456.
\bibitem[Trani et al.(2013)]{trani2013seismic} Trani, M., Arts, R., Leeuwenburgh, O. (2013). Seismic history matching of fluid fronts using the ensemble Kalman filter. \textit{SPE Journal} 18(1):159--171.
\bibitem[Tran et al.(2023)]{tran2023factorized} Tran, A., Mathews, A., Xie, L., Ong, C.S. (2023). Factorized Fourier neural operators. \textit{ICLR}.
\bibitem[Tripura and Chakraborty(2023)]{tripura2023wavelet} Tripura, T., Chakraborty, S. (2023). Wavelet neural operator for solving parametric PDEs. \textit{Comput.\ Methods Appl.\ Mech.\ Eng.} 404:115783.
\bibitem[Venkatraman et al.(2015)]{venkatraman2015improving} Venkatraman, A., Hebert, M., Bagnell, J.A. (2015). Improving multi-step prediction of learned time series models. \textit{AAAI}, 3024--3030.
\bibitem[Wang et al.(2021)]{wang2021theory} Wang, S., Wang, H., Perdikaris, P. (2021). Learning the solution operator of parametric PDEs with physics-informed DeepONets. \textit{Science Advances} 7(40):eabi8605.
\bibitem[Wang and Durlofsky(2024)]{wang2024co2} Wang, N., Durlofsky, L.J. (2024). Deep learning framework for history matching CO$_2$ storage with 4D seismic and monitoring well data. \textit{arXiv preprint} arXiv:2408.01575.
\bibitem[Wen et al.(2022)]{wen2022ufno} Wen, G., Li, Z., Azizzadenesheli, K., Anandkumar, A., Benson, S.M. (2022). U-FNO: An enhanced Fourier neural operator-based deep-learning model for multiphase flow. \textit{Adv.\ Water Resour.} 163:104180.
\bibitem[Wen et al.(2023)]{wen2023nested} Wen, G., Li, Z., Long, Q., Azizzadenesheli, K., Anandkumar, A., Benson, S.M. (2023). Real-time high-resolution CO$_2$ geological storage prediction using nested Fourier neural operators. \textit{Energy \& Environmental Science} 16(4):1732--1741.
\bibitem[Williams and Zipser(1989)]{williams1990efficient} Williams, R.J., Zipser, D. (1989). A learning algorithm for continually running fully recurrent neural networks. \textit{Neural Computation} 1(2):270--280.
\bibitem[Zhang et al.(2022)]{zhang2022fourier} Zhang, K., Zuo, Y., Zhao, H., et al. (2022). Fourier neural operator for solving subsurface oil/water two-phase flow in reservoir engineering. \textit{arXiv:2207.03123}.
\end{thebibliography}
\end{document}